\documentclass{article}

\usepackage[letterpaper,margin=1in]{geometry}
\usepackage[parfill]{parskip}
%
\usepackage[authoryear]{natbib}

\usepackage[utf8]{inputenc} 
\usepackage[T1]{fontenc}    

\usepackage[colorlinks,citecolor=blue,urlcolor=blue,linkcolor=blue,linktocpage=true]{hyperref}

\usepackage{url}            
\usepackage{booktabs}       
\usepackage{multirow}
\usepackage{amsfonts}       
\usepackage{nicefrac}       
\usepackage{microtype}      
\usepackage[parfill]{parskip}
\usepackage{amsmath,amsthm,amssymb,bbm}
\usepackage{mathtools}
\usepackage{cases}
\usepackage{comment}
\usepackage{caption}
\usepackage{subcaption}
\usepackage{algorithm,algorithmic}
\usepackage{color}
\usepackage{appendix}
\usepackage[bottom]{footmisc}

\usepackage{xspace}
\usepackage{enumitem}
\usepackage[english]{babel}
\usepackage{cleveref}
\crefformat{equation}{(#2#1#3)}
\crefrangeformat{equation}{(#3#1#4) to~(#5#2#6)}
\crefname{equation}{}{}
\Crefname{equation}{}{}

\crefname{definition}{\textbf{definition}}{definitions}
\Crefname{definition}{Definition}{Definitions}
\crefname{assumption}{\textbf{assumption}}{assumptions}
\Crefname{assumption}{Assumption}{Assumptions}
\definecolor{maroon}{RGB}{192,80,77}

\newcommand{\explain}[2]{\underset{\mathclap{\overset{\uparrow}{#2}}}{#1}}

\newcommand\independent{\protect\mathpalette{\protect\independenT}{\perp}}
\def\independenT#1#2{\mathrel{\rlap{$#1#2$}\mkern2mu{#1#2}}}

\newtheorem{theorem}{Theorem}
\newtheorem{lemma}[theorem]{Lemma}

\newtheorem{proposition}[theorem]{Proposition}

\newtheorem{corollary}[theorem]{Corollary}
\newtheorem{definition}[theorem]{Definition}

\newtheorem{remark}[theorem]{Remark}

	\AtEndDocument{\refstepcounter{theorem}\label{finalthm}}

\newcommand{\OBP}{\textsc{ObjPert}}
\newcommand{\OPS}{\textsc{OPS}}
\newcommand{\SuffP}{\textsc{SSP}}
\newcommand{\AdaSP}{\textsc{AdaSSP}}
\newcommand{\NSGD}{\textsc{NoisySGD}}

\newcommand{\AdaOPS}{\textsc{AdaOPS}}
\newcommand{\vct}{\boldsymbol }

\newcommand{\R}{\mathbb{R}}

\newcommand{\argmin}{\mathop{\mathrm{argmin}}}

\def\E{\mathbb{E}}
\def\P{\mathbb{P}}
\def\Cov{\mathrm{Cov}}
\def\Var{\mathrm{Var}}

\def\tr{\mathrm{tr}}

\def\R{\mathbb{R}}

\def\cA{\mathcal{A}}

\def\cN{\mathcal{N}}
\def\cP{\mathcal{P}}

\def\cS{\mathcal{S}}

\def\cX{\mathcal{X}}
\def\cY{\mathcal{Y}}


\title{Revisiting differentially private linear regression: optimal and adaptive prediction \& estimation in unbounded domain}	

%

\author{
	Yu-Xiang Wang\thanks{Corresponding email: \texttt{yuxiangw@cs.ucsb.edu} } \\
		University of California, Santa Barbara, CA\\
		Amazon AWS AI, Palo Alto, CA\\
	\\
}
\date{}
\begin{document}
	
	\maketitle

	\begin{abstract}
	We revisit the problem of linear regression under a differential privacy constraint. By consolidating existing pieces in the literature, we clarify the correct dependence of the feature, label and coefficient domains in the optimization error and estimation error, hence revealing the delicate price of differential privacy in statistical estimation and statistical learning. Moreover, we propose simple modifications of two existing DP algorithms: (a) posterior sampling, (b) sufficient statistics perturbation, and show that they can be \emph{upgraded} into adaptive algorithms that are able to exploit data-dependent quantities and behave nearly optimally \emph{for every instance}.  Extensive experiments are conducted on both simulated data and real data, which conclude that both \AdaOPS{} and \AdaSP{} outperform the existing techniques on nearly all 36 data sets that we test on.  
	\end{abstract}
	
	\newpage
		\tableofcontents
\newpage
	
\section{Introduction}
Linear regression is one of the oldest tools for data analysis  \citep{galton1886regression} and it remains one of the most commonly-used as of today \citep{draper2014applied}, especially in social sciences \citep{agresti1997}, econometics \citep{greene2003econometric} and medical research \citep{armitage2008statistical}. Moreover, many nonlinear models are either intrinsically linear in certain function spaces, e.g., kernels methods, dynamical systems, or can be reduced to solving a sequence of linear regressions, e.g., iterative reweighted least square for generalized Linear models, gradient boosting for additive models and so on \citep[see][for a detailed review]{friedman2001elements}.  

In order to apply linear regression to sensitive data such as those in social sciences and medical studies, it is often needed to do so such that the privacy of individuals in the data set is protected.  Differential privacy \citep{dwork2006calibrating} is a commonly-accepted criterion that provides provable protection against identification and is resilient to arbitrary auxiliary information that might be available to attackers. In this paper, we focus on linear regression with $(\epsilon,\delta)$-differentially privacy \citep{dwork2006our}.

\paragraph{Isn't it a solved problem?}
It might be a bit surprising why this is still a problem, since several general frameworks of differential privacy have been proposed that cover linear regression.  Specifically, in the agnostic setting (without a data model), linear regression is a special case of differentially private empirical risk minimization (ERM), and its theoretical properties have been quite well-understood in a sense that the minimax lower bounds are known \citep{bassily2014private}  and a number of algorithms \citep{chaudhuri2011differentially,kifer2012private} have been shown to match the lower bounds under various assumptions. In the statistical estimation setting where we assume the data is generated from a linear Gaussian model, linear regression is covered by the sufficient statistics perturbation approach for exponential family models \citep{dwork2010differential,foulds2016theory}, propose-test-release framework \citep{dwork2009differential} as well as the the subsample-and-aggregate framework \citep{smith2008efficient}, with all three approaches achieving the asymptotic efficiency in the fixed dimension ($d=O(1)$), large sample  ($n\rightarrow \infty$) regime. 

Despite these theoretical advances, very few empirical evaluations of these algorithms were conducted and we are not aware of a commonly-accepted best practice. Practitioners are often left puzzled about which algorithm to use for the specific data set they have.  
The nature of differential privacy often requires them to set parameters of the algorithm (e.g., how much noise to add) according to the diameter of the parameter domain, as well as properties of a hypothetical worst-case data set, which often leads to an inefficient use of their valuable data.


The main contribution of this paper is threefold:
\begin{enumerate}
	\item We consolidated many bits and pieces from  the literature and clarified the price of differentially privacy in statistical estimation and statistical learning.
	\item We carefully analyzed One Posterior Sample  (\OPS{}) and Sufficient Statistics Perturbation (\SuffP{}) for linear regression and proposed simple modifications of them into adaptive versions: \AdaOPS{} and \AdaSP{}. Both work near optimally for every problem instance without any hyperparameter tuning.
	\item We conducted extensive real data experiments to benchmark existing techniques and concluded that the proposed techniques give rise to the more favorable privacy-utility tradeoff relative to existing methods.
\end{enumerate}

\paragraph{Outline of this paper.}
In Section~\ref{sec:setup} we will describe the problem setup and explain differential privacy. In Section~\ref{sec:priorwork}, we will survey the literature and discuss existing algorithms. 
Then we will propose and analyze our new method \AdaSP{} and \AdaOPS{} in Section~\ref{sec:main} and conclude the paper with experiments in Section~\ref{sec:exp}.

\section{Notations and setup}\label{sec:setup}
Throughout the paper we will use $X\in\R^{n\times d}$ and $\vct y\in \R^n$ to denote the design matrix and response vector. These are collections of data points $(x_1,y_1),...,(x_n,y_n)\in  \cX\times \cY$. 
We use $\|\cdot\|$ to denote Euclidean norm for vector inputs, $\ell_2$-operator norm for matrix inputs.  In addition, for set inputs, $\|\cdot\|$ denotes the radius of the smallest Euclidean ball that contains the set.  For example, $\|\cY\|  =  \sup_{y\in \cY} |y|$  and $\|\cX\| =  \sup_{x\in\cX}\|x\|$.  Let $\Theta$ be the domain of coefficients. Our results do not require $\Theta$ to be compact but  existing approaches often depend on $\|\Theta\|$.
$\lesssim$ and $\gtrsim$ denote greater than or smaller to up to a universal multiplicative constant, which is the same as the big $O(\cdot)$  and the big $\Omega(\cdot)$. $\tilde{O}(\cdot)$ hides at most a logarithmic term. $\prec$ and $\succ$ denote the standard semidefinite ordering of positive semi-definite (psd) matrices. $\cdot\vee\cdot$ and $\cdot \wedge \cdot$ denote the bigger or smaller of the two inputs.


We now define a few data dependent quantities. We use $\lambda_{\min }(X^TX)$ (abbv. $\lambda_{\min }$) to denote the smallest  eigenvalue of $X^TX$, and to make the implicit dependence in $d$ and $n$ clear from this quantity, we define 
$\alpha := \lambda_{\min }\frac{d}{n\|\cX\|^2}.$
One can think of $\alpha$ as a normalized smallest eigenvalue of $X^TX$ such that $0 \leq \alpha\leq 1$. Also, $1/\alpha$ is closely related to the condition number of $X^TX$. 

Define the least square solution
$\theta^*  = (X^TX)^{\dagger}X^T\vct y$. It is the optimal solution to 
$
\min_{\theta} \frac{1}{2}\|\vct y-X \theta\|^2 =: F(\theta).
$
Similarly, we use $\theta^*_\lambda = (X^TX + \lambda I)^{-1}X^T\vct y$ denotes the optimal solution to the ridge regression objective $F_\lambda(\theta)= F(\theta) + \lambda \|\theta\|^2$. 

In addition, we denote the global Lipschitz constant of $F$ as $L^*: = \|\cX\|^2\|\Theta\|+\|\cX\|\|\cY\|$ and data-dependent local Lipschitz constant at $\theta^*$ as $L := \|\cX\|^2\|\theta^*\|+\|\cX\|\|\cY\|$. Note that when $\Theta  =\R^d$, $L^* = \infty$, but $L$ will remain finite for every given data set.

\paragraph{Metric of success.}
We measure the performance of an estimator $\hat{\theta}$ in two ways. 

First, we consider the optimization error $F(\hat{\theta}) -  F(\theta^*)$ in expectation or with probability $1-\varrho$. This is related to the prediction accuracy in the distribution-free statistical learning setting.

Second, we consider how well the coefficients can be estimated under the linear Gaussian model:
$$
\vct y =  X\theta_0  + \cN(0,\sigma^2 I_n)
$$
in terms of $\E[\|\hat{\theta}-\theta_0\|^2]$ or in some cases $\E[ \|\hat{\theta}-\theta_0\|^2 |  E ] $ where $E$ is a high probability event.

The optimal error in either case will depend on the specific design matrix $X$, optimal solution $\theta^*$, the data domain $\cX,\cY$, the parameter domain $\Theta$ as well as $\theta_0,\sigma^2$ in the statistical estimation setting.  

\paragraph{Differential privacy.}
We will focus on estimators that are differential private, as defined below.
\begin{definition}[Differential privacy \citep{dwork2006calibrating}]
	We say a randomized algorithm $\cA$ satisfies $(\epsilon,\delta)$-DP if for \emph{all} fixed data set $(X,\vct y)$ and data set $(X',\vct y')$  that can be constructed by adding or removing one row $(x,y)$ from $(X,\vct y)$, and for any measurable set $\cS$ over the probability of the algorithm
	$$
	\P(\cA((X,\vct y))\in \cS)  \leq e^{\epsilon} \P(\cA((X',\vct y'))\in \cS) + \delta,\;\;
	$$                                                  
\end{definition}
Parameter $\epsilon$ represents the amount of privacy loss from running the algorithm and $\delta$ denotes a small probability of failure. These are user-specified targets to achieve and the differential privacy guarantee is considered meaningful if $\epsilon \leq 1$ and $\delta \ll 1/n$  \citep[see, e.g., Section 2.3.3 of ][for a comprehensive review]{dwork2014algorithmic}. 

\paragraph{The pursuit for adaptive estimators.}
Another important design feature that we will mention repeatedly in this paper is \emph{adaptivity}. We call an estimator $\hat{\theta}$ \emph{adaptive} if it behaves optimally simultaneously  for a wide range of parameter choices. Being adaptive is of great practical relevance because we do not need to specify the class of problems or worry about whether our specification is wrong \citep[see examples of adaptive estimators in e.g.,][]{donoho1995noising,birge2001gaussian}.  \emph{Adaptivity} is particularly important for differentially private data analysis because often we need to decide the amount of noise to add by the size of the domain.  For example, an adaptive algorithm will not rely on conservative upper bounds of $\theta_0$, or a worst case $\lambda_{\min }$ (which would be $0$ on any $\cX$), and it can take advantage of favorable properties when they exist in the data set.  We want to design an estimator that does not take these parameters as inputs and behave nearly optimally for every fixed data set $X\in \cX^n,\vct y\in \cY$  under a variety of configuration of $\|\cX\|,\|\cY\|,\|\Theta\|$.

\section{A survey of prior work}\label{sec:priorwork}
In this section, we summarize existing theoretical results in linear regression with and without differential privacy constraints. We will start with lower bounds.

\subsection{Information-theoretic lower bounds}

\paragraph{Lower bounds under linear Gaussian model.}
Under the statistical assumption of linear Gaussian model $\vct y  =  X\theta_0 + \cN(0,\sigma^2)$, the minimax risk for both estimation and prediction are crisply characterized for each fixed design matrix $X$:
\begin{equation}\label{eq:prediction_lowerbound}
\inf_{\hat{\theta}}\sup_{\theta_0\in\R^d} \E[ F(\hat{\theta}) - F(\theta_0)| X] 
=  \frac{d\sigma^2}{2},
\end{equation}	
and	if we further assume that $n\geq d$ and $X^TX$ is invertible (for identifiability), then
\begin{align}\label{eq: estimation_lowerbound}
\inf_{\hat{\theta}}\sup_{\theta_0\in\R^d} \E[\|\hat{\theta}-\theta_0\|^2_2 | X]  =  \sigma^2\tr[(X^TX)^{-1}] .
\end{align}
In the above setup, $\hat{\theta}$ is any measurable function of $\hat{y}$ (note that $X$ is fixed). These are classic results that can be found in standard statistical decision theory textbooks \citep[See, e.g., ][Chapter 13]{wasserman2013all}.

Under the same assumptions, the Cramer-Rao lower bound mandates that the covariance matrix of any unbiased estimator $\hat{\theta}$ of $\theta_0$ to obey that
\begin{equation}\label{eq:cramer-rao}
\Cov(\hat{\theta})  \succ  \sigma^2 (X^TX)^{-1}.
\end{equation}
This bound applies to every problem instance separately and also implies a sharp lower bound on the prediction variance on every data point $x$. More precisely, $\Var(\hat{\theta}^Tx) \geq \sigma^2 x^T(X^TX)^{-1}x$ for any $x$. 

Minimax risk \eqref{eq:prediction_lowerbound}, \eqref{eq: estimation_lowerbound} and the Cramer-Rao lower bound \eqref{eq:cramer-rao} are simultaneously attained by $\theta^*$.


\paragraph{Statistical learning lower bounds.}
Perhaps much less well-known, linear regression is also thoroughly studied in the distribution-free statistical learning setting, where the only assumption is that the data are drawn iid from some unknown distribution $\cP$ defined on some compact domain $\cX\times \cY$. Specifically, let the risk ($\E[\text{loss}]$) be 
$$
R(\theta) = \E_{( x,y)\sim \cP}[ {\textstyle \frac{1}{2}}(x^T\theta-y)^2]  ={\textstyle \frac{1}{n}}\E_{(X,\vct y)\sim \cP^n}[  F(\theta)].
$$
\citet{shamir2015sample} showed that when $\Theta$, $\cX$ are $\cY$ are Euclidean balls,
\begin{equation}\label{eq:shamir_bound}
\begin{aligned}
&\inf_{\hat{\theta}}\sup_{\cP} \left[\E [n\cdot R(\hat{\theta})] - \inf_{\theta\in\Theta}[n\cdot R(\theta)]  \right]\\
\gtrsim& \min\{ n\|\cY\|^2, \|\Theta\|^2\|\cX\|^2 + d \|\cY\|^2, \sqrt{n}  \|\Theta\|\|\cX\|\|\cY\| \}.
\end{aligned}
\end{equation}
where $\hat{\theta}$ be any measurable function of the data set $X,\vct y$ to $\Theta$ and the expectation is taken over the data generating distribution $X,\vct y\sim \cP^n$. Note that to be compatible to other bounds that appear in this paper, we multiplied the $R(\cdot)$ by a factor of $n$. Informally, one can think of $\|\cY\|$ as $\sigma$ in \eqref{eq:prediction_lowerbound} so both terms depend on $d\sigma^2$ (or $d\|\cY\|^2$), but the dependence on $\|\Theta\|\|\cX\|$ is new for the distribution-free setting. 

\citet{koren2015fast} later showed that this lower bound is matched up to a constant by Ridge Regression with $\lambda = 1$
and both \citet{koren2015fast} and \citet{shamir2015sample} conjecture that ERM without additional regularization should attain the lower bound \eqref{eq:shamir_bound}. If the conjecture is true, then the unconstrained OLS is simultaneously optimal for all distributions supported on the smallest ball that contains all data points in $X,\vct y$ for any $\Theta$ being an $\ell_2$ ball with radius larger than $\|\theta^*\|$.


\paragraph{Lower bounds with $(\epsilon,\delta)$-privacy constraints.}
Suppose that we further require $\hat{\theta}$ to be $(\epsilon,\delta)$-differentially private, then there is an additional price to pay in terms of how accurately we can approximate the ERM solution. Specifically, the lower bounds for the \emph{empirical} excess risk for differentially private ERM problem in \citep{bassily2014private} implies that for $\delta<1/n$ and sufficiently large $n$:
\begin{itemize}
	\item[1.] There exists a triplet of $(\cX,\cY, \Theta)\subset \R^d\times \R\times \R^d$,  
	such that
	\begin{equation}\label{eq:dp_lowerbound_lipschitz}
	\begin{aligned}
	\inf_{\hat{\theta}\text{ is }(\epsilon,\delta)\text{-DP}}\sup_{X\in \cX^n,\vct y\in \cY^n}  \left[F(\hat{\theta}) - \inf_{\theta\in\Theta}F(\theta)
	\right] 
	\gtrsim \min\left\{n\|\cY\|^2, \frac{\sqrt{d} (\|\cX\|^2\|\Theta\|^2 + \|\cX\|\|\Theta\|\|\cY\|)}{\epsilon}\right\}.
	\end{aligned}
	\end{equation}
	\item[2.]  Consider the class of data set $\cS$ where all data sets $X\in \cS \subset \cX^n$ obeys that the inverse condition number $\alpha \geq \alpha^* \geq \frac{d^{1.5}(\|\cX\|\|\Theta\| + \|\cY\|)}{n\|\cX\|\|\Theta\|\epsilon}$ \footnote{This requires $\lambda_{\min } \geq \sqrt{d}L/\epsilon$ for all data sets $X$.}. 
	There exists a triplet of $(\cX,\cY, \Theta)\subset \R^d\times \R\times \R^d$ such that 
	\begin{equation}\label{eq:dp_lowerbound_strongcvx}
	\begin{aligned}
	\inf_{\hat{\theta}\text{ is }(\epsilon,\delta)\text{-DP}}\sup_{X\in \cS,\vct y\in \cY^n} \left[F(\hat{\theta}) - \inf_{\theta\in\Theta}F(\theta)
	\right] 
	\gtrsim  \min\left\{n\|\cY\|^2, \frac{d^2 (\|\cX\|\|\Theta\| + \|\cY\|)^2 }{n\alpha^* \epsilon^2}\right\}.
	\end{aligned}
	\end{equation}
\end{itemize}
These bounds are attained by a number of algorithms, which we will go over in Section~\ref{sec:exist_algs}.


Comparing to the non-private minimax rates on prediction accuracy, the bounds look different in several aspects. First, neither rate for prediction error in \eqref{eq:prediction_lowerbound} or \eqref{eq:shamir_bound} depends on whether the design matrix $X$ is well-conditioned or not, while $\alpha^*$ appears explicitly in \eqref{eq:dp_lowerbound_strongcvx}. 
Secondly, the dependence on $\|\Theta\|\|\cX\|,\|\cY\|,d,n$ are different, which makes it hard to tell whether the optimization error lower bound due to the privacy requirement is limiting.


To clarify the relationships, we plot Shamir's lower bound \eqref{eq:shamir_bound}  and the smaller of Bassily et. al.'s differential privacy lower bounds  \eqref{eq:dp_lowerbound_lipschitz} and \eqref{eq:dp_lowerbound_strongcvx} for all configurations of $d,n$ graphically in Figure~\ref{fig:price_of_DP}. We also use multiple lines to illustrate the shifts in these lower bounds when parameters such as $\epsilon$ and $\alpha^*$ changes. In all figures $\delta$ is assumed to be $o(1/n)$ and logarithmic terms are dropped.  The price of differential privacy is highlighted as a shaded area in the figures. Interestingly, in the first case when $\|\Theta\|$ is small (when $\|\cX\|\|\Theta\|\asymp \|\cY\|$), then substantial price only occurs in the non-standard region where $n<d$.  Arguably this is OK because in that regime, people should use Ridge regression or Lasso anyways rather than OLS. In the case when $\|\Theta\|$ is large (when $\|\cX\|\|\Theta\|\asymp d\|\cY\|$), the price is more substantial and it applies to all $n>d$ unless we can exploit the strong convexity in the data set. When we do, then the cost only occur for an interval in $n$ and eventually the cost of differential privacy becomes negligible relative to the minimax rate.  
To the best of our knowledge this is the first time the ``price of differential privacy'' for linear regression is discussed with clear explanation of the dependency in all parameters of the problem. 

\begin{figure}	[p]
	\centering
	\begin{subfigure}[t]{0.48\textwidth}
		\centering
		\includegraphics[width=\textwidth]{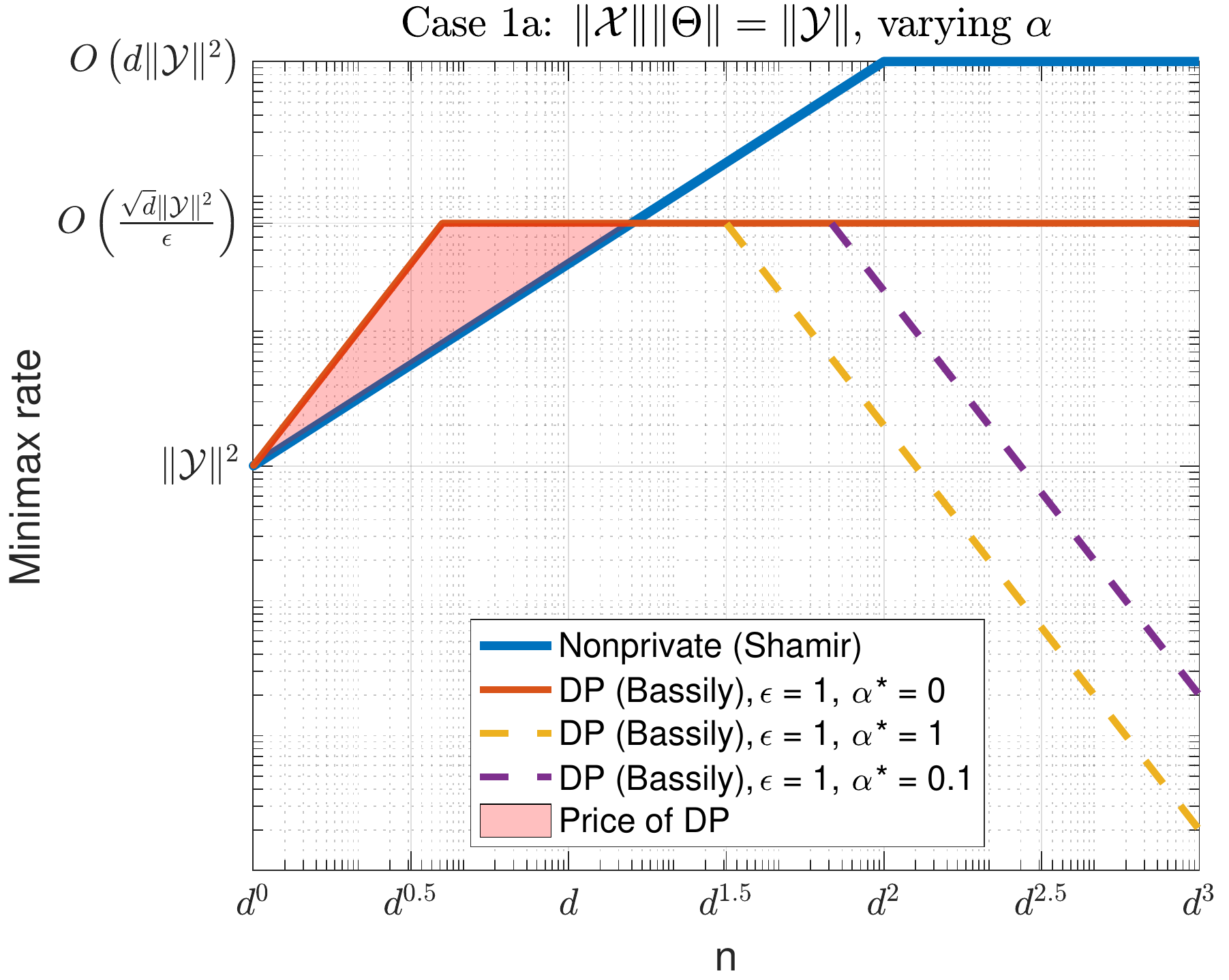}
	\end{subfigure}
	\begin{subfigure}[t]{0.48\textwidth}
		\centering
		\includegraphics[width=\textwidth]{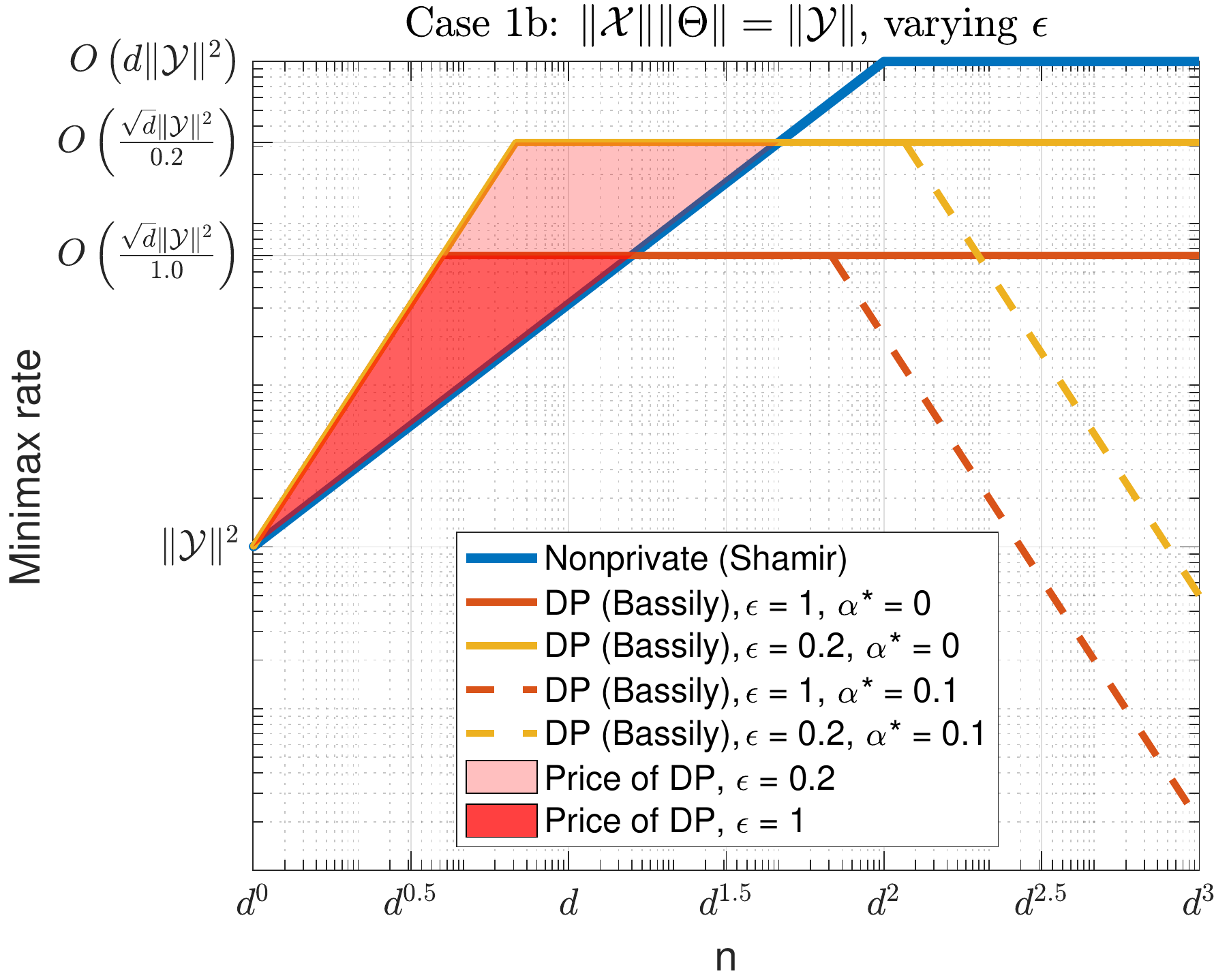}
	\end{subfigure}
	\begin{subfigure}[t]{0.48\textwidth}
		\centering
		\includegraphics[width=\textwidth]{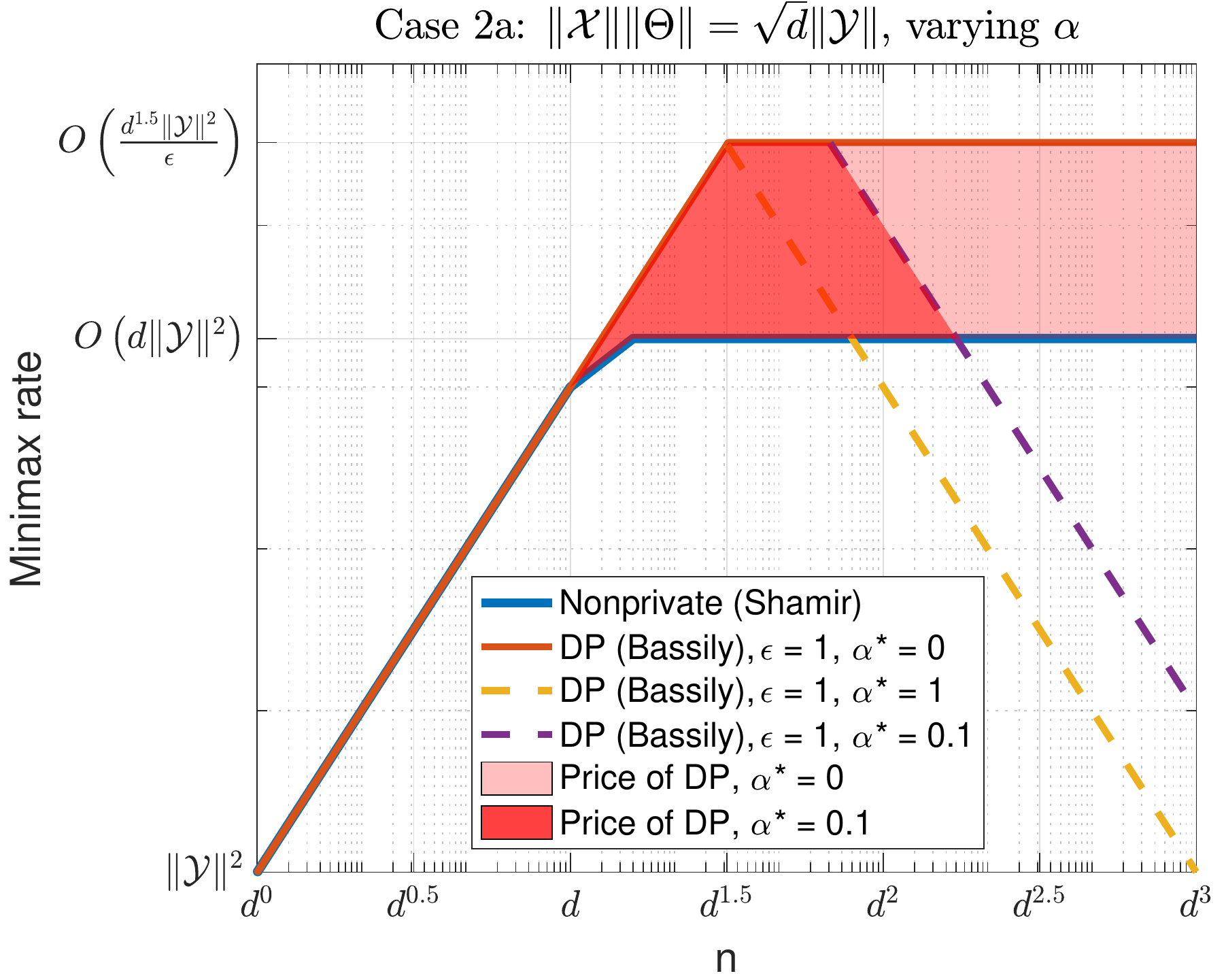}
	\end{subfigure}
	\begin{subfigure}[t]{0.48\textwidth}
		\centering
		\includegraphics[width=\textwidth]{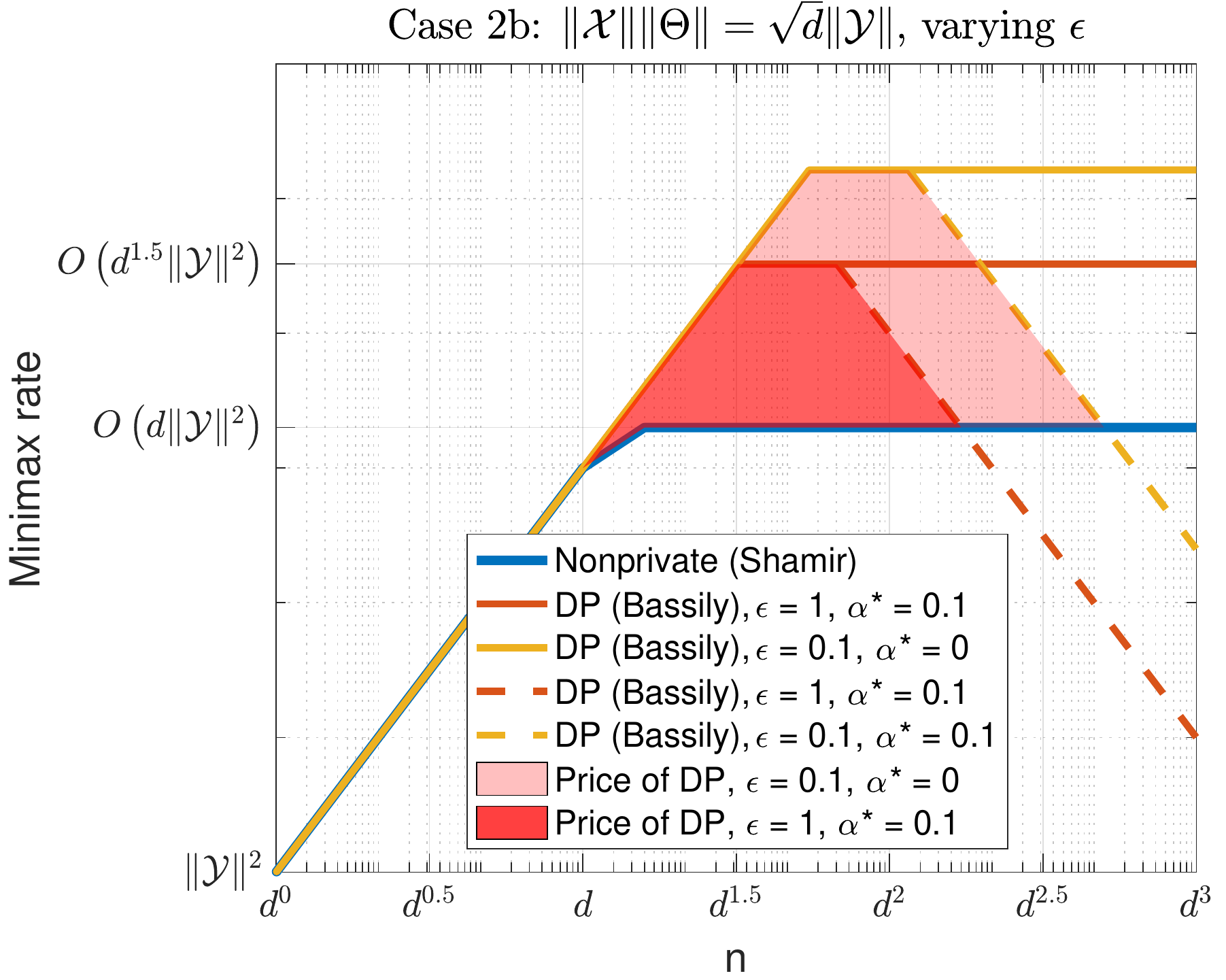}
	\end{subfigure}
	\caption{Illustration of the lower bounds for non-private and private linear regression.}\label{fig:price_of_DP}
	\bigskip
	\centering
	\includegraphics[width=0.44\textwidth]{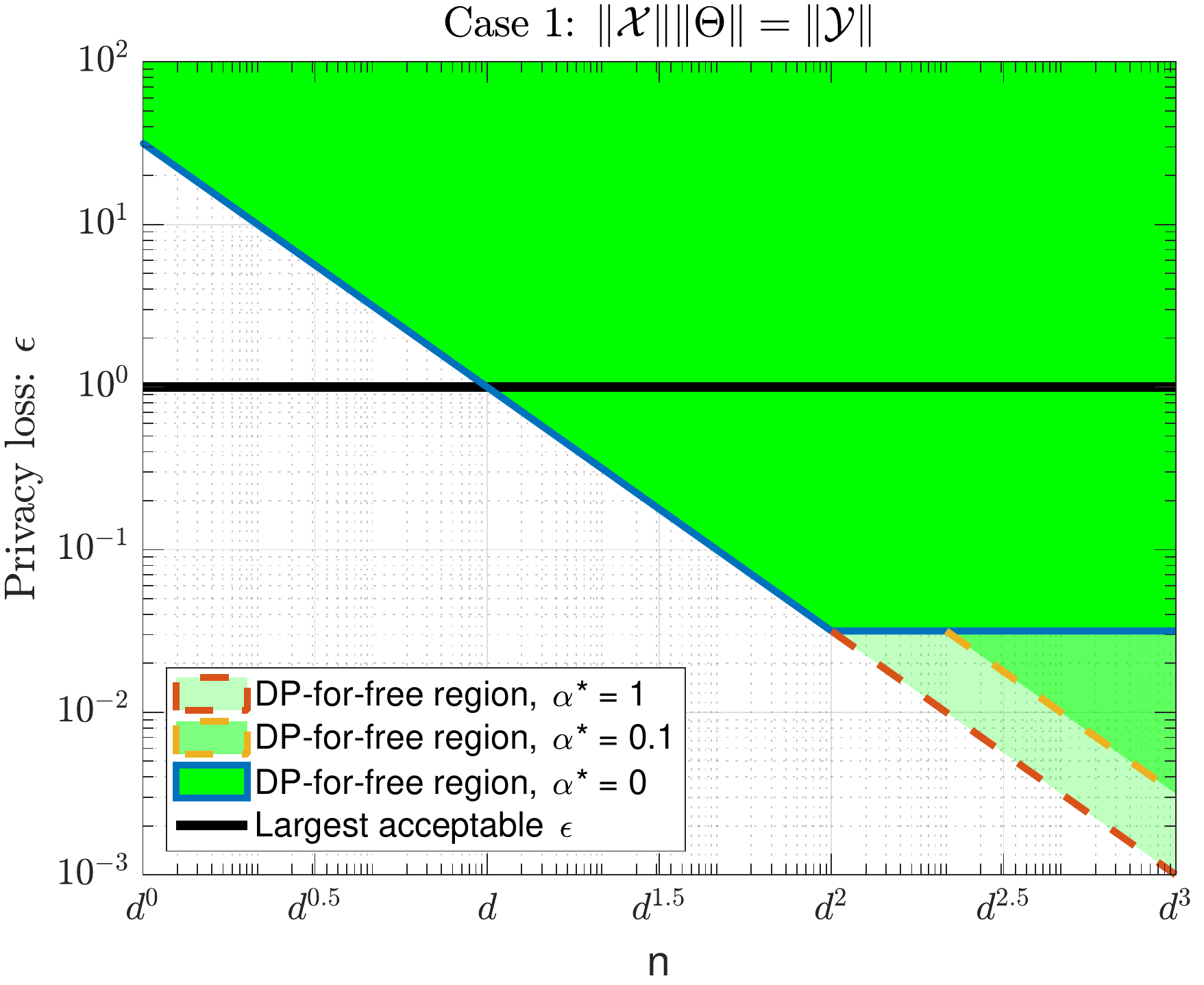}
	\includegraphics[width=0.44\textwidth]{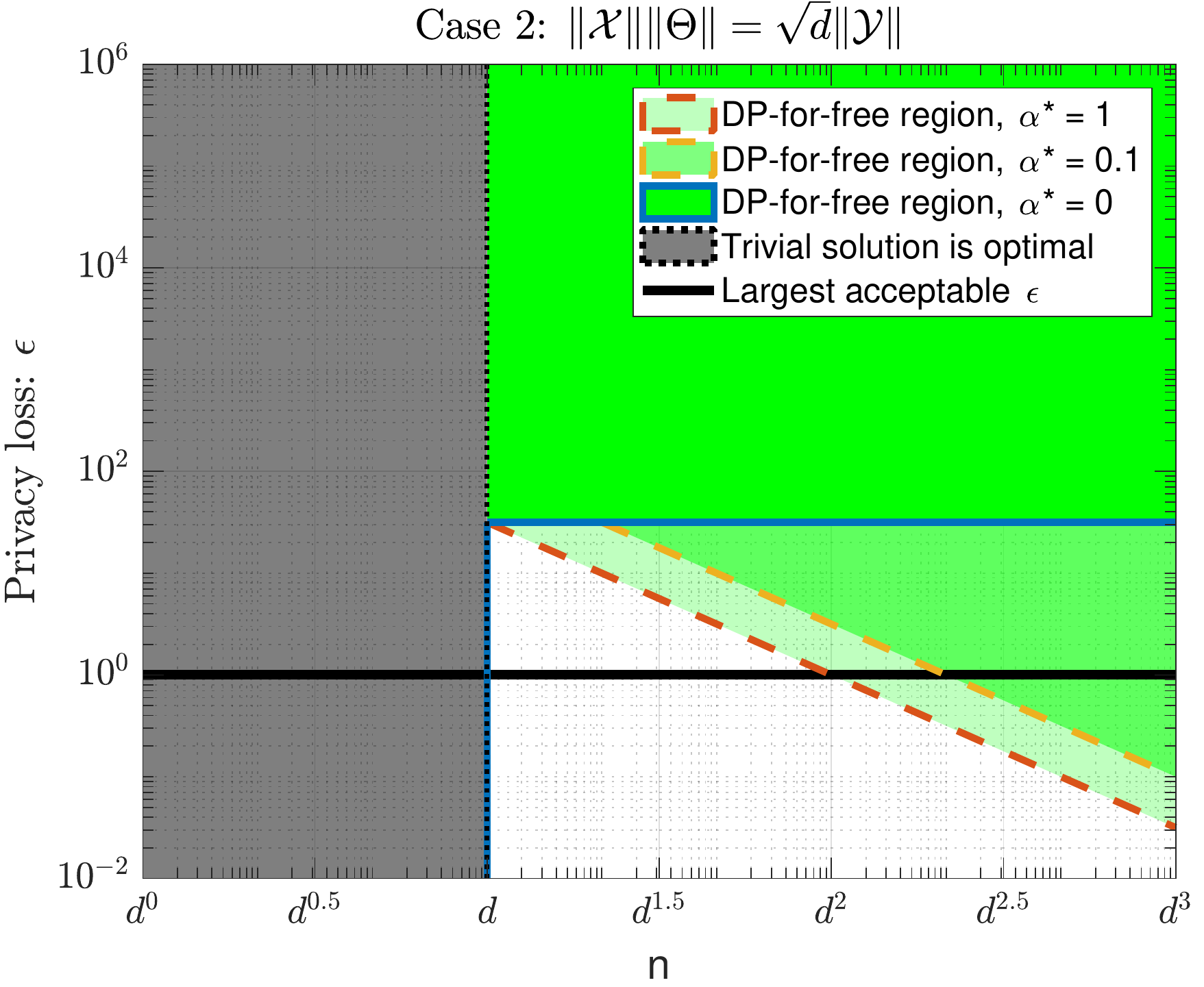}
	\caption{Illustration of the region of $\epsilon$ where DP can be obtained without losing the statistical learning minimax rate.}\label{fig:regions_PfF}
\end{figure}

The above discussion also allows us to address the following question.
\begin{center}
	When is privacy \emph{for free} in statistical learning?
\end{center}
Specifically, what is the smallest  $\epsilon$ such that an $(\epsilon,\delta)$-DP algorithm matches the minimax rate in \eqref{eq:shamir_bound}?
The answer really depends on the relative scale of  $\|\cX\|\|\Theta\|$ and $\|\cY\|$ and that of $n,d$.  When $\|\cX\|\|\Theta\|\asymp \|\cY\|$, \eqref{eq:dp_lowerbound_lipschitz} says that $(\epsilon,\delta)$-DP algorithms can achieve the nonconvex minimax rate provided that 
$\epsilon \gtrsim  \min\left\{ \frac{1}{\sqrt{d}}\vee\sqrt{\frac{d}{n}}, \sqrt{\frac{d^2}{n^{1.5}\alpha^*}} \vee \sqrt{\frac{d}{n\alpha^*}}\right\}. $
On the other hand, if  $\|\cX\|\|\Theta\|\asymp \sqrt{d}\|\cY\|$ 
\footnote{This is arguably the more relevant setting. Note that if $x\sim \cN(0,I_d)$  and $\theta$ is fixed, then $x^T\theta = O_P(d^{-1/2}\|x\|\|\theta\|)$.} 
and $n>d$, then we need 
$\epsilon \gtrsim \min\left\{\sqrt{d}\vee\frac{d^{3/2}}{n},  \frac{d}{\sqrt{n\alpha^*}}\vee \frac{d^{3/2}}{n\sqrt{\alpha^*}} \right\}.$ 

The regions are illustrated graphically in Figure~\ref{fig:regions_PfF}. 
In the first case, there is a large region upon $n\gtrsim d$, where meaningful differential privacy (with $\epsilon\leq 1$ and $\delta= o(1/n)$) can be achieved without incurring a significant toll relative to \eqref{eq:shamir_bound}. In the second case, we need at least $n\gtrsim d^2$ to achieve ``privacy-for-free'' in the most favorable case where $\alpha^*=1$. In the case when $X$ could be rank-deficient, then it is infeasible to achieve ``privacy for free'' no matter how large $n$ is. 

Based on the results in Figure~\ref{fig:price_of_DP} and ~\ref{fig:regions_PfF}, it might be tempting to conclude that one should always prefer  Case 1 over Case 2. This is unfortunately not true because the artificial restriction of the model class via a bounded $\|\Theta\|$ also weakens our non-private baseline. In other word, the best solution within a small $\Theta$ might be significantly worse than the best solution in $\R^d$.

In practice, it is hard to  find a $\Theta$ with a small radius that fits all purposes\footnote{If $\|\Theta\| \gg \|\theta^*\|$ then the constraint becomes limiting. If $\|\theta^*\| \ll \|\Theta\|$ instead, then calibrating the noise according to $\|\Theta\|$ will inject more noise than necessary.} and it is unreasonable to assume $\alpha^*>0$. This motivates us to go beyond the worst-case and come up with \emph{adaptive} algorithms that work without knowing $\|\theta^*\|$ and $\alpha$ while achieving the minimax rate for the class with $\|\Theta\| = \|\theta^*\|$ and $\alpha^*=\alpha$ (in hindsight).

\subsection{Existing algorithms and our contribution }\label{sec:exist_algs}
We now survey the following list of five popular algorithms in differentially private learning and highlight the novelty in our proposals \footnote{While we try to be as comprehensive as possible,  the literature has grown massively and the choice of this list is limited by our knowledge and opinions.}.
\begin{enumerate}
	\item Sufficient statistics perturbation (\SuffP{})   \citep{vu2009differential,foulds2016theory}:  Release $X^TX$ and $X\vct y$ differential privately and then output $\hat{\theta} = (\widehat{X^TX})^{-1}\widehat{X\vct y}$.
	\item Objective perturbation (\OBP{}) \citep{kifer2012private}:  $\hat{\theta} =  \argmin  F(\theta)  +  0.5\lambda \|\theta\|^2  +  Z^T\theta$ with an appropriate $\lambda$ and $Z$ is an appropriately chosen iid Gaussian random vector.
	\item Subsample and Aggregate (Sub-Agg) \citep{smith2008efficient,dwork2010differential}:  Subsample many times, apply debiased MLE to each subset and then randomize the way we aggregate the results.
	\item Posterior sampling (\OPS{}) \citep{mir13,dimitrakakis2014robust,wang2015privacy,minami2016differential}:   Output $\hat{\theta} \sim P(\theta) \propto e^{  -  \gamma (F(\theta) + 0.5\lambda \|\theta\|^2)}$ with parameters $ \gamma,\lambda$.
	\item \NSGD{} \citep{bassily2014private}:  Run SGD for a fixed number of iterations with additional Gaussian noise added to the stochastic gradient evaluated on one randomly-chosen data point.
\end{enumerate}
We omit detailed operational aspects of these algorithms and focus our discussion on their theoretical guarantees. Interested readers are encouraged to check out each paper separately.
These algorithms are proven under different scalings and assumptions. To ensure fair comparison, we make sure that all results are converted to our setting under a subset of the following assumptions.

\begin{table}[t]
	\bigskip
	\bigskip
	\caption{Summary of optimization error bounds. This table compares the (expected or high probability ) additive suboptimality of different differentially private linear regression procedures relative to the (non-private) empirical risk minimizer $\theta^*$. In particular, the results for NoisySGD holds in expectation and everything else with probability $1-\varrho$ (hiding at most a logarithmic factor in $\sqrt{1/\varrho}$). Constant factors are dropped for readability. 
	}\label{tab:comparison}
	\centering
	\resizebox{\textwidth}{!}{
		\begin{tabular}{cccp{5cm}}
			\midrule
			&  $F(\hat{\theta}) - F(\theta^*)$  & Assumptions  & Remarks\smallskip\\ 
			\midrule
			\multirow{2}{*}{NoisySGD} & $\frac{\sqrt{d\log(\frac{n}{\delta})}\|\cX\|^2\|\Theta\|^2}{\epsilon}$    & A.1, A.2  & Theorem 2.4 (Part 1) of \citep{bassily2014private}.\smallskip\\
			&$\frac{d^2 \log(\frac{n}{\delta})\|\Theta\|^2}{\alpha^* n\epsilon^2}$    &  A.1, A.2, A.3 & Theorem 2.4 (Part 2) of \citep{bassily2014private} \smallskip\\
			\midrule
			\multirow{2}{*}{\OBP{}}  & $\frac{\sqrt{d\log(\frac{1}{\delta})}\|\cX\|^2\|\Theta\|\|\theta^*\|}{\epsilon}$ & A.1, A.2 &  Theorem 4 (Part 2) of \citep{kifer2012private}. \smallskip\\
			& $\frac{d^2 \log(\frac{1}{\delta})\|\Theta\|^2}{\alpha^* n\epsilon^2}$ & A.1, A.2, A.3 &  Theorem 5 \& Appendix E.2 of \citep{kifer2012private}. \\
			\midrule
			\OPS{} &  $\frac{d  \|\cX\|^2\|\Theta\|^2}{\epsilon}$ & A.1, A.2&  Results for $\epsilon$-DP \citep{wang2015privacy}\smallskip\\
			\midrule
			\SuffP{} & $\frac{d^2\log(\frac{1}{\delta})\|\cX\|^2\|\theta^*\|^2}{\alpha n\epsilon^2}$	&   A.1 & Adaptive to $\|\theta^*\|,X,\alpha$, but requires  $n =\Omega(\frac{d^{1.5}\log(4/\delta)}{\alpha \epsilon})$ \footnotemark.\smallskip\\
			\midrule
			\AdaOPS{} \& \AdaSP{}&  $\frac{\sqrt{d\log(\frac{1}{\delta})}\|\cX\|^2\|\theta^*\|^2}{\epsilon}\wedge \frac{d^2 \log(\frac{1}{\delta})\|\theta^*\|^2}{\alpha n\epsilon^2}$ & A.1 & Adaptive in $\|\theta^*\|,X,\alpha$.\smallskip\\
			\midrule
		\end{tabular}
	}
	
\end{table}
\begin{table}[t]
	\caption[Summary of estimation errors under the linear Gaussian model]{Summary or estimation error bounds under the linear Gaussian model. On the second column we compare the approximation of MLE $\theta^*$ in mean square error up to a universal constant.
		On the third column, we compare the relative efficiency. The relative efficiency bounds are simplified with the assumption of $\alpha = \Omega(1)$, which implies that $\tr[(X^TX)^{-1}] =  O(d^2n^{-1}\|\cX\|^{-2})$ and $\tr[(X^TX)^{-2}] =  O(dn^{-1}\|\cX\|^{-2} \tr[(X^TX)^{-1}])$. $\tilde{O}(\cdot)$ hides
		$\mathrm{polylog}(1/\delta)$ terms. }\label{tab:compare-efficiency}
	\resizebox{\textwidth}{!}{
		\begin{tabular}{cccp{3.5cm}}
			\midrule
			&Approxi. MLE: $\E\|\hat{\theta}-\theta^*\|^2$& Rel. efficiency: $\frac{\E \|\hat{\theta}-\theta_0\|^2}{\E\|\theta^* - \theta_0\|^2}$ &Remarks \smallskip\\
			\midrule
			Sub-Agg &  $ O\left(\frac{\text{poly}(d,\|\Theta\|,\|\cX\|,\alpha^{-1})}{\epsilon^{6/5}n^{6/5}}\right)$   &  $ 1 + \tilde{O}(\frac{\text{poly}(d,\|\Theta\|,\|\cX\|)}{n^{1/5}\epsilon^{6/5}})$ 	&$\epsilon$-DP, suboptimal in $n$, possibly also in $d$\citep{dwork2010differential}.	\smallskip\\
			\OPS{} & $O(\frac{\|\cX\|^2\|\Theta\|^2}{\epsilon})\tr[(X^TX)^{-1}]$ & $\tilde{O}(\frac{\|\cX\|^2\|\Theta\|^2}{\epsilon \sigma^2})$ & $\epsilon$-DP, adaptive in $X$, but not asymptotically efficient \citep{wang2015privacy}. \smallskip\\
			\SuffP{} &
			$O\left(  \frac{\log(\frac{1}{\delta}) \|\cX\|^4\|\theta^*\|^2}{\epsilon^2} \tr[(X^TX)^{-2}]  \right)$
			&  
			$1+ \tilde{O}(\frac{d\|\cX\|^2\|\theta_0\|^2}{n\epsilon^2\sigma^2}  +  \frac{d^3}{n^2\epsilon^2})$
			& Adaptive in $\|\theta^*\|,X$, no explicit dependence on $\alpha$, but requires large $n$.	 \citep[Theorem 5.1]{sheffet2017differentially} \smallskip\\
			\AdaOPS{} \& \AdaSP{}& $O\left(\frac{d\log(\frac{1}{\delta}) \|\cX\|^2\|\theta^*\|^2 }{\alpha n \epsilon^2}\tr[(X^TX)^{-1}] \right)$& 
			$1+\tilde{O}(\frac{d\|\cX\|^2\|\theta_0\|^2}{n\epsilon^2\sigma^2}+\frac{d^3}{n^2\epsilon^2})$ & Adaptive  in $\|\theta^*\|,X,\alpha$.\\
			\midrule
			
		\end{tabular}
	}
\end{table}

\begin{enumerate}
	\item[A.1] $\|\cX\|$ is bounded, $\|\cY\|$ is bounded. 
	\item[A.2] $\|\Theta\|$ is bounded.
	\item[A.3] All possible data sets $X$ obey that the smallest eigenvalue $\lambda_{\min }(X^TX)$ is greater than $\frac{n\|\cX\|^2}{d}\alpha^*$. 
\end{enumerate}
Note that A.3 is a restriction on the domain of the data set, rather than the domain of individual data points in the data set of size $n$. While it is a little unconventional, it is valid to define differential privacy within such a restricted space of data sets. It is the same assumption that we needed to assume for the lower bound in \eqref{eq:dp_lowerbound_strongcvx} to be meaningful.  As in \citet{koren2015fast}, we simplify the expressions of the bound by assuming $\|\cY\|\leq \|\cX\|\|\Theta\|$, and in addition, we assume that $\|\cY\| \lesssim \|\cX\|\|\theta^*\|$.

Table~\ref{tab:comparison} summarizes the upper bounds of optimization error the aforementioned algorithms in comparison to our two proposals: \AdaOPS{} and \AdaSP{}.  
Comparing the rates to the lower bounds in the previous section, it is clear that NoisySGD, \OBP{} both achieve the minimax rate in optimization error 
but their hyperparameter choice depends on the unknown $\|\Theta\|$ and $\alpha^*$. \SuffP{} is adaptive to $\alpha$ and $\|\theta^*\|$ but has a completely different type of issue --- it can fail arbitrarily badly for regime covered under \eqref{eq:dp_lowerbound_lipschitz}, and even for well-conditioned problems, its theoretical guarantees only kick in as $n$ gets very large. Our proposed algorithms \AdaOPS{} and \AdaSP{} are able to simultaneously switch between the two regimes and get the best of both worlds.

Table~\ref{tab:compare-efficiency} summarizes the upper bounds for estimation. The second row compares the approximation of $\theta^*$ in MSE and the third column summarizes the statistical efficiency of the DP estimators relative to the MLE: $\theta^*$ under the linear Gaussian model.
All algorithms except \OPS{} are asymptotically efficient. For the interest of $(\epsilon,\delta)$-DP, \SuffP{} has the fastest convergence rate and does not explicitly depend on the smallest eigenvalue, but again it behaves differently when $n$ is small, while \AdaOPS{} and  \AdaSP{} work optimally (up to a constant) for all $n$. 

\subsection{Other related work}
The problem of adaptive estimation is closely related to model selection \citep[see, e.g.,][]{birge2001gaussian}  and an approach using Bayesian Information Criteria was carefully studied  in the differential private setting for the problem of $\ell_1$ constrained ridge regression by \citet{lei2016differentially}.  Their focus is different to ours in that they care about inferring the correct model, while we take the distribution-free view.
Linear regression is also studied in many more specialized setups, e.g., high dimensional linear regression \citep{kifer2012private,talwar2014private,talwar2015nearly}, statistical inference \citep{sheffet2017differentially} and so on. For the interest of this paper, we focus on the standard regime of linear regression where $d<n$ and do not use sparsity or $\ell_1$ constraint set to achieve the $\log(d)$ dependence. That said, we acknowledge that \citet{sheffet2017differentially} analyzed \SuffP{} under the linear Gaussian model (the third row in Table~\ref{tab:compare-efficiency}and their techniques of adaptively adding regularization have inspired \AdaSP{}.

\section{Main results: adaptive private linear regression}	 \label{sec:main}	

In this section, we present and analyze \AdaOPS{} and \AdaSP{} that achieve the aforementioned adaptive rate. The pseudo-code of these two algorithms are given in Algorithm~\ref{alg:self-tuning-AdaOPS} and Algorithm~\ref{alg:adaSuffP}.

The idea of both algorithms is to release key data-dependent quantities differentially privately and then use a high probability confidence interval of these quantities to calibrate the noise to privacy budget as well as to choose the ridge regression's hyperparameter $\lambda$ for achieving the smallest prediction error.  Specifically, \AdaOPS{} requires us to release both the smallest eigenvalue $\lambda_{\min }$ of $X^TX$ and the local Lipschitz constant $L := \|\cX\|(\|\cX\|\|\theta^*_\lambda\| + \|\cY\|)$, while \AdaSP{} only needs the smallest eigenvalue $\lambda_{\min }$.


\begin{algorithm}[t]                   
	\caption{\AdaOPS{}: One-Posterior Sample estimator with adaptive regularization}          
	\label{alg:self-tuning-AdaOPS}                           
	\begin{algorithmic}                    
		\INPUT{ Data $X$, $\vct y$. Privacy budget: $\epsilon$, $\delta$, Bounds: $\|\cX\|, \|\cY\|$. 
		}
		\STATE{1. Calculate the minimum eigenvalue $\lambda_{\text{min}}(X^TX)$.}
		\STATE{2. Sample $Z\sim \cN(0,1)$ and privately release \\
			$\tilde{\lambda}_{\text{min}} =  \max\left\{\lambda_{\text{min}} + \frac{\sqrt{\log(6/\delta)}}{\epsilon/4}Z  - \frac{\log(6/\delta)}{\epsilon/4}, 0\right\}.$}
		\STATE{ 3. Set $\bar{\epsilon}$ as the positive solution of the quadratic equation
			$$
			\bar{\epsilon}^2/ (2\log(6/\delta))+ \bar{\epsilon}  -  \epsilon/4 = 0.
			$$}
		\STATE{4. Set $\varrho=0.05$,
			$
			C_1 = \big(d/2 + \sqrt{d\log(1/\varrho)} + \log(1/\varrho)\big)\log(6/\delta)/\bar{\epsilon}^2,
			$
			$
			C_2 =  \log(6/\delta)/(\epsilon/4),
			$
			$
			t_{\min } = \max\{\frac{\|\cX\|^2(1+\log(6/\delta))}{2\epsilon}-\tilde{\lambda}_{\min },0\} 
			$
			and solve 
			\begin{equation}\label{eq:adachoice_of_lambda}
			\lambda =  \argmin_{t \geq t_{\min }}\frac{\|\cX\|^4C_1[1+\|\cX\|^2/( t + \tilde{\lambda}_{\min })]^{2 C_2}}{ t + \tilde{\lambda}_{\min }  } + t.
			\end{equation}
			which has a unique solution.
		}
		\STATE{5. Calculate
			$
			\hat{\theta}  = (X^TX + \lambda I)^{-1}X^T\vct y.
			$}
		\STATE{6. Sample $Z\sim \cN(0,1)$ and privately release} \\
		$\Delta = \log(\|\cY\|+\|\cX\|\|\hat{\theta} \|) + \frac{\log(1+\|\cX\|^2/(\lambda+\tilde{\lambda}_{\min }))}{\epsilon/ (4\sqrt{\log(6/\delta)})} Z +  \frac{\log(1+\|\cX\|^2/(\lambda+\tilde{\lambda}_{\min }))}{\epsilon/(4\log(6/\delta))}$. Set $\tilde{L}  :=  \|\cX\|e^{\Delta}$.
		\STATE{7.  Calibrate noise by choosing  
			$\tilde{\epsilon}$ as the positive solution of the quadratic equation
			\begin{equation}\label{eq:adachoice_of_eps}
			\frac{\tilde{\epsilon}^2}{2} \left[\frac{1}{\log(6/\delta)}\frac{1+\log(6/\delta)}{\log(6/\delta)}\right] + \tilde{\epsilon}  - \epsilon/2  =  0. 
			\end{equation}
			and then set
			$\gamma = \frac{(\tilde{\lambda}_{\min } +\lambda)\tilde{\epsilon}^2}{\log(6/\delta) \tilde{L}^2}.$ }			
		\OUTPUT{ $\tilde{\theta}\sim p(\theta|X,\vct y) \propto e^{-\frac{\gamma}{2} \left(\|\vct y - X\theta\|^2 + \lambda\|\theta\|^2\right)}.$}
	\end{algorithmic}
\end{algorithm}

\begin{algorithm}[t]                        
	\caption{\AdaSP{}: Sufficient statistics perturbation with adaptive damping}
	\label{alg:adaSuffP}   
	\begin{algorithmic}                    
		\INPUT{ Data $X$, $\vct y$. Privacy budget: $\epsilon$, $\delta$, Bounds: $\|\cX\|, \|\cY\|$. 
		}
		\STATE{1. Calculate the minimum eigenvalue $\lambda_{\text{min}}(X^TX)$.}
		\STATE{2.} Privately release $\tilde{\lambda}_{\text{min}} =  \max\left\{\lambda_{\text{min}} + \frac{\sqrt{\log(6/\delta)}}{\epsilon/3}\|\cX\|^2 Z  - \frac{\log(6/\delta)}{\epsilon/3}\|\cX\|^2, 0\right\}$, where $Z\sim \cN(0,1)$.
		\STATE{3. Set $\lambda = \max\{0, \frac{\sqrt{d \log(6/\delta)\log(2d^2/\rho) }\|\cX\|^2}{\epsilon/3}   - \tilde{\lambda}_{\min }\}$}
		\STATE{4. Privately release $\widehat{X^TX} = X^TX  + \frac{\sqrt{\log(6/\delta)}\|\cX\|^2}{\epsilon/3}Z$ for $Z\in\R^{d\times d}$  is a symmetric matrix and every element from the upper triangular matrix is sampled from $\cN(0,1)$. }
		\STATE{5. Privately release $\widehat{X\vct y} = X\vct y  + \frac{\sqrt{\log(6/\delta)}\|\cX\|\|\cY\|}{\epsilon/3}Z$ for $Z\sim \cN(0, I_d)$. }
		\OUTPUT{ $\tilde{\theta}=   (\widehat{X^TX}+ \lambda I)^{-1} \widehat{X\vct y}$}
	\end{algorithmic}
\end{algorithm}

In both \AdaSP{} and \AdaOPS{}, we choose $\lambda$ by minimizing an upper bound of
$
F(\tilde{\theta})   -  F(\theta^*) 
$
in the form of ``variance'' and ``bias''
$$
\tilde{O}(\frac{d\|\cX\|^4\|\theta^*\|^2}{\lambda + \lambda_{\min }})  + \lambda \|\theta^*\|^2.
$$
Note that while $\|\theta^*\|^2$ cannot be privately released in general due to unbounded sensitivity, it appears in both terms and do not enter the decision process of finding the optimal $\lambda$ that minimizes the bound. This convenient feature follows from our assumption that $\|\cY\|\lesssim \|\cX\|\|\theta^*\|$. Dealing with the general case involving an arbitrary $\|\cY\|$ is an intriguing open problem.

A tricky situation for \AdaOPS{} is that the choice of $\gamma$ depends on $\lambda$ through $\tilde{L}$, which is the local Lipschitz constant at the ridge regression solution $\theta^*_\lambda$. But the choice of $\lambda$ also depends on $\gamma$ since the ``variance'' term above is inversely proportional to $\gamma$.  Our solution is to express $\tilde{L}$ (hence $\gamma$) as a function of $\lambda$ and solve the nonlinear univariate optimization problem \eqref{eq:adachoice_of_lambda}.

We are now ready to state the main results. 
\begin{theorem}\label{thm:adaops}
	
	Algorithm~\ref{alg:self-tuning-AdaOPS} outputs $\tilde{\theta}$ which obeys that
	\begin{enumerate}
		\item[(i)] It satisfies $(\epsilon,\delta)$-DP.
		\item[(ii)] 	Assume $\|\cY\|\lesssim \|\cX\|\|\theta^*\|$. With probability $1-\varrho$, 
		$$F(\tilde{\theta})  - F(\theta^*)  \leq 
		O\left(\frac{\sqrt{d+\log(\frac{1}{\varrho})}\|\cX\|^2\|\theta^*\|^2}{\epsilon/\sqrt{\log(\frac{1}{\delta})}}\wedge \frac{d[d+\log(\frac{1}{\varrho})] \|\theta^*\|^2}{\alpha n\epsilon^2 / \log(\frac{1}{\delta})}\right).
		$$
		\item[(iii)] Assume that $\mathbf{y}|X$ obeys a linear Gaussian model and $X$ is full-rank. Then there is an event $E$ satisfying $\P(E)\geq 1-\delta/3$ and $E\independent \mathbf{y} | X$, such that 
		\begin{align*}
			\E[\tilde{\theta} | X,E]  = \theta_0 &&\text{ and }&&\Cov[\tilde{\theta} |X,E]  \prec  \left(1 + O\left(\frac{ \tilde{C} d \log(6/\delta)}{\sigma^2 \alpha n \epsilon^2}\right) \right)\sigma^2(X^TX)^{-1}
		\end{align*}
		where constant 
		$$\tilde{C} :=\|\cY\|^2+\|\cX\|^2(\|\theta_0\|^2
		+\sigma^2\tr[(X^TX)^{-1}]).$$
	\end{enumerate}
\end{theorem}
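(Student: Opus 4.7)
The plan is to prove (i) by treating \AdaOPS{} as the composition of three adaptive Gaussian-mechanism releases---steps 2, 6, and the OPS sample in step 8---each allocated roughly $(\epsilon/4,\delta/3)$, $(\epsilon/4,\delta/3)$, and $(\epsilon/2,\delta/3)$ of the budget. Step 2 releases $\tilde\lambda_{\min}$ via Gaussian noise with sensitivity at most $\|\cX\|^2$ by Weyl's inequality (absorbed in the rescaling), and the deterministic negative shift ensures $\tilde\lambda_{\min}\leq\lambda_{\min}$ outside a $\delta/3$-event. For step 6, a perturbation argument shows that a neighbor's ridge minimizer satisfies $\|\hat\theta'-\hat\theta\|\lesssim \|\cX\|(\|\cY\|+\|\cX\|\|\hat\theta\|)/(\lambda+\tilde\lambda_{\min})$ on the step-2 event (using the strong convexity $\lambda+\lambda_{\min}\geq\lambda+\tilde\lambda_{\min}$), so $|\log(\|\cY\|+\|\cX\|\|\hat\theta\|)-\log(\|\cY\|+\|\cX\|\|\hat\theta'\|)|\leq \log(1+\|\cX\|^2/(\lambda+\tilde\lambda_{\min}))$ is the sensitivity fed to the Gaussian mechanism. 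Step 8 is OPS sampling from the strongly-convex, locally-Lipschitz log-density $\gamma(F+\lambda\|\theta\|^2/2)$, and the privacy analysis of \citet{wang2015privacy,minami2016differential} with Lipschitz $\tilde L$ and strong convexity $\tilde\lambda_{\min}+\lambda$ requires exactly $\gamma\tilde L^2/(\tilde\lambda_{\min}+\lambda)\lesssim \tilde\epsilon^2/\log(1/\delta)$, which is the step-7 calibration; the quadratic \eqref{eq:adachoice_of_eps} then translates this into an $(\epsilon/2,\delta/3)$-DP budget. Basic composition closes part (i).

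For part (ii), condition on the high-probability events $\tilde\lambda_{\min}\leq\lambda_{\min}$ and $\tilde L\lesssim \|\cX\|^2\|\theta^*\|$ (using $\|\cY\|\lesssim \|\cX\|\|\theta^*\|$), and decompose $F(\tilde\theta)-F(\theta^*)=[F(\tilde\theta)-F(\theta^*_\lambda)]+[F(\theta^*_\lambda)-F(\theta^*)]$. The bias term is bounded by $\lambda\|\theta^*\|^2/2$ via ridge optimality, while $\tilde\theta-\theta^*_\lambda\sim\cN(0,\gamma^{-1}(X^TX+\lambda I)^{-1})$ makes the variance term $\tfrac12(\tilde\theta-\theta^*_\lambda)^T X^TX(\tilde\theta-\theta^*_\lambda)$ a Gaussian quadratic form with mean $\tr[X^TX(X^TX+\lambda I)^{-1}]/(2\gamma)\leq d/(2\gamma)$ that concentrates to $(d+\log(1/\varrho))/\gamma$ w.h.p.\ by Laurent--Massart. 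Substituting $\gamma$ and $\tilde L$ yields
\[
F(\tilde\theta)-F(\theta^*)\lesssim \frac{(d+\log(1/\varrho))\log(1/\delta)\|\cX\|^4\|\theta^*\|^2}{(\tilde\lambda_{\min}+\lambda)\epsilon^2}+\lambda\|\theta^*\|^2,
\]
which (up to the $\|\theta^*\|^2$ factor that drops out of the argmin) is precisely what step 4 minimizes. Evaluating at its optimizer gives $\lambda^*\asymp\|\cX\|^2\sqrt{d\log(1/\delta)}/\epsilon$ in the low-eigenvalue regime (Lipschitz rate) and $\lambda^*=0$ in the well-conditioned regime $\tilde\lambda_{\min}\gtrsim \alpha n\|\cX\|^2/d$ (strong-convex rate), and the minimum of the two is the claim.

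For part (iii), let $E$ be the event that the step-2 Gaussian noise satisfies its shift condition so $\tilde\lambda_{\min}\leq\lambda_{\min}$; then $\P(E)\geq 1-\delta/3$, $E\independent\mathbf{y}|X$, and $\lambda$ is a function of $\tilde\lambda_{\min}$ alone, hence $\lambda\independent\mathbf{y}|X,E$. Writing
\[
\tilde\theta=(X^TX+\lambda I)^{-1}X^T\mathbf{y}+\gamma^{-1/2}(X^TX+\lambda I)^{-1/2}\xi,\quad \xi\sim\cN(0,I_d)\independent\mathbf{y},
\]
the conditional mean reduces to $(X^TX+\lambda I)^{-1}X^TX\theta_0=\theta_0$ in the regime where step 4 returns $\lambda=0$---precisely the regime in which the stated covariance overhead is informative---and the law of total variance gives $\Cov[\tilde\theta|X,E]=\sigma^2(X^TX+\lambda I)^{-1}X^TX(X^TX+\lambda I)^{-1}+\E[\gamma^{-1}|X,E](X^TX+\lambda I)^{-1}$. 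The first summand is $\preceq \sigma^2(X^TX)^{-1}$, and using $\E[\|\hat\theta\|^2|X,E]=\|\theta_0\|^2+\sigma^2\tr[(X^TX)^{-1}]$ to control $\E[\tilde L^2|X,E]\lesssim \|\cX\|^2\tilde C$, substituting $\tilde\lambda_{\min}\gtrsim \alpha n\|\cX\|^2/d$ yields the claimed overhead $O(\tilde Cd\log(6/\delta)/(\sigma^2\alpha n\epsilon^2))\sigma^2(X^TX)^{-1}$.

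The main obstacle I anticipate is the step-6 sensitivity analysis: the Lipschitz bound on the ridge objective is defined via $\hat\theta$ itself, so the sensitivity of the log release must simultaneously account for the perturbation of $\hat\theta$ under neighbor data sets and for the fact that the strong-convexity parameter governing that perturbation is itself private via $\tilde\lambda_{\min}$. The step-2 high-probability event must be carefully absorbed into the overall $\delta$-budget and then percolates into the OPS calibration of step 8. A secondary subtlety for part (iii) is verifying that the regime where step 4 returns $\lambda=0$ coincides with the regime where the covariance overhead is meaningful, so that the mean statement is not vacuous.
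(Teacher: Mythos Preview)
Your overall architecture matches the paper's: Gaussian release of $\lambda_{\min}$, then of $\log(\|\cY\|+\|\cX\|\|\hat\theta\|)$ with sensitivity governed by the privatized eigenvalue, then OPS calibrated to the privatized local Lipschitz constant; a bias--variance decomposition and optimization over $\lambda$ for (ii); law of total covariance on the step-2 success event for (iii). Two technical points deserve correction.

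First, for the OPS privacy in step~8 you invoke \citet{minami2016differential}, but that result requires a \emph{global} Lipschitz constant, which is infinite for least squares on an unbounded $\Theta$; your $\tilde L$ is only the \emph{local} Lipschitz constant at $\theta^*_\lambda$. The paper instead uses the per-instance DP analysis of \citet{wang2017per} (Theorem~\ref{thm:pdp_analysis} and Remark~\ref{rmk:pDP_eps} here), whose bound depends only on the leverage $\mu\le\|\cX\|^2/(\lambda+\lambda_{\min})$ and the local residual $|y-x^T\theta^*_\lambda|\le \|\cY\|+\|\cX\|\|\theta^*_\lambda\|$; this is what yields the calibration $\gamma\asymp(\lambda+\tilde\lambda_{\min})\tilde\epsilon^2/(\tilde L^2\log(1/\delta))$ and allows the pDP-to-DP conversion once $\tilde\lambda_{\min}$ and $\tilde L$ are themselves DP.

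Second, your identity $F(\tilde\theta)-F(\theta^*_\lambda)=\tfrac12(\tilde\theta-\theta^*_\lambda)^TX^TX(\tilde\theta-\theta^*_\lambda)$ is false: $\theta^*_\lambda$ is not a stationary point of $F$, so the Taylor expansion has the linear term $\nabla F(\theta^*_\lambda)^T(\tilde\theta-\theta^*_\lambda)=-\lambda(\theta^*_\lambda)^T(\tilde\theta-\theta^*_\lambda)$. The paper sidesteps this by decomposing in $F_\lambda$: since $F(\tilde\theta)\le F_\lambda(\tilde\theta)$, one gets $F(\tilde\theta)-F(\theta^*)\le [F_\lambda(\tilde\theta)-F_\lambda(\theta^*_\lambda)]+[F_\lambda(\theta^*_\lambda)-F_\lambda(\theta^*)]+\lambda\|\theta^*\|^2$, where the middle bracket is $\le 0$ and the first is exactly the $\chi^2$ quadratic because $\nabla F_\lambda(\theta^*_\lambda)=0$. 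Relatedly, your assertion $\tilde L\lesssim\|\cX\|^2\|\theta^*\|$ suppresses the multiplicative overshoot $(1+\|\cX\|^2/(\lambda+\tilde\lambda_{\min}))^{2C_2}$ coming from the step-6 shift; the paper controls this to $O(1)$ via the $t_{\min}$ constraint in step~4 (which forces $\lambda+\tilde\lambda_{\min}\gtrsim \|\cX\|^2 C_2$ so the base is at most $1+1/C_2$), and the same mechanism, together with the Gaussian MGF $\E[e^{2Z}]$, is what makes $\E[\tilde L^2\mid X,E]$ finite in part~(iii).
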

The proof, deferred to Appendix~\ref{sec:proof_OPS}, makes use of a fine-grained DP-analysis through the recent per instance DP techniques \citep{wang2017per} and then convert the results to DP by releasing data dependent bounds of $\alpha$ and the magnitude of a ridge-regression output $\theta^*_\lambda$ with an adaptively chosen $\lambda$. Note that $\|\theta^*_\lambda\|$ does not have a bounded global sensitivity. The method to release it differentially privately (described in Lemma~\ref{lem:lipschitz_GS}) is part of our technical contribution.

The \AdaSP{} algorithm is simpler and enjoys slightly stronger theoretical guarantees.
\begin{theorem}\label{thm:adaSuffP}
	Algorithm~\ref{alg:adaSuffP} outputs $\tilde{\theta}$ which obeys that
	\begin{enumerate}
		\item[(i)] It satisfies $(\epsilon,\delta)$-DP.
		\item[(ii)] Assume $\|\cY\|\lesssim \|\cX\|\|\theta^*\|$.  	With probability $1-\varrho$, 
		$$
		F(\tilde{\theta})  - F(\theta^*) \leq  
		O\left(\frac{  \sqrt{d\log(\frac{d^2}{\varrho})}\|\cX\|^2\|\theta^*\|^2  }{\epsilon/\sqrt{\log(\frac{6}{\delta})}} \wedge \frac{  \|\cX\|^4\|\theta^*\|^2 \tr[(X^TX)^{-1}]}{\epsilon^2/ [\log(\frac{6}{\delta})\log(\frac{d^2}{\varrho}) ]}\right)
		$$
		\item[(iii)] 
		Assume that $\mathbf{y}|X$ obeys a linear Gaussian model and $X$ has a sufficiently large $\alpha$. Then  there is an event $E$ satisfying $\P(E)\geq 1-\delta/3$ and $E\independent \mathbf{y} | X$, such that 
		$
		\E[\tilde{\theta} | X, E]  = \theta_0
		$
		and
		\begin{align*}
		\E[\|\tilde{\theta}-\theta_0\|^2 | X, E]
		=\sigma^2\tr[(X^TX)^{-1}] + O\left(\frac{\tilde{C} \|\cX\|^2\tr[(X^TX)^{-2}] }{\epsilon^2/\log(\frac{6}{\delta})}  \right),
		\end{align*}
		with the same constant $\tilde{C}$ in Theorem~\ref{thm:adaops} (iii).
	\end{enumerate}
\end{theorem}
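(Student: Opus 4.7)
The plan is to establish the three claims in order, all riding on a single high-probability ``good'' event $E$ that controls the three independent DP noises and depends only on them (hence $E \independent \vct y\mid X$ automatically).

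Part (i) follows by standard sensitivity analysis plus basic composition. Each of the three releases uses privacy budget $(\epsilon/3,\delta/3)$: $\lambda_{\min}(X^TX)$ has add/remove sensitivity $\|\cX\|^2$ by Weyl's inequality; $X^TX$ has Frobenius sensitivity $\|\cX\|^2$, handled by the symmetric-matrix Gaussian mechanism; and $X^T\vct y$ has Euclidean sensitivity $\|\cX\|\|\cY\|$. The deterministic downward shift in step~2 is postprocessing and only makes $\tilde\lambda_{\min}$ a one-sided lower bound on $\lambda_{\min}$ with high probability. On the event $E$, I would union-bound three Gaussian tails: $0 \leq \lambda_{\min} - \tilde\lambda_{\min} \lesssim \sqrt{\log(1/\delta)\log(1/\varrho)}\|\cX\|^2/\epsilon$; $\|E_1\|_{\mathrm{op}} \lesssim \sqrt{d\log(1/\delta)\log(d^2/\varrho)}\|\cX\|^2/\epsilon$ via a Wigner-type tail bound (with $E_1 := \widehat{X^TX} - X^TX$); and $\|e_2\| \lesssim \sqrt{d\log(1/\delta)\log(1/\varrho)}\|\cX\|\|\cY\|/\epsilon$ (with $e_2 := \widehat{X\vct y} - X^T\vct y$). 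The choice of $\lambda$ in step~3 then guarantees that on $E$, $\widehat{X^TX} + \lambda I \succeq \tfrac{1}{2}(\tilde\lambda_{\min}+\lambda)I$.

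For part (ii), using $X^TX\theta^* = X^T\vct y$ gives the clean identity
\[
\tilde\theta - \theta^* = (\widehat{X^TX}+\lambda I)^{-1}\bigl[e_2 - (E_1 + \lambda I)\theta^*\bigr],
\]
so $F(\tilde\theta) - F(\theta^*) = \tfrac{1}{2}\|X(\tilde\theta - \theta^*)\|^2$ splits into $e_2$, $E_1\theta^*$, and $\lambda\theta^*$ contributions. The spectral lower bound above yields the two regimes corresponding to the $\min$ in the theorem: if $\lambda>0$, the bound is dominated by a $\lambda\|\cX\|^2\|\theta^*\|^2$ term which, at the optimal $\lambda\lesssim\|E_1\|_{\mathrm{op}}$, gives the first $\sqrt{d}$-type term; if $\lambda=0$, expanding $(\widehat{X^TX})^{-1}$ via Neumann and using $\|\cY\| \lesssim \|\cX\|\|\theta^*\|$ to dominate the $e_2$-contribution produces the $\|\cX\|^4\|\theta^*\|^2\tr[(X^TX)^{-1}]/\epsilon^2$ term.

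For part (iii), the large-$\alpha$ assumption forces $\lambda = 0$, giving $\tilde\theta = (\widehat{X^TX})^{-1}\widehat{X^T\vct y}$. Writing $\xi := \vct y - X\theta_0 \sim \cN(0,\sigma^2 I_n)$ independent of the DP noises, and invoking the Neumann identity $(\widehat{X^TX})^{-1} = (X^TX)^{-1} - (X^TX)^{-1}E_1(\widehat{X^TX})^{-1}$, I would decompose
\[
\tilde\theta - \theta_0 = (X^TX)^{-1}X^T\xi + (X^TX)^{-1}e_2 - (X^TX)^{-1}E_1\tilde\theta.
\]
The first summand is the MLE fluctuation, with mean $0$ and covariance $\sigma^2(X^TX)^{-1}$; the cross-expectations of $\xi$ with the DP-noise-only terms vanish by independence of $\xi$ from $(E_1,e_2,E)$. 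Unbiasedness then reduces to showing the residual $(E_1,e_2)$-expectation is zero on $E$, which I would pursue via mean-zero Gaussian symmetry of $(E_1,e_2)\mid E$. For the covariance, I would apply the matrix-Gaussian moment identity $\E[E_1 M E_1 \mid E] \asymp c(\tr(M) I + M)$ with $c\asymp \|\cX\|^4\log(1/\delta)/\epsilon^2$ (and the analog for $e_2 e_2^T$) to each term in $\Cov(\tilde\theta \mid X, E)$, producing the $\tr[(X^TX)^{-2}]$ factor and the constant $\tilde C$. The main obstacle is the exact unbiasedness in (iii): the second-order term in the Neumann expansion of $(\widehat{X^TX})^{-1}X^TX\theta_0$ does not vanish under $E_1\leftrightarrow -E_1$ symmetry alone, so I would either restrict $E$ to a symmetric sub-event on which the first-order expansion is exact up to a genuinely zero-integrating remainder, or absorb the residual bias into the stated $O(\cdot)$ MSE term. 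A secondary difficulty is producing the sharp $\tr[(X^TX)^{-2}]$ scaling (rather than a coarser $d/\lambda_{\min}^2$), which requires the exact Wigner second-moment identity rather than a naive operator-norm bound.
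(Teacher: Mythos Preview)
Your outline for part (i) and the basic decomposition $\tilde\theta-\theta^*=(\widehat{X^TX}+\lambda I)^{-1}[e_2-(E_1+\lambda I)\theta^*]$ in part (ii) match the paper exactly (this is the content of Lemma~\ref{lem:suffpert} and its ridge corollary). The $\lambda>0$ branch of your argument is fine: an operator-norm bound on $E_1$ plus the algorithm's choice $\lambda+\tilde\lambda_{\min}\asymp\|E_1\|_{\mathrm{op}}$ does yield the $\sqrt{d}$-type term.

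The gap is in the $\lambda=0$ branch. You write ``expanding $(\widehat{X^TX})^{-1}$ via Neumann \ldots\ produces the $\tr[(X^TX)^{-1}]$ term,'' but Neumann (or, equivalently, the spectral sandwich $\tfrac12 X^TX\preceq\widehat{X^TX}\preceq 2X^TX$) only reduces the problem to bounding $\|E_1\theta^*\|_{(X^TX)^{-1}}^2$ and $\|e_2\|_{(X^TX)^{-1}}^2$. The second is routine, but the first is exactly the step where a naive operator-norm bound gives $dw^2\|\theta^*\|^2/\lambda_{\min}$ rather than $w^2\|\theta^*\|^2\tr[(X^TX)^{-1}]$. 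The paper's device here is a Johnson--Lindenstrauss-type lemma for \emph{symmetric} Gaussian matrices (Lemma~\ref{lem:JL-ellipsoid}): with probability $1-\varrho$, $\|E\theta\|_A^2\le w^2\|\theta\|^2\tr(A)\log(2d^2/\varrho)$ for any fixed psd $A$. The proof diagonalizes $A=U\Lambda U^T$, observes that each entry $[U^TE]_{i,j}$ is marginally $\cN(0,w^2)$ (each row of $E$ is i.i.d.\ despite the symmetry constraint), and union-bounds over $d^2$ entries---this is the source of the $\log(d^2/\varrho)$ factor in the statement. Applied with $A=(X^TX+\lambda I)^{-1}$ it gives part (ii), and with $A=(X^TX+\lambda I)^{-2}$ it gives the $\tr[(X^TX)^{-2}]$ in part (iii). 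You correctly flag this issue at the end of your proposal, but only for (iii) and only in the in-expectation form; part (ii) needs the high-probability version, and your ``Wigner second-moment identity'' is not the right tool there.

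On part (iii): your concern about exact unbiasedness is legitimate---the second-order Neumann term $\E[(X^TX)^{-1}E_1(X^TX)^{-1}E_1(X^TX)^{-1}\mid E]$ does not vanish under the sign symmetry $E_1\leftrightarrow -E_1$ alone, so $\E[\tilde\theta\mid X,E]=\theta_0$ cannot hold exactly by that route. The paper's proof of (iii) is in fact the one-line instruction ``substitute this choice of $\lambda$ into \eqref{eq:ssp_ridge},'' which passes through the high-probability bound on $\|\tilde\theta-\theta^*\|^2$ from Lemma~\ref{lem:JL-ellipsoid} rather than computing the conditional expectation directly; it does not separately argue unbiasedness. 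So your direct-moment approach is more careful than what the paper actually writes down, and your fallback of absorbing the residual bias into the $O(\cdot)$ MSE term is effectively what the paper's argument delivers.
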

The proof of  Statement (1) is straightforward. Note that we release the eigenvalue $\lambda_{\text{min}}(X^TX)$, $X\vct y$ and $X^TX$  differentially privately each with parameter $(\epsilon/3,\delta/3)$. For the first two, we use Gaussian mechanism and for $X^TX$, we use the Analyze-Gauss algorithm \citep{dwork2014analyze} with a symmetric Gaussian random matrix. The result then follows from the composition theorem of differential privacy. The proof of the second and third statements is provided in Appendix~\ref{app:suffpert}. The main technical challenge is to prove the concentration on the spectrum and the Johnson-Lindenstrauss-like distance preserving properties for symmetric Gaussian random matrices (Lemma~\ref{lem:JL-ellipsoid}). We note that while \SuffP{} is an old algorithm the analysis of its theoretical properties is new to this paper.
\paragraph{Remarks.} Both \AdaOPS{} and \AdaSP{} match the smaller of the two lower bounds \eqref{eq:dp_lowerbound_lipschitz} and \eqref{eq:dp_lowerbound_strongcvx} for each problem instance. They are slightly different in that \AdaOPS{} preserves the shape of the intrinsic geometry while \AdaSP{}'s bounds are slightly stronger as they do not explicitly depend on the smallest eigenvalue. 


\section{Experiments}\label{sec:exp}
In this section, we conduct synthetic and real data experiments to benchmark the performance of \AdaOPS{} and \AdaSP{} relative to existing algorithms we discussed in Section~\ref{sec:priorwork}. \NSGD{}  and Sub-Agg are excluded because they are dominated by \OBP{} and an $(\epsilon,\delta)$-DP version of \OPS{}, which we describe in Appendix~\ref{app:epsdelta_OPS}. 
The code to reproduce all experimental results in this paper is available at \url{https://github.com/yuxiangw/optimal_dp_linear_regression}.


\begin{figure}[t]
	\centering
	\begin{subfigure}[t]{0.45\textwidth}
		\centering
		\includegraphics[width=\textwidth]{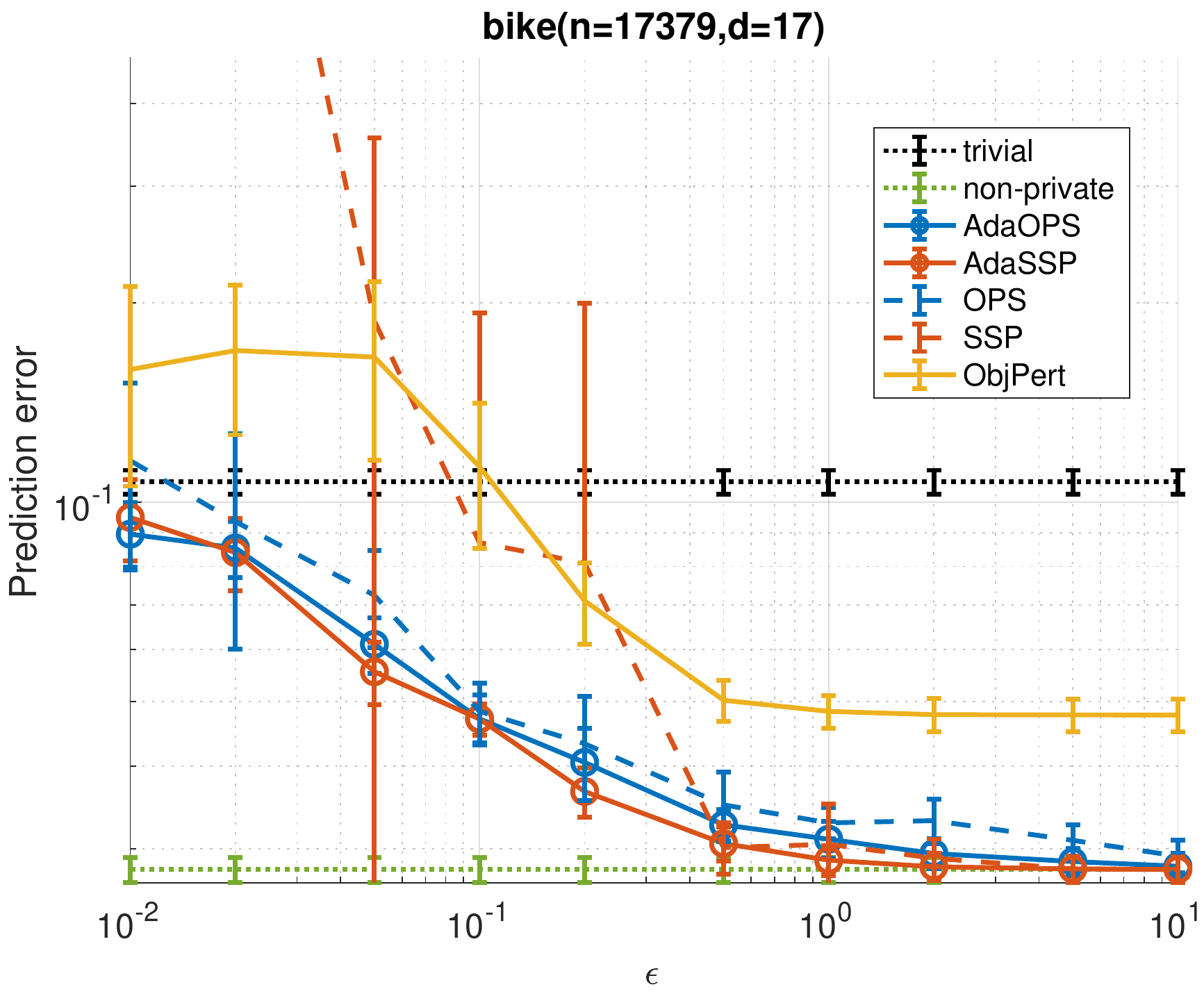}
	\end{subfigure}
	\begin{subfigure}[t]{0.45\textwidth}
		\centering
		\includegraphics[width=\textwidth]{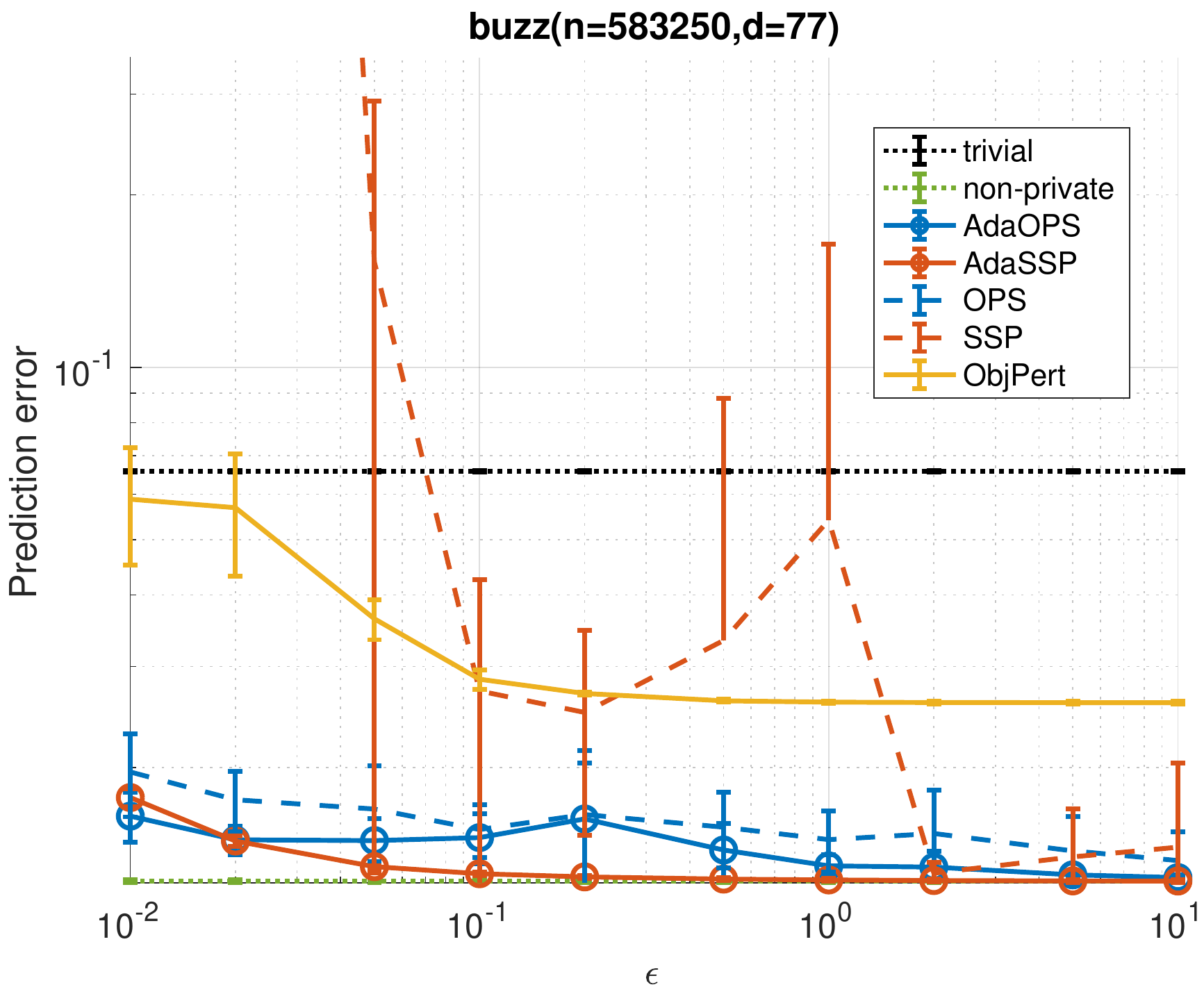}
	\end{subfigure}
	\begin{subfigure}[t]{0.45\textwidth}
		\centering
		\includegraphics[width=\textwidth]{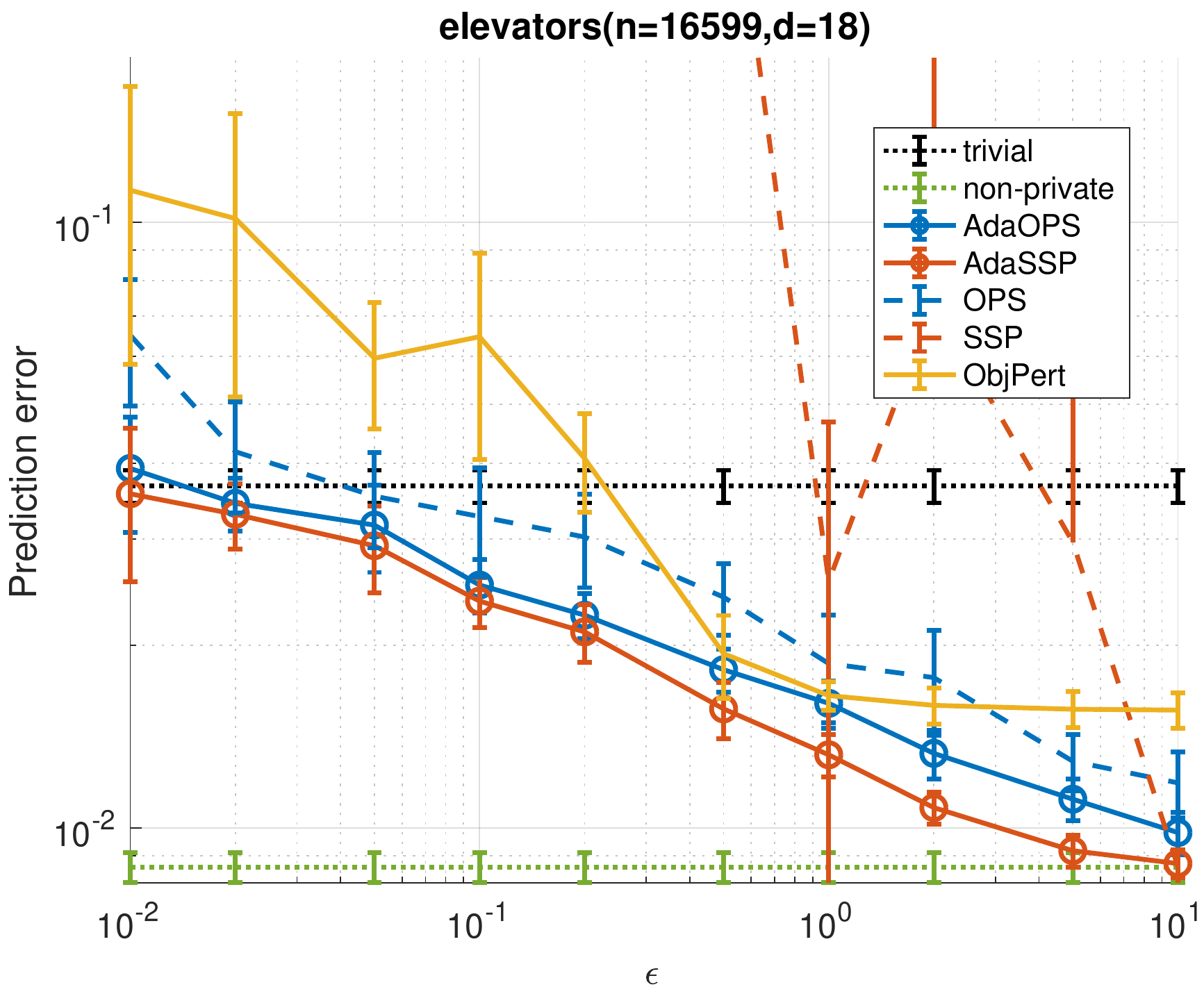}
	\end{subfigure}
	\begin{subfigure}[t]{0.45\textwidth}
		\centering
		\includegraphics[width=\textwidth]{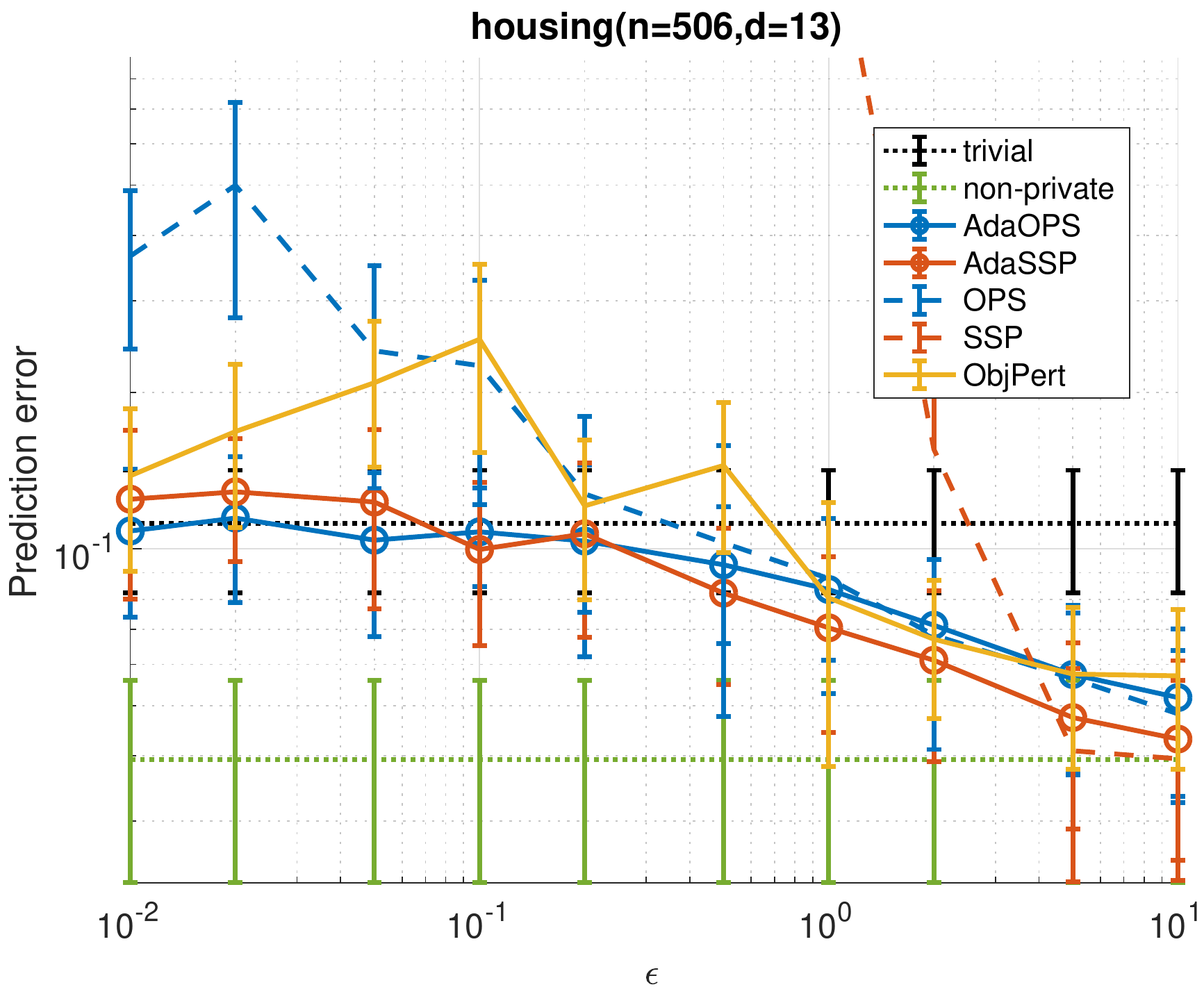}
	\end{subfigure}
	\caption{\small Example of results of differentially private linear regression algorithms on UCI data sets for a sequence of $\epsilon$. Reported on the y-axis is the cross-validation prediction error in MSE and their confidence intervals. }\label{fig:UCI_dataset}
	\end{figure}
	\begin{figure}	[t]
	\centering
	\begin{subfigure}[t]{0.45\textwidth}
		\centering
		\includegraphics[width=\textwidth]{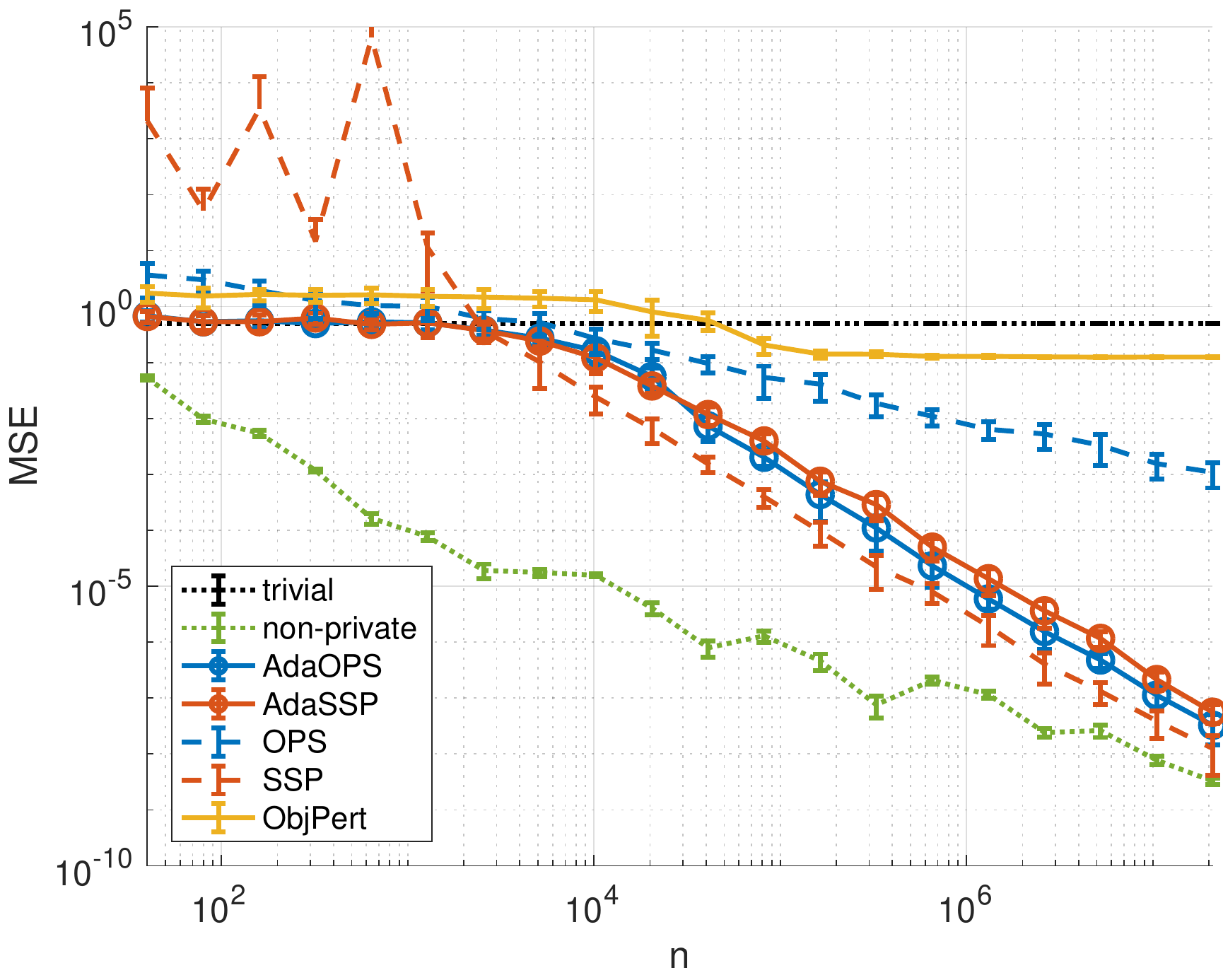}
		\caption{Estimation MSE at $\epsilon=0.1$}	
	\end{subfigure}
	\begin{subfigure}[t]{0.45\textwidth}
		\centering
		\includegraphics[width=\textwidth]{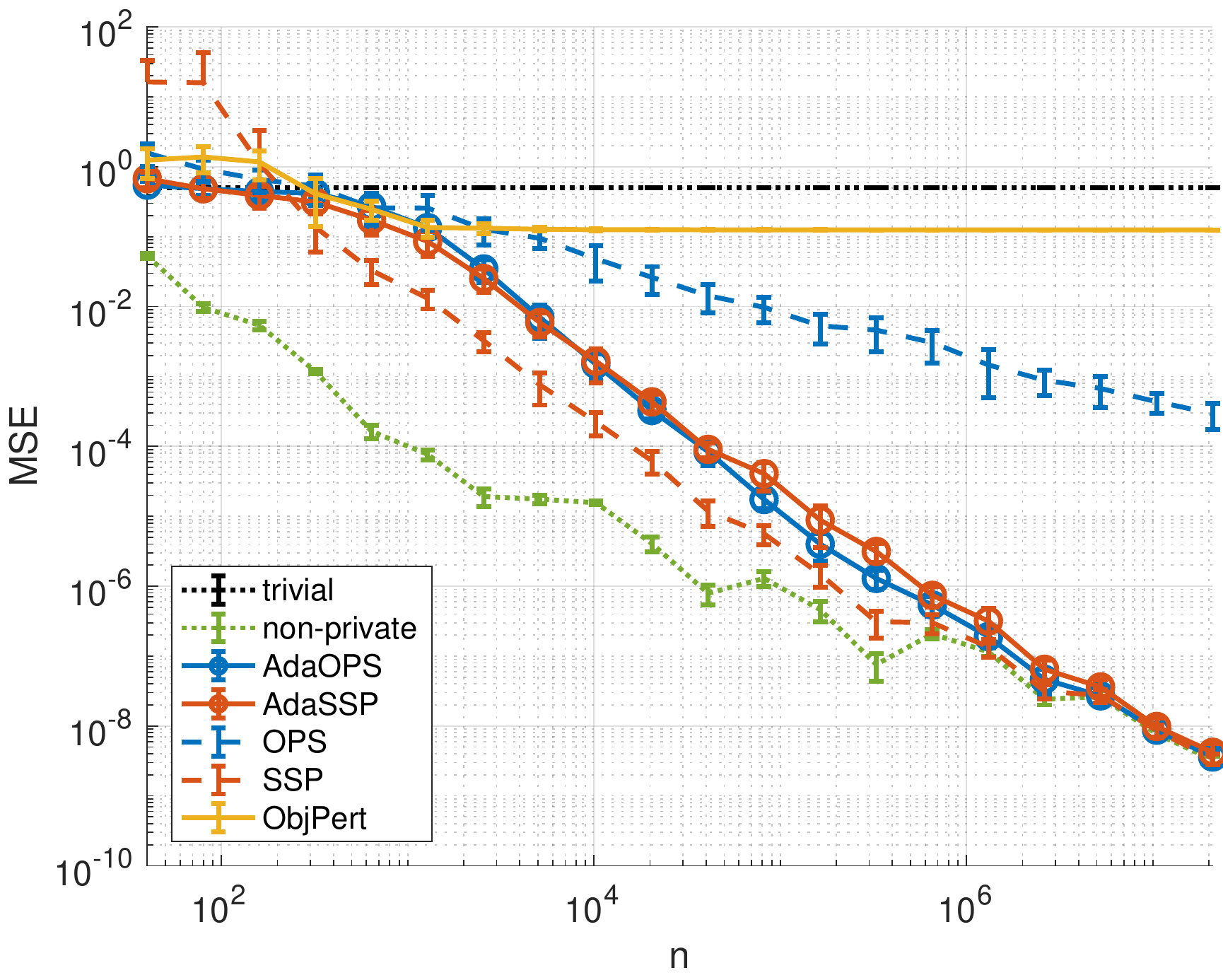}
		\caption{Estimation MSE at $\epsilon=1$}	
	\end{subfigure}
	\begin{subfigure}[t]{0.45\textwidth}
		\centering
		\includegraphics[width=\textwidth]{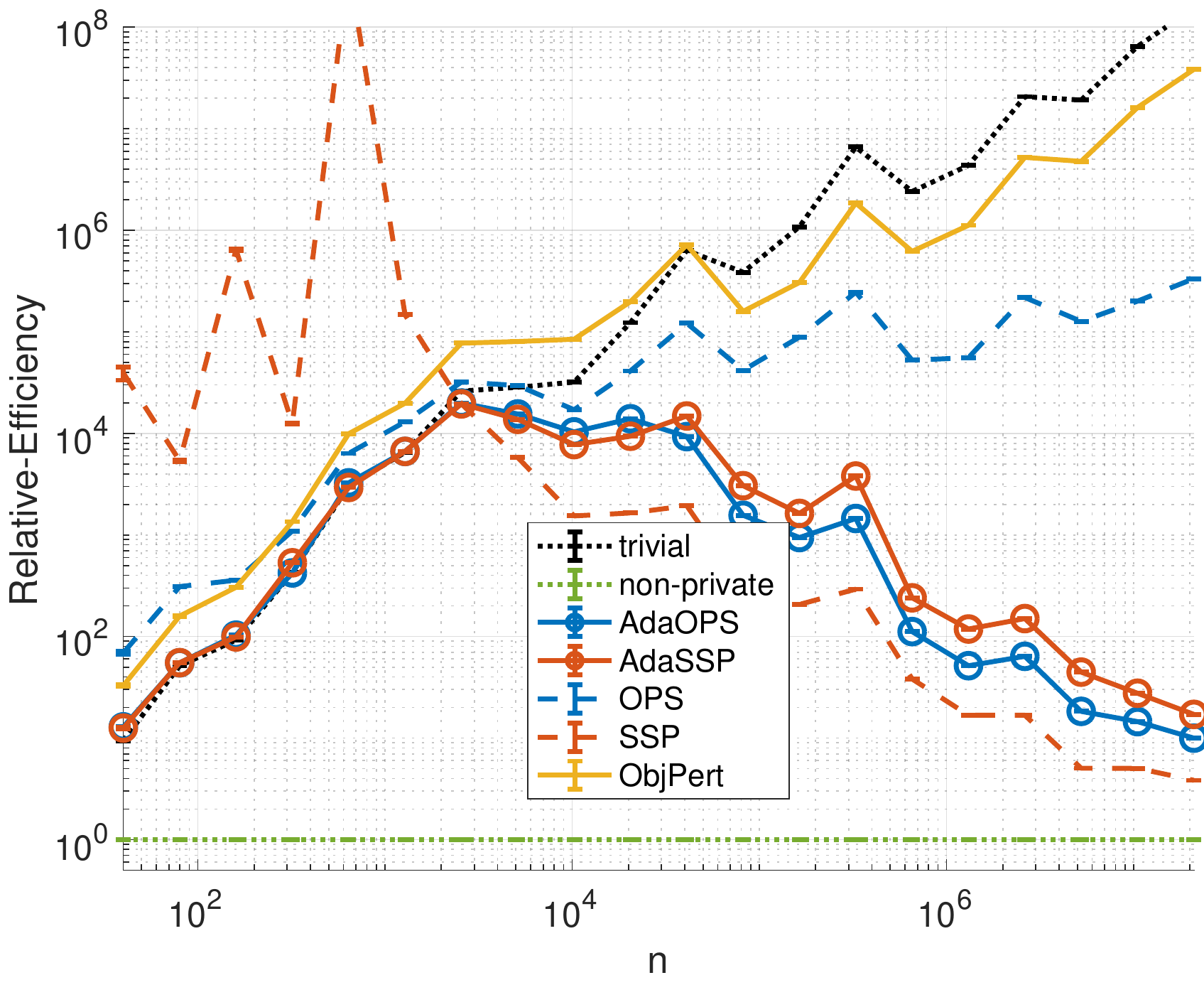}
		\caption{Rel. efficiency at $\epsilon=0.1$}
	\end{subfigure}
	\begin{subfigure}[t]{0.45\textwidth}
		\centering
		\includegraphics[width=\textwidth]{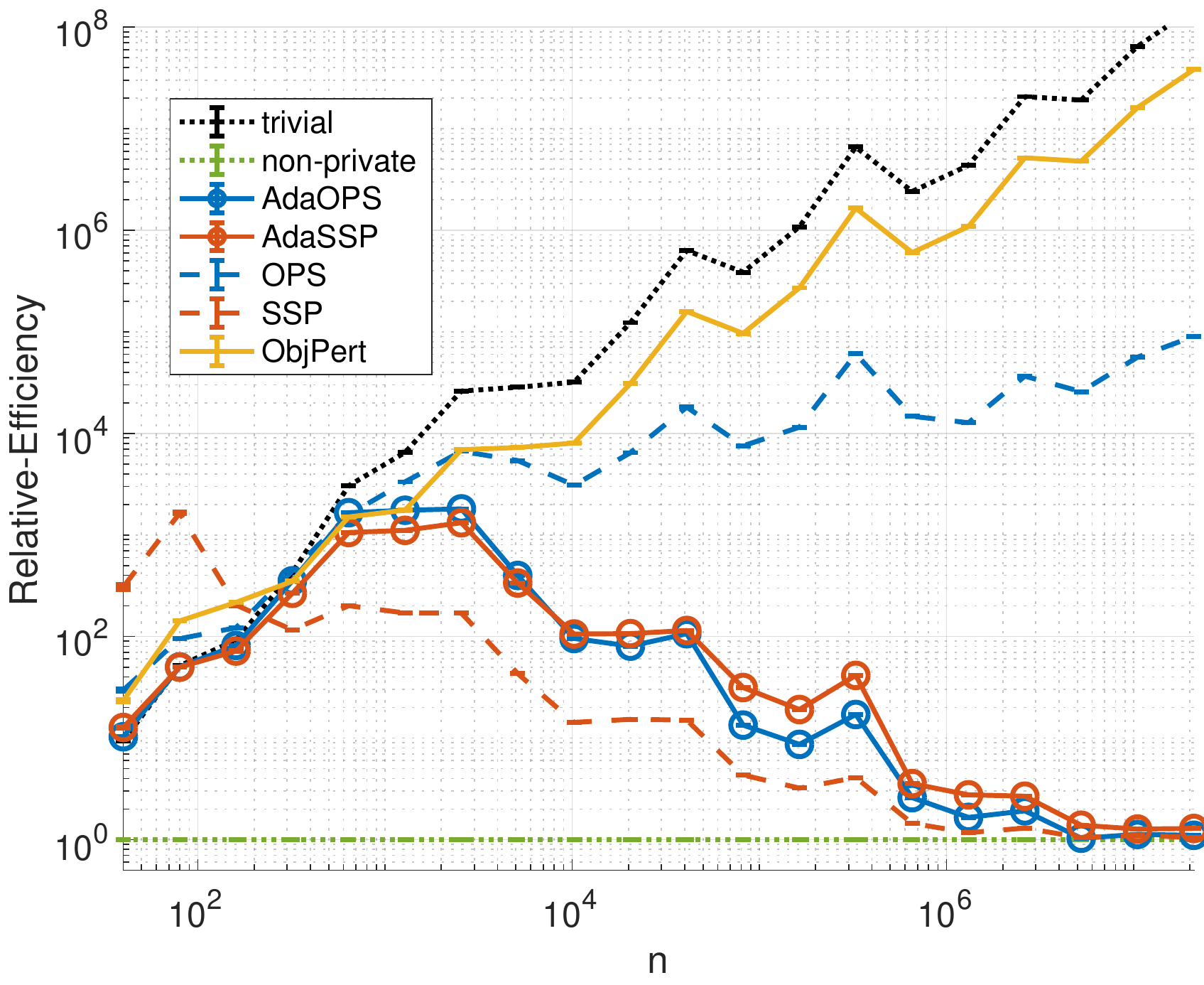}
		\caption{Rel. efficiency at $\epsilon=1$}	
	\end{subfigure}
	\caption{\small Example of differentially private linear regression under linear Gaussian model with an increasing data size $n$. We simulate the data from $d=10$, $\theta_0$ drawn from a uniform distribution defined on $[0,1]^d$. We generate $X\in \R^{n\times d}$ as a Gaussian random matrix and then generate $y \sim \cN(X\theta_0, I_d)$.  We used $\epsilon = 1$ and $\epsilon=0.1$, both with $\delta = 1/n^2$. The results clearly illustrate the asymptotic efficiency of the proposed approaches.}\label{fig:gaussian}
\end{figure}

\paragraph{Prediction accuracy in UCI data sets experiments. }
The first set of experiments is on training linear regression on a number of UCI regression data sets.
Standard $z$-scoring are performed and all data points are normalized to having an Euclidean norm of $1$ as a preprocessing step.
Results on four of the data sets are presented in Figure~\ref{fig:UCI_dataset}. As we can see, \SuffP{} is unstable for small data.  \OBP{} suffers from a pre-defined bound $\|\Theta\|$ and does not converge to nonprivate solution even with a large $\epsilon$. \OPS{} performs well but still does not take advantage of the strong convexity that is intrinsic to the data set.  \AdaOPS{} and \AdaSP{} on the other hand are able to nicely interpolate between the trivial solution and the non-private baseline and performed as well as or better than baselines for all $\epsilon$.  More detailed quantitative results on all the 36 UCI data sets are presented in Table~\ref{tab:uci_eps=0.1} and Table~\ref{tab:uci_eps=1} for $\epsilon=0.1,\delta=\min\{1e-6,1/n^2\}$ and $\epsilon = 1,\delta=\min\{1e-6,1/n^2\}$ respectively  in Appendix~\ref{sec:realdata_table}.


%



\paragraph{Parameter estimation under linear Gaussian model. }
To illustrate the performance of the algorithms under standard statistical assumptions, we also benchmarked the algorithms on synthetic data generated by a linear Gaussian model. The results, shown in Figure~\ref{fig:gaussian} illustrates that as $n$ gets large, \AdaOPS{} and \AdaSP{} with $\epsilon=0.1$ and $\epsilon = 1$ converge to the  maximum likelihood estimator at a rate faster than the optimal statistical rate that MLE estimates $\theta^*$, therefore at least for large $n$, differential privacy comes for free. Note that there is a gap in \SuffP{} and \AdaSP{} for large $n$, this can be thought of as a cost of adaptivity as \AdaSP{} needs to spend some portion of its privacy budget to release $\lambda_{\min }$, which \SuffP{} does not, this can be fixed by using more careful splitting of the privacy budget.

\section{ Conclusion}
In this paper, we presented a detailed case-study of the problem of differentially private linear regression. We clarified the relationships between various quantities of the problems as they appear in the private and non-private information-theoretic lower bounds. We also surveyed the existing algorithms and highlighted that the main drawback using these algorithms relative to their non-private counterpart is that they cannot adapt to data-dependent quantities. This is particularly true for linear regression where the ordinary least square algorithm is able to work optimally for a large class of different settings.

We proposed \AdaOPS{} and \AdaSP{} to address the issue and showed that they both work in unbounded domain. Moreover, they smoothly interpolate the two regimes studied in \citet{bassily2014private} and behave nearly optimally for every instance. We tested the two algorithms on 36 real-life data sets from the UCI machine learning repository and we see significant improvement over popular algorithms for almost all configurations of $\epsilon$. 

Future work includes extending the result beyond linear regression and releasing off-the-shelf packages for adaptive differentially private learning.

\section*{Acknowledgements}
The author thanks the anonymous reviewers for helpful feedbacks and Zichao Yang for sharing the 36 UCI regression data sets as was used in \citep{yang2015carte}. 


\bibliography{FreeDP}
\bibliographystyle{apa-good}

\newpage
\appendix
\section{Results on the 36 real  regression data sets in UCI repository}\label{sec:realdata_table}
The detailed results on the 36 UCI data sets are presented in Table~\ref{tab:uci_eps=0.1} for $\epsilon=0.1,\delta=\min\{1e-6,1/n^2\}$ and Table~\ref{tab:uci_eps=1} for $\epsilon=1, \delta=\min\{1e-6,1/n^2\}$. The boldface denotes the DP algorithm where the standard deviation is smaller than the error (a positive quantity), and the 95\%  confidence interval covers the observed best performance among benchmarked DP algorithms.

\begin{table*}[h!]
	\caption{Summary of UCI data experiments at $\epsilon = 0.1$. The boldface denotes the DP algorithm where the standard deviation is smaller than the error (a positive quantity), and the 95\%  confidence interval covers the observed best performance among benchmarked DP algorithms.}\label{tab:uci_eps=0.1}
	\resizebox{\textwidth}{!}{
		\begin{tabular}{|l|l|l|l|l|l|l|l|}
			\hline 
			& Trivial & non-private  & \OBP{}&\OPS{} &\SuffP{}  & \AdaOPS{} & \AdaSP{} \\ 
			\hline
			3droad&0.0275$\pm$0.00014&0.0265$\pm$0.00012&0.0267$\pm$0.00013&0.027$\pm$0.00026&\textbf{0.0265}$\pm$\textbf{0.00019}&\textbf{0.0265}$\pm$\textbf{0.00019}&\textbf{0.0265}$\pm$\textbf{0.00019}\\  
			airfoil&0.103$\pm$0.0069&0.0533$\pm$0.0074&0.356$\pm$0.064&\textbf{0.138}$\pm$\textbf{0.086}&0.232$\pm$0.28&\textbf{0.0914}$\pm$\textbf{0.015}&\textbf{0.0878}$\pm$\textbf{0.014}\\  
			autompg&0.113$\pm$0.011&0.0221$\pm$0.0032&\textbf{0.143}$\pm$\textbf{0.096}&0.242$\pm$0.11&5.44$\pm$6.1&\textbf{0.098}$\pm$\textbf{0.03}&\textbf{0.115}$\pm$\textbf{0.047}\\  
			autos&0.13$\pm$0.042&0.0274$\pm$0.011&\textbf{0.17}$\pm$\textbf{0.13}&0.308$\pm$0.13&1.7e+03$\pm$2.5e+03&\textbf{0.136}$\pm$\textbf{0.066}&\textbf{0.132}$\pm$\textbf{0.064}\\  
			bike&0.107$\pm$0.0028&0.0279$\pm$0.00078&0.113$\pm$0.018&\textbf{0.0484}$\pm$\textbf{0.005}&0.0869$\pm$0.067&\textbf{0.0471}$\pm$\textbf{0.004}&\textbf{0.0471}$\pm$\textbf{0.0026}\\  
			breastcancer&0.194$\pm$0.027&0.139$\pm$0.025&\textbf{0.212}$\pm$\textbf{0.078}&\textbf{0.269}$\pm$\textbf{0.13}&9.54e+03$\pm$1.9e+04&\textbf{0.204}$\pm$\textbf{0.037}&\textbf{0.196}$\pm$\textbf{0.051}\\  
			buzz&0.0658$\pm$0.00015&0.0127$\pm$4.6e-05&0.0285$\pm$0.00071&0.0156$\pm$0.001&0.0272$\pm$0.0097&0.0151$\pm$0.00095&\textbf{0.013}$\pm$\textbf{9.7e-05}\\  
			challenger&0.141$\pm$0.084&0.138$\pm$0.088&0.323$\pm$0.28&0.338$\pm$0.13&3.07$\pm$3.9&\textbf{0.159}$\pm$\textbf{0.13}&\textbf{0.146}$\pm$\textbf{0.093}\\  
			concrete&0.127$\pm$0.0043&0.0445$\pm$0.0033&0.237$\pm$0.076&0.181$\pm$0.042&1.94$\pm$1.8&\textbf{0.12}$\pm$\textbf{0.011}&\textbf{0.119}$\pm$\textbf{0.016}\\  
			concreteslump&0.149$\pm$0.039&0.0245$\pm$0.0071&0.349$\pm$0.094&0.549$\pm$0.24&3.14$\pm$2.5&\textbf{0.151}$\pm$\textbf{0.064}&\textbf{0.165}$\pm$\textbf{0.065}\\  
			elevators&0.0367$\pm$0.0014&0.00861$\pm$0.00031&0.0647$\pm$0.015&0.0327$\pm$0.0042&0.645$\pm$0.98&\textbf{0.0252}$\pm$\textbf{0.0026}&\textbf{0.0237}$\pm$\textbf{0.0022}\\  
			energy&0.235$\pm$0.012&0.0232$\pm$0.0023&0.332$\pm$0.09&\textbf{0.161}$\pm$\textbf{0.083}&1.7e+03$\pm$3.4e+03&\textbf{0.167}$\pm$\textbf{0.034}&\textbf{0.15}$\pm$\textbf{0.032}\\  
			fertility&0.0977$\pm$0.024&0.0863$\pm$0.024&0.203$\pm$0.04&0.639$\pm$0.16&439$\pm$8.6e+02&\textbf{0.108}$\pm$\textbf{0.048}&\textbf{0.115}$\pm$\textbf{0.032}\\  
			forest&0.0564$\pm$0.0081&0.0571$\pm$0.0086&0.12$\pm$0.022&0.177$\pm$0.036&41.9$\pm$77&\textbf{0.0622}$\pm$\textbf{0.017}&\textbf{0.0675}$\pm$\textbf{0.013}\\  
			gas&0.112$\pm$0.0062&0.0214$\pm$0.0028&0.109$\pm$0.015&\textbf{0.0546}$\pm$\textbf{0.012}&0.923$\pm$0.63&0.0801$\pm$0.0078&0.0875$\pm$0.0073\\  
			houseelectric&0.122$\pm$0.00017&0.0136$\pm$1.4e-05&0.0409$\pm$0.00027&0.0144$\pm$0.00017&\textbf{0.0136}$\pm$\textbf{2.2e-05}&\textbf{0.0136}$\pm$\textbf{2.2e-05}&\textbf{0.0136}$\pm$\textbf{2.2e-05}\\  
			housing&0.112$\pm$0.019&0.0394$\pm$0.01&0.253$\pm$0.063&0.225$\pm$0.065&2.24$\pm$2.3&\textbf{0.108}$\pm$\textbf{0.023}&\textbf{0.0997}$\pm$\textbf{0.035}\\  
			keggdirected&0.117$\pm$0.00095&0.0188$\pm$0.0011&0.0637$\pm$0.0042&0.0266$\pm$0.0019&0.23$\pm$0.33&0.0227$\pm$0.0015&\textbf{0.0212}$\pm$\textbf{0.0011}\\  
			keggundirected&0.0694$\pm$0.00074&0.00475$\pm$8.9e-05&0.0365$\pm$0.0028&0.0166$\pm$0.0033&0.353$\pm$0.4&0.0107$\pm$0.0012&\textbf{0.00912}$\pm$\textbf{0.00046}\\  
			kin40k&0.0634$\pm$0.0012&0.0632$\pm$0.0013&0.0871$\pm$0.0092&0.0717$\pm$0.0026&\textbf{0.0633}$\pm$\textbf{0.002}&\textbf{0.0639}$\pm$\textbf{0.0021}&\textbf{0.064}$\pm$\textbf{0.0021}\\  
			machine&0.121$\pm$0.013&0.0395$\pm$0.0051&0.282$\pm$0.14&0.347$\pm$0.14&2.27e+03$\pm$4.5e+03&\textbf{0.105}$\pm$\textbf{0.025}&\textbf{0.141}$\pm$\textbf{0.068}\\  
			parkinsons&0.17$\pm$0.0026&0.128$\pm$0.0024&0.211$\pm$0.014&\textbf{0.157}$\pm$\textbf{0.011}&132$\pm$2.6e+02&\textbf{0.159}$\pm$\textbf{0.0065}&\textbf{0.156}$\pm$\textbf{0.0064}\\  
			pendulum&0.0226$\pm$0.0061&0.0181$\pm$0.0049&0.118$\pm$0.027&0.122$\pm$0.041&24.8$\pm$45&\textbf{0.0276}$\pm$\textbf{0.011}&0.0346$\pm$0.0069\\  
			pol&0.345$\pm$0.0028&0.135$\pm$0.0023&0.302$\pm$0.032&\textbf{0.196}$\pm$\textbf{0.02}&281$\pm$5.3e+02&0.214$\pm$0.0056&0.214$\pm$0.0061\\  
			protein&0.167$\pm$0.0011&0.119$\pm$0.0014&0.158$\pm$0.01&0.137$\pm$0.0044&\textbf{0.149}$\pm$\textbf{0.06}&0.129$\pm$0.0015&\textbf{0.125}$\pm$\textbf{0.0026}\\  
			pumadyn32nm&0.0935$\pm$0.0039&0.0941$\pm$0.0039&0.124$\pm$0.0046&0.111$\pm$0.005&8.92e+03$\pm$1.8e+04&\textbf{0.0968}$\pm$\textbf{0.0065}&\textbf{0.0966}$\pm$\textbf{0.0063}\\  
			servo&0.184$\pm$0.039&0.0752$\pm$0.022&0.366$\pm$0.077&0.574$\pm$0.26&2.03$\pm$1.5&\textbf{0.195}$\pm$\textbf{0.065}&\textbf{0.198}$\pm$\textbf{0.081}\\  
			skillcraft&0.0439$\pm$0.0021&0.0203$\pm$0.0017&0.0817$\pm$0.013&0.0519$\pm$0.0099&4.72$\pm$4.3&\textbf{0.037}$\pm$\textbf{0.008}&\textbf{0.039}$\pm$\textbf{0.0056}\\  
			slice&0.196$\pm$0.0021&0.0283$\pm$0.00051&0.174$\pm$0.0053&\textbf{0.0924}$\pm$\textbf{0.0035}&11.2$\pm$9.4&0.0992$\pm$0.0021&0.132$\pm$0.0015\\  
			sml&0.211$\pm$0.0089&0.0143$\pm$0.00066&0.23$\pm$0.03&\textbf{0.0955}$\pm$\textbf{0.029}&59.9$\pm$80&0.134$\pm$0.0075&0.147$\pm$0.013\\  
			solar&0.0118$\pm$0.0042&0.0106$\pm$0.0038&0.0994$\pm$0.023&0.0667$\pm$0.017&5.95$\pm$9.6&\textbf{0.0165}$\pm$\textbf{0.0062}&\textbf{0.0204}$\pm$\textbf{0.0073}\\  
			song&0.0917$\pm$0.0003&0.0636$\pm$0.00033&0.0838$\pm$0.0014&0.072$\pm$0.00035&\textbf{0.0644}$\pm$\textbf{0.0005}&0.0685$\pm$0.00045&0.0697$\pm$0.00029\\  
			stock&0.0583$\pm$0.0095&0.013$\pm$0.0023&0.122$\pm$0.026&0.157$\pm$0.055&46.8$\pm$66&\textbf{0.0582}$\pm$\textbf{0.023}&\textbf{0.0651}$\pm$\textbf{0.024}\\  
			tamielectric&0.334$\pm$0.002&0.334$\pm$0.0021&0.341$\pm$0.0021&0.343$\pm$0.0065&\textbf{0.335}$\pm$\textbf{0.0033}&\textbf{0.337}$\pm$\textbf{0.0047}&\textbf{0.335}$\pm$\textbf{0.0033}\\  
			wine&0.0566$\pm$0.0028&0.0202$\pm$0.00099&0.153$\pm$0.028&0.0911$\pm$0.016&11.7$\pm$17&\textbf{0.058}$\pm$\textbf{0.011}&\textbf{0.0599}$\pm$\textbf{0.01}\\  
			yacht&0.105$\pm$0.017&0.0176$\pm$0.0055&0.273$\pm$0.076&0.371$\pm$0.14&4.92$\pm$6.8&\textbf{0.0967}$\pm$\textbf{0.035}&\textbf{0.109}$\pm$\textbf{0.03}\\  
			\hline
		\end{tabular} 
	}
\end{table*}

\begin{table}[h!]
	\caption{Summary of UCI data experiments at $\epsilon = 1$}\label{tab:uci_eps=1}
	\resizebox{\textwidth}{!}{
		\begin{tabular}{|l|l|l|l|l|l|l|l|}
			\hline 
			& Trivial & non-private  & \OBP{}&\OPS{} &\SuffP{}  & \AdaOPS{} & \AdaSP{} \\ 
			\hline
			3droad&0.0275$\pm$0.00014&0.0265$\pm$0.00012&0.0267$\pm$0.00013&0.0266$\pm$0.00012&\textbf{0.0265}$\pm$\textbf{0.00019}&\textbf{0.0265}$\pm$\textbf{0.00019}&\textbf{0.0265}$\pm$\textbf{0.00019}\\  
			airfoil&0.103$\pm$0.0069&0.0533$\pm$0.0074&0.0681$\pm$0.0074&0.0674$\pm$0.011&\textbf{0.0535}$\pm$\textbf{0.013}&0.0686$\pm$0.006&\textbf{0.0585}$\pm$\textbf{0.012}\\  
			autompg&0.113$\pm$0.011&0.0221$\pm$0.0032&0.0651$\pm$0.0072&0.0783$\pm$0.027&0.169$\pm$0.1&0.0522$\pm$0.0074&\textbf{0.044}$\pm$\textbf{0.016}\\  
			autos&0.13$\pm$0.042&0.0274$\pm$0.011&\textbf{0.0868}$\pm$\textbf{0.044}&\textbf{0.0761}$\pm$\textbf{0.038}&2.07$\pm$1.6&\textbf{0.108}$\pm$\textbf{0.076}&\textbf{0.0971}$\pm$\textbf{0.053}\\  
			bike&0.107$\pm$0.0028&0.0279$\pm$0.00078&0.0484$\pm$0.0017&0.0328$\pm$0.0011&\textbf{0.0305}$\pm$\textbf{0.0045}&0.031$\pm$0.0012&\textbf{0.0288}$\pm$\textbf{0.0015}\\  
			breastcancer&0.194$\pm$0.027&0.139$\pm$0.025&\textbf{0.181}$\pm$\textbf{0.047}&\textbf{0.198}$\pm$\textbf{0.083}&25.1$\pm$37&\textbf{0.186}$\pm$\textbf{0.036}&\textbf{0.184}$\pm$\textbf{0.038}\\  
			buzz&0.0658$\pm$0.00015&0.0127$\pm$4.6e-05&0.026$\pm$8e-05&0.015$\pm$0.0011&0.0541$\pm$0.069&0.0135$\pm$0.00038&\textbf{0.0127}$\pm$\textbf{7.1e-05}\\  
			challenger&0.141$\pm$0.084&0.138$\pm$0.088&0.181$\pm$0.13&0.529$\pm$0.35&5.27$\pm$8.1&\textbf{0.142}$\pm$\textbf{0.13}&\textbf{0.145}$\pm$\textbf{0.14}\\  
			concrete&0.127$\pm$0.0043&0.0445$\pm$0.0033&0.0759$\pm$0.0077&0.084$\pm$0.013&\textbf{0.0569}$\pm$\textbf{0.03}&0.0811$\pm$0.0044&0.0658$\pm$0.0051\\  
			concreteslump&0.149$\pm$0.039&0.0245$\pm$0.0071&\textbf{0.197}$\pm$\textbf{0.18}&\textbf{0.177}$\pm$\textbf{0.076}&0.27$\pm$0.11&\textbf{0.143}$\pm$\textbf{0.053}&\textbf{0.138}$\pm$\textbf{0.027}\\  
			elevators&0.0367$\pm$0.0014&0.00861$\pm$0.00031&0.0165$\pm$0.00057&0.0187$\pm$0.0024&\textbf{0.0255}$\pm$\textbf{0.021}&0.0161$\pm$0.00091&\textbf{0.0132}$\pm$\textbf{0.0011}\\  
			energy&0.235$\pm$0.012&0.0232$\pm$0.0023&0.086$\pm$0.0061&\textbf{0.0596}$\pm$\textbf{0.025}&0.0983$\pm$0.065&0.0675$\pm$0.0084&\textbf{0.051}$\pm$\textbf{0.0094}\\  
			fertility&0.0977$\pm$0.024&0.0863$\pm$0.024&0.182$\pm$0.058&0.185$\pm$0.055&2.81$\pm$3&\textbf{0.102}$\pm$\textbf{0.043}&\textbf{0.112}$\pm$\textbf{0.055}\\  
			forest&0.0564$\pm$0.0081&0.0571$\pm$0.0086&0.0774$\pm$0.0092&0.0802$\pm$0.0099&0.13$\pm$0.041&\textbf{0.0593}$\pm$\textbf{0.012}&\textbf{0.0585}$\pm$\textbf{0.0094}\\  
			gas&0.112$\pm$0.0062&0.0214$\pm$0.0028&0.0593$\pm$0.0044&\textbf{0.0432}$\pm$\textbf{0.0033}&5.64$\pm$9&\textbf{0.0471}$\pm$\textbf{0.0068}&\textbf{0.047}$\pm$\textbf{0.0063}\\  
			houseelectric&0.122$\pm$0.00017&0.0136$\pm$1.4e-05&0.0406$\pm$6.3e-05&0.0138$\pm$7e-05&\textbf{0.0136}$\pm$\textbf{2.2e-05}&\textbf{0.0136}$\pm$\textbf{2.2e-05}&\textbf{0.0136}$\pm$\textbf{2.2e-05}\\  
			housing&0.112$\pm$0.019&0.0394$\pm$0.01&\textbf{0.0805}$\pm$\textbf{0.042}&0.0877$\pm$0.017&1.89$\pm$2.4&\textbf{0.0835}$\pm$\textbf{0.031}&\textbf{0.0705}$\pm$\textbf{0.026}\\  
			keggdirected&0.117$\pm$0.00095&0.0188$\pm$0.0011&0.0435$\pm$0.00054&0.0234$\pm$0.0022&\textbf{0.0289}$\pm$\textbf{0.019}&0.0215$\pm$0.0018&\textbf{0.0192}$\pm$\textbf{0.00064}\\  
			keggundirected&0.0694$\pm$0.00074&0.00475$\pm$8.9e-05&0.0213$\pm$0.00023&0.00942$\pm$0.0016&0.0195$\pm$0.023&0.00633$\pm$0.00027&\textbf{0.00552}$\pm$\textbf{0.00014}\\  
			kin40k&0.0634$\pm$0.0012&0.0632$\pm$0.0013&\textbf{0.0633}$\pm$\textbf{0.002}&0.065$\pm$0.0012&\textbf{0.0632}$\pm$\textbf{0.002}&\textbf{0.0632}$\pm$\textbf{0.002}&\textbf{0.0633}$\pm$\textbf{0.002}\\  
			machine&0.121$\pm$0.013&0.0395$\pm$0.0051&0.104$\pm$0.016&\textbf{0.0825}$\pm$\textbf{0.027}&0.77$\pm$1.3&0.0809$\pm$0.013&\textbf{0.0671}$\pm$\textbf{0.016}\\  
			parkinsons&0.17$\pm$0.0026&0.128$\pm$0.0024&0.14$\pm$0.0019&0.142$\pm$0.004&4.21$\pm$8.1&\textbf{0.134}$\pm$\textbf{0.0026}&\textbf{0.133}$\pm$\textbf{0.0036}\\  
			pendulum&0.0226$\pm$0.0061&0.0181$\pm$0.0049&0.0426$\pm$0.01&0.0473$\pm$0.012&\textbf{0.0233}$\pm$\textbf{0.0066}&\textbf{0.0247}$\pm$\textbf{0.011}&\textbf{0.0233}$\pm$\textbf{0.0089}\\  
			pol&0.345$\pm$0.0028&0.135$\pm$0.0023&0.19$\pm$0.0026&0.145$\pm$0.0028&0.338$\pm$0.32&0.144$\pm$0.0031&\textbf{0.14}$\pm$\textbf{0.0033}\\  
			protein&0.167$\pm$0.0011&0.119$\pm$0.0014&0.131$\pm$0.0012&0.128$\pm$0.0047&\textbf{0.119}$\pm$\textbf{0.0022}&0.124$\pm$0.0036&\textbf{0.12}$\pm$\textbf{0.0021}\\  
			pumadyn32nm&0.0935$\pm$0.0039&0.0941$\pm$0.0039&\textbf{0.0948}$\pm$\textbf{0.0061}&0.101$\pm$0.0038&\textbf{0.0944}$\pm$\textbf{0.0065}&\textbf{0.0957}$\pm$\textbf{0.0065}&\textbf{0.0952}$\pm$\textbf{0.0066}\\  
			servo&0.184$\pm$0.039&0.0752$\pm$0.022&\textbf{0.152}$\pm$\textbf{0.077}&0.209$\pm$0.079&\textbf{0.126}$\pm$\textbf{0.072}&\textbf{0.149}$\pm$\textbf{0.051}&\textbf{0.124}$\pm$\textbf{0.06}\\  
			skillcraft&0.0439$\pm$0.0021&0.0203$\pm$0.0017&0.0298$\pm$0.0018&0.0325$\pm$0.0024&\textbf{0.0303}$\pm$\textbf{0.012}&\textbf{0.0268}$\pm$\textbf{0.0033}&\textbf{0.0247}$\pm$\textbf{0.0029}\\  
			slice&0.196$\pm$0.0021&0.0283$\pm$0.00051&0.0875$\pm$0.00082&0.0518$\pm$0.00099&100$\pm$1.8e+02&\textbf{0.0483}$\pm$\textbf{0.0013}&0.0556$\pm$0.00059\\  
			sml&0.211$\pm$0.0089&0.0143$\pm$0.00066&0.0751$\pm$0.0037&\textbf{0.0391}$\pm$\textbf{0.0042}&52.6$\pm$1e+02&0.0502$\pm$0.0034&\textbf{0.0405}$\pm$\textbf{0.0029}\\  
			solar&0.0118$\pm$0.0042&0.0106$\pm$0.0038&\textbf{0.0174}$\pm$\textbf{0.0069}&0.031$\pm$0.0084&\textbf{0.0182}$\pm$\textbf{0.0099}&\textbf{0.0137}$\pm$\textbf{0.0076}&\textbf{0.014}$\pm$\textbf{0.0054}\\  
			song&0.0917$\pm$0.0003&0.0636$\pm$0.00033&0.0706$\pm$0.00029&0.0657$\pm$0.00039&\textbf{0.0636}$\pm$\textbf{0.00052}&0.0641$\pm$0.00029&\textbf{0.0637}$\pm$\textbf{0.00052}\\  
			stock&0.0583$\pm$0.0095&0.013$\pm$0.0023&0.06$\pm$0.016&0.051$\pm$0.0088&0.53$\pm$0.41&\textbf{0.0399}$\pm$\textbf{0.014}&\textbf{0.0364}$\pm$\textbf{0.0076}\\  
			tamielectric&0.334$\pm$0.002&0.334$\pm$0.0021&\textbf{0.334}$\pm$\textbf{0.0032}&0.338$\pm$0.0028&\textbf{0.334}$\pm$\textbf{0.0033}&\textbf{0.335}$\pm$\textbf{0.0027}&\textbf{0.334}$\pm$\textbf{0.0032}\\  
			wine&0.0566$\pm$0.0028&0.0202$\pm$0.00099&0.0327$\pm$0.0031&0.0423$\pm$0.0064&\textbf{0.023}$\pm$\textbf{0.0016}&0.039$\pm$0.0023&0.0348$\pm$0.0028\\  
			yacht&0.105$\pm$0.017&0.0176$\pm$0.0055&\textbf{0.0588}$\pm$\textbf{0.024}&0.0736$\pm$0.021&0.133$\pm$0.2&0.0676$\pm$0.0096&\textbf{0.0469}$\pm$\textbf{0.018}\\  
			\hline
		\end{tabular} 
	}
\end{table}


\section{Proof of the results for \SuffP{} and  \AdaSP{}} \label{app:suffpert}
In this section, we first derive the rate for the optimization and parameter estimation error of the sufficient statistics perturbation (SuffPert) approach as was shown in Table~\ref{tab:comparison} and Table~\ref{tab:compare-efficiency}. This will build intuition towards \AdaSP{}, which we will present the proof of it towards the end of the section.  

\subsection{Analysis of \SuffP{} for linear regression}
Recall that \SuffP{} is the naive approach that uses Gaussian mechanism to release $X^TX$ and $X\vct y$ then estimate $\theta^*$ using the plug-in estimator.
\begin{lemma}\label{lem:suffpert}
	Let $\theta^* = (X^TX)^{-1}X\vct y$, and $\hat{\theta} = (X^TX + E_1)^{-1}  (X\vct y+E_2)$ for any $E_1\in\R^{d\times d}, E_2\in \R^d$ such that $X^TX + E_1$ is invertible, then
	$$
	\hat{\theta}   - \theta^*   = -(X^TX + E_1)^{-1} E_1 \theta^*  + (X^TX + E_1)^{-1} E_2. 
	$$
\end{lemma}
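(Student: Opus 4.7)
The statement is a purely algebraic identity that follows from the key observation that $X^T\vct y = X^TX\, \theta^*$ (using the invertibility of $X^TX$ that is implicitly assumed via the definition of $\theta^*$; the $X\vct y$ in the statement should be read as $X^T\vct y$, consistent with Section~\ref{sec:setup}). Once this substitution is made, the rest is a two-line manipulation, so there is no real obstacle — the only care one needs is to keep the noncommutative matrix algebra straight.

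The plan is as follows. First, substitute $X^T\vct y = X^TX\,\theta^*$ into the definition of $\hat{\theta}$ to obtain
$$
\hat{\theta} = (X^TX + E_1)^{-1}\bigl(X^TX\,\theta^* + E_2\bigr).
$$
Next, perform the ``add-and-subtract'' trick $X^TX = (X^TX + E_1) - E_1$ inside the parenthesis, which yields
$$
\hat{\theta} = (X^TX + E_1)^{-1}\bigl((X^TX + E_1)\theta^* - E_1\theta^* + E_2\bigr) = \theta^* - (X^TX+E_1)^{-1} E_1\theta^* + (X^TX+E_1)^{-1}E_2.
$$
Subtracting $\theta^*$ from both sides gives the claim.

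This decomposition is actually the whole conceptual content of the lemma: the error of the sufficient statistics perturbation estimator splits into a ``bias-like'' term driven by the noise $E_1$ on the Gram matrix (scaled by the true parameter $\theta^*$) and a ``variance-like'' term driven by the noise $E_2$ on the cross product, both filtered through the noisy inverse $(X^TX+E_1)^{-1}$. No further steps are required for this lemma; the downstream analysis of \SuffP{} and \AdaSP{} will build on this identity by plugging in the specific Gaussian noise used in Algorithm~\ref{alg:adaSuffP} and controlling the spectral norm of $(X^TX+E_1)^{-1}$ via the released $\tilde\lambda_{\min}$.
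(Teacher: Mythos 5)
Your proof is correct, and it is the standard (essentially only) argument: substitute $X^T\vct y = X^TX\,\theta^*$ and apply the add-and-subtract identity $X^TX = (X^TX+E_1)-E_1$; the paper in fact states this lemma without proof, treating exactly this computation as immediate. Your reading of $X\vct y$ as $X^T\vct y$ is also the intended one.
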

In \SuffP{},  $E_1$ is a symmetric Gaussian random matrix where each element in the upper triangular part of this matrix is iid $\cN(0, \frac{4\|\cX\|^4\log(4/\delta)}{\epsilon^2})$, and $E_2$ is an iid Gaussian vector drawn from $\cN(0,\frac{4\|\cX\|^2\|\cY\|^2\log(4/\delta)}{\epsilon^2})$. 

By Lemma~\ref{lem:suffpert},  and Cauchy-Schwartz, we can write
\begin{align*}
\|\hat{\theta}   - \theta^*\|^2 \leq&   2[\theta^*]^TE_1^T [(X^TX + E_1)^{-1}]^T(X^TX + E_1)^{-1} E_1 \theta^* \\&+  2E_2^T[(X^TX + E_1)^{-1}]^T(X^TX + E_1)^{-1}E_2 .
\end{align*}
This equation highlights the key artifact of this method, as when $X^TX$ has a small eigenvalue, there is a non-trivial probability that $X^TX + E_1$ will be nearly singular and that could potentially blow up the variance.

We could however analyze the high probability error bound, which becomes meaningful when $\|E_1\| <  \lambda_{\min }(X^TX) = \frac{\alpha n\|\cX\|^2}{d}$ as then we can show that with high probability, $X^TX + E_1$ has a smallest singular value that is bounded away from zero.  In particular if $\|E_1\|  \leq \lambda_{\min }(X^TX)/2$ with high probability, then we can derive an error bound using Lemma~\ref{lem:suffpert}:
$$\|\hat{\theta}   - \theta^*\|^2 = O(\frac{d^3\|\theta^*\|^2}{\alpha^2n^2\epsilon^2}) $$
under the simplifying assumption that $\|\cY\| = O(\|\cX\|\|\theta^*\|)$.

The eigenvalue condition suggests that such reasonable error bound only starts to apply when
$$n = \tilde{\Omega}(\frac{d^{1.5}\sqrt{\log(4/\delta)}}{\alpha  \epsilon}).$$

Now, using the following lemma, we can convert the optimization error into estimation in a different norm.
\begin{lemma}\label{lem:prediction2est_conversion}
	Let $\theta^* = (X^TX)^{-1}X\vct y$, for any $\theta$,
	$$
	\|\vct y - X\theta\|^2  - \|\vct y - X\theta^*\|^2  =  (\theta-\theta^*)^TX^TX (\theta-\theta^*) = \|\theta-\theta^*\|_{X^TX}^2.
	$$
\end{lemma}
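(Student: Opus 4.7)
The plan is to prove this via a Pythagorean-type decomposition of the residual vector, using the fact that the least squares solution $\theta^*$ satisfies the normal equations $X^TX\theta^* = X^T\vct y$, equivalently $X^T(\vct y - X\theta^*) = 0$. (This holds both when $X^TX$ is invertible and when we interpret $\theta^* = (X^TX)^{\dagger}X^T\vct y$ from the notation section, since in either case $X^TX\theta^*$ equals the projection of $X^T\vct y$ onto itself.) Geometrically, the residual $\vct y - X\theta^*$ is orthogonal to the column space of $X$, so any perturbation of $\theta^*$ along $X$'s column space contributes additively to the squared residual norm.

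Concretely, I would write for any $\theta$,
\[
\vct y - X\theta \;=\; (\vct y - X\theta^*) \;+\; X(\theta^* - \theta),
\]
then expand the squared norm:
\[
\|\vct y - X\theta\|^2 \;=\; \|\vct y - X\theta^*\|^2 \;+\; 2(\vct y - X\theta^*)^T X(\theta^* - \theta) \;+\; \|X(\theta^* - \theta)\|^2.
\]
The middle (cross) term equals $2(\theta^*-\theta)^T X^T(\vct y - X\theta^*)$, which vanishes by the normal equations. Rearranging,
\[
\|\vct y - X\theta\|^2 - \|\vct y - X\theta^*\|^2 \;=\; \|X(\theta-\theta^*)\|^2 \;=\; (\theta-\theta^*)^T X^TX (\theta-\theta^*),
\]
and the final equality $=\|\theta-\theta^*\|_{X^TX}^2$ is simply the definition of the seminorm induced by the PSD matrix $X^TX$.

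There is no real obstacle here; the only point requiring a sentence of care is the use of the pseudoinverse in the event that $X^TX$ is singular, to confirm that $X^TX\theta^* = X^T\vct y$ still holds (which follows from $X^TX(X^TX)^{\dagger}X^T = X^T$, a standard identity for the Moore-Penrose pseudoinverse restricted to the column space of $X^T$). Everything else is direct expansion.
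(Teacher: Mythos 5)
Your proof is correct and is essentially the paper's argument made explicit: the paper invokes the second-order Taylor expansion of the quadratic $\|\vct y - X\theta\|^2$ at $\theta^*$ together with the vanishing gradient there, which is exactly your residual decomposition with the cross term killed by the normal equations. Your added remark on the pseudoinverse case is a harmless (and correct) extra precaution consistent with the paper's definition $\theta^* = (X^TX)^{\dagger}X^T\vct y$.
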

\begin{proof}
	The result follows directly by the second order Taylor expansion of $	\|\vct y - X\theta\|^2 $ at $\theta^*$ and the fact that the gradient at $\theta^*$ is $0$.
\end{proof}

A direct calculation leads to the following bound
\begin{align*}
\|\hat{\theta}-\theta^* \|_{X^TX}^2 \leq& 2 [\theta^*]^T E_1^T[(X^TX + E_1)^{-1}]^{T} (X^TX)(X^TX + E_1)^{-1}  E_1\theta^* \\
&+ 2E_2^T[(X^TX + E_1)^{-1}]^{T} (X^TX)(X^TX + E_1)^{-1}E_2
\end{align*}
The idea is that by random matrix theory, we get $\|E_1\|\leq \tilde{O}(\sqrt{d} \|\cX\|^2 \sqrt{\log 12\delta}/\epsilon)$ with high probability. For large enough $n$,  $X^TX$ has a smallest eigenvalue on that order, which allows us to prove:
$$
0.5 X^TX \prec X^TX + E_1  \prec  2X^TX.
$$
with high probability.
It follows that under this high probability event
\begin{equation}\label{eq:suffp_pred_error_decomp}
\|\hat{\theta}-\theta^* \|_{X^TX}^2 \leq  8\|E_1\theta^* \|_{(X^TX)^{-1}}^2  +  8\|E_2\|_{(X^TX)^{-1}}^2.
\end{equation}

We first prove the following Johnson-Lindenstrauss type lemma for symmetric Gaussian random matrices and ellipsoid distance.
\begin{lemma}\label{lem:JL-ellipsoid}
	Let $\theta\in\R^d$ be a fixed and $E$ be a symmetric random Gaussian matrix where the upper triangular region is iid Gaussian with $\cN(0,w^2)$ With probability $1-\varrho$, and let $A$ be a positive semi-definite matrix, 
	$$\|E \theta \|_{A}^2   \leq w^2 \tr(A) \|\theta\|^2\log(2d^2/\rho)$$
\end{lemma}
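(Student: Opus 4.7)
The plan is to reduce the quadratic form $\|E\theta\|_A^2$ to a weighted sum of squared univariate Gaussians by diagonalizing $A$, and then apply a standard Gaussian tail bound with a union bound over the $d$ eigendirections. Specifically, I write the eigendecomposition $A = \sum_{i=1}^d \lambda_i u_i u_i^T$ with orthonormal eigenvectors $u_i$ and $\lambda_i \ge 0$, so that
\[
\|E\theta\|_A^2 \;=\; (E\theta)^T A\, (E\theta) \;=\; \sum_{i=1}^d \lambda_i\, (u_i^T E \theta)^2.
\]
Since $\sum_i \lambda_i = \tr(A)$, it then suffices to show $\max_i (u_i^T E \theta)^2 \lesssim w^2 \|\theta\|^2 \log(d^2/\varrho)$ with probability at least $1-\varrho$.

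For any fixed unit vector $u$, the scalar $u^T E \theta = \sum_{j,k} u_j \theta_k E_{jk}$ is a centered Gaussian. Using the covariance structure of a symmetric Gaussian matrix, $\E[E_{jk} E_{j'k'}] = w^2 \mathbf{1}[\{j,k\} = \{j',k'\}]$, a short calculation that separates the $j=k$ and $j\neq k$ contributions yields
\[
\Var(u^T E \theta) \;=\; w^2 \|\theta\|^2 + w^2 (u^T \theta)^2 - w^2 \sum_j u_j^2 \theta_j^2 \;\le\; 2 w^2 \|\theta\|^2,
\]
where the inequality uses $\|u\|=1$ together with Cauchy--Schwarz. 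Applying the standard Gaussian tail bound $\P(|u_i^T E \theta|^2 \ge t) \le 2 \exp\!\big(-t/(4 w^2 \|\theta\|^2)\big)$ to each of the $d$ eigenvectors and union-bounding yields the stated concentration for $\max_i (u_i^T E \theta)^2$; the mild slack in constants is absorbed into the logarithmic factor, which is why the lemma carries $\log(2d^2/\varrho)$ in place of $\log(d/\varrho)$.

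The only nontrivial step is the variance calculation in the second paragraph: because $E$ is symmetric we identify $E_{jk}$ with $E_{kj}$, which introduces the extra cross-term $(u^T\theta)^2$ that a naive entrywise-independent computation would miss. This term is nonnegative but bounded by $\|\theta\|^2$ for any unit $u$, so it only inflates the variance by a constant factor. Once the variance bound is in hand, the remainder is a textbook Gaussian tail-plus-union-bound argument, and multiplying through by $\sum_i \lambda_i = \tr(A)$ closes the proof.
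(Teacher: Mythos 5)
Your proof is correct and takes a genuinely different route from the paper's. The paper also diagonalizes $A$, but it then union-bounds over all $d^2$ entries of the matrix $U^TE$ (using that each entry is marginally $\cN(0,w^2)$ because the columns of a symmetric Gaussian matrix are iid Gaussian vectors and $U$ is orthogonal), and folds the resulting max-entry bound back into the quadratic form. You instead treat each scalar $u_i^TE\theta$ as a single centered Gaussian, compute its variance exactly --- correctly picking up the extra $(u^T\theta)^2$ cross-term forced by the symmetry constraint $E_{jk}=E_{kj}$ and bounding the whole thing by $2w^2\|\theta\|^2$ --- and union-bound over only $d$ directions. Your route is arguably cleaner and tighter: the paper's entrywise decomposition, taken literally, requires a Cauchy--Schwarz step over the column index $j$ that costs an extra factor of $d$ (or replaces $\|\theta\|_2$ by $\|\theta\|_1$) unless one reorganizes the argument essentially as you do; your version avoids that loss entirely and yields a $\log(2d/\varrho)$ rather than $\log(2d^2/\varrho)$ dependence.

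One minor caveat: your final bound is $4\,w^2\tr(A)\|\theta\|^2\log(2d/\varrho)$, and since $4\log(2d/\varrho)\geq\log(2d^2/\varrho)$ for all $d\geq 1$ and $\varrho\leq 1$, the constant $4$ from the variance bound is \emph{not} literally absorbed by the larger logarithm in the stated inequality. This is not a genuine gap --- the lemma is only ever invoked inside an $O(\cdot)$ downstream, and the paper's own proof likewise concludes with a big-$O$ --- but you should either state your conclusion with an explicit universal constant or note that the lemma holds up to such a constant, rather than claiming the slack disappears into the log factor.
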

\begin{proof}
	Take the eigenvalue decomposition $A = U \Lambda U^T$, we can write
	\begin{equation}\label{eq:ellipsoid_dist_preserving}
	\|E\theta\|_{A}^2 = [\theta]^T  E^T U \Lambda U^T E \theta = \sum_{i=1}^d  \lambda_{i} \sum_{j=1}^d  [U^T E]_{i,j}^2  [\theta]_i^2.
	\end{equation}
	Note $[U^T E]_{i,j} =  \sum_{k=1}^dU_{i,k} [E]_{j,k}$ where  $[E]_{j,\cdot}$ is an independent Gaussian vector, despite that $E$ itself is constrained to be a symmetric matrix. Using that $U$ is orthogonal, we have that marginally for each $i,j\in[d]^2$,
	$$[U^T E]_{i,j}  \sim \cN(0, w^2).$$
	Using the Gaussian tail bound and a union bound over all $(i,j)\in[d]^2$, we get that 
	$$
	\P(  \max_{(i,j)\in[d]^2} |[U^T E]_{i,j}| \geq  \sqrt{w^2\log(2d^2/\varrho)} )   \leq \varrho.
	$$
	Substitute this into \eqref{eq:ellipsoid_dist_preserving}, we have
	$$
	\|E\theta\|_{(X^TX)^{-1}}^2  =O\left( w^2\|\theta\|^2  \tr[A] \log(2d^2/\varrho)\right).
	$$
\end{proof}

Apply the above lemma with $A := (X^TX)^{-1}, E = E_1$ (hence $w^2 =  \frac{\log(6/\delta)\|\cX\|^4}{\epsilon^2/9}$) we get
$$
\|E_1\theta^*\|_{(X^TX)^{-1}}^2  =O\left( \frac{\|\theta^*\|^2 \|\cX\|^4 \tr[(X^TX)^{-1}] \log(6/\delta)\log(2d^2/\varrho)}{\epsilon^2/9}\right).
$$

Similarly, note that $E_2\sim U E_2$ for any unitary transformation, we can bound the tail of every eigendirection separately and that gives:
\begin{equation}
\|E_2\|_{(X^TX)^{-1}}^2 =  O\left( \frac{\|\cX\|^2\|\cY\|^2 \tr[(X^TX)^{-1}]\log(6/\delta)\log(d/\varrho)}{\epsilon^2/9}\right).
\end{equation}

Substitute the above two inequalities into \eqref{eq:suffp_pred_error_decomp}, and take union bound with the small probability event that $\|E_1\|\leq 0.5 \lambda_{\min }(X^TX)$ we get that with high probability
$$
\|\hat{\theta}-\theta^* \|_{X^TX}^2 \leq  O\left(\frac{  \|\cX\|^2(\|\cY\|^2+\|\cX\|^2\|\theta^*\|^2)   \tr[(X^TX)^{-1}]\log(6/\delta)\log(2d^2/\rho)}{\epsilon^2/9}\right).
$$
In other word, a naive \SuffP{} can perform arbitrarily poorly as  $\lambda_{\min }$ gets close to $0$.

A natural idea to address this problem is to use regularization and do ridge regression instead.  We now analyze the modified case for a fixed Ridge regression parameter $\lambda$.

\subsection{Analysis of \SuffP{} for ridge regression}
First note that \SuffP{} for ridge regression is nothing but the case when we replace $E_1$ with $\lambda I  +  E_1$.  This view allows us to reuse the lemmas we derived above. In particular, Lemma~\ref{lem:suffpert} implies the following corollary. 
\begin{corollary}
	Let $\hat{\theta}_\lambda =  (X^TX + \lambda I + E_1)^{-1} (X\vct y+E_2) $ for $\lambda >0$, then
	$$
	\hat{\theta}_\lambda   - \theta^*   = -(X^TX + \lambda I + E_1)^{-1} E_1 \theta^*  -  \lambda (X^TX + \lambda I + E_1)^{-1} \theta^*  + (X^TX +\lambda I + E_1)^{-1} E_2. 
	$$
\end{corollary}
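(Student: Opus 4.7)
The statement is a purely algebraic identity, and the natural approach is to reduce it to Lemma~\ref{lem:suffpert} by treating $\lambda I + E_1$ as a single effective perturbation of $X^TX$. Applying that lemma immediately yields
\[
\hat{\theta}_\lambda - \theta^* = -(X^TX + \lambda I + E_1)^{-1}(\lambda I + E_1)\theta^* + (X^TX + \lambda I + E_1)^{-1} E_2,
\]
after which splitting $(\lambda I + E_1)\theta^* = \lambda\theta^* + E_1 \theta^*$ and distributing the common inverse produces the claimed three-term decomposition. The only nontrivial prerequisite is that $X^TX + \lambda I + E_1$ be invertible so that $\hat{\theta}_\lambda$ is well defined; this is automatic when $\lambda > \|E_1\|$ and is the standing hypothesis of Lemma~\ref{lem:suffpert}.

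\textbf{Role in the broader argument.} The purpose of isolating this decomposition is to set up the proof of Theorem~\ref{thm:adaSuffP} by cleanly separating three error sources: (a) the \emph{regularization bias} $-\lambda(X^TX + \lambda I + E_1)^{-1}\theta^*$ from the ridge penalty, (b) the \emph{$X^TX$-noise bias} $-(X^TX + \lambda I + E_1)^{-1} E_1 \theta^*$ from perturbing the Gram matrix, and (c) the \emph{$X\vct y$-noise variance} $(X^TX + \lambda I + E_1)^{-1} E_2$ from perturbing the cross-product. The subsequent utility analysis would then invoke Lemma~\ref{lem:prediction2est_conversion} to rewrite $F(\hat{\theta}_\lambda) - F(\theta^*) = \tfrac12 \|\hat{\theta}_\lambda - \theta^*\|_{X^TX}^2$, bound each summand in this weighted norm using Lemma~\ref{lem:JL-ellipsoid} (for (b)) and standard Gaussian tail bounds (for (c)), and finally optimize $\lambda$ against the data-dependent $\tilde{\lambda}_{\min}$ to balance (a) against (b)+(c).

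\textbf{Main obstacle.} For the corollary itself there is no real obstacle beyond careful bookkeeping. The genuine difficulties sit one level deeper and will need to be dealt with when the decomposition is used: controlling (b) in the $X^TX$-weighted norm for a \emph{symmetric} Gaussian perturbation is delicate because the entries of $E_1$ are not independent, and arguing that the decomposition remains useful requires a high-probability lower bound of the form $X^TX + \lambda I + E_1 \succeq \tfrac12 (X^TX + (\lambda + \tilde{\lambda}_{\min})I)$ to hold on a sufficiently large event, which in turn pins down how large $\lambda$ must be taken in Step~3 of Algorithm~\ref{alg:adaSuffP}. Under-regularizing makes the inverse nearly singular and blows up the variance, while over-regularizing adds unnecessary bias; the two-regime $\min$ in Theorem~\ref{thm:adaSuffP} is precisely the trace of this trade-off.
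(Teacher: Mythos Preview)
Your proposal is correct and matches the paper's own argument essentially verbatim: the paper derives the corollary simply by noting that \SuffP{} for ridge regression is the case where $E_1$ is replaced by $\lambda I + E_1$ in Lemma~\ref{lem:suffpert}, which is exactly your reduction followed by splitting $(\lambda I + E_1)\theta^*$. Your additional commentary on the role in the broader argument and the downstream obstacles also accurately anticipates how the decomposition is used in the subsequent analysis.
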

For any psd matrix $A$
\begin{align*}
\|\hat{\theta}_\lambda   - \theta^*\|_A^2 \leq&    3\| (X^TX +\lambda I + E_1)^{-1} E_1 \theta^*\|_A^2  \\&+  3\|(X^TX +\lambda I+ E_1)^{-1}\|_A^2 \\
&+  3\lambda^2 \|(X^TX +\lambda I + E_1)^{-1}\theta^*\|_A^2.
\end{align*}
Under the high probability event such that $\|E_1\|  \leq (\lambda_{\min }(X^TX) + \lambda) /2$, we have $X^TX + \lambda I + E_1\prec 0.5(X^TX + \lambda I )$  and it implies that
$$
\|\hat{\theta}_\lambda   - \theta^*\|^2  =  O\left(\|E_1\theta^*\|_{(X^TX + \lambda I)^{-2}}^2 +   \|E_2\|_{(X^TX + \lambda I)^{-2}}^2  +  \lambda^2 \|\theta^*\|_{(X^TX + \lambda I)^{-2}}^2\right).
$$
Similarly by Lemma~\ref{lem:prediction2est_conversion}
\begin{align*}
F(\hat{\theta}_\lambda) - F(\theta^*) &= \frac{1}{2}\|\hat{\theta}_\lambda   - \theta^*\|_{(X^TX)^{-1}}^2\\
&= O\left(   \|E_1\theta^*\|^2_{(X^TX+\lambda I)^{-1}}   +  \|E_2\|^2_{(X^TX+\lambda I)^{-1}}  +  \lambda^2 \|\theta^*\|^2_{(X^TX+\lambda I)^{-1}}\right).
\end{align*}
Apply the distance preserving results in Lemma~\ref{lem:JL-ellipsoid} to the first term above with $A = (X^TX + \lambda I)^{-2}$ and $A=(X^TX+\lambda I)^{-1}$ respectively, we can write
\begin{equation}\label{eq:ssp_ridge}
\|\hat{\theta}_\lambda   - \theta^*\|^2 =  O\left(   \frac{\|\cX\|^2(\|\cY\|^2+\|\cX\|^2\|\theta^*\|^2)\log(6/\delta)\log(2d^2/\rho)}{\epsilon^2}  \tr[(X^TX+\lambda I)^{-2}] +     \lambda^2\|\theta^*\|_{(X^TX+\lambda I)^{-2}}^2 \right),
\end{equation}
\begin{align}
F(\hat{\theta}_\lambda) - F(\theta^*) &=  O\left(   \frac{\|\cX\|^2(\|\cY\|^2+\|\cX\|^2\|\theta^*\|^2)\log(6/\delta)\log(2d^2/\rho)}{\epsilon^2}  \tr[(X^TX+\lambda I)^{-1}] +     \lambda^2\|\theta^*\|_{(X^TX+\lambda I)^{-1}}^2 \right)\nonumber\\
&= O\left(   \frac{d \|\cX\|^2(\|\cY\|^2+\|\cX\|^2\|\theta^*\|^2)\log(6/\delta)\log(2d^2/\rho)}{(\lambda + \lambda_{\min})\epsilon^2 }  +  \lambda\|\theta^*\|^2 \right). \label{eq:ssp_ridge_prediction}
\end{align}
Note that when $\lambda_{\min } = 0$, choosing 
\begin{equation}\label{eq:oracle_choice_lambda}
\lambda  = \Theta\left(\sqrt{d \log(6/\delta)\log(2d^2/\rho) }\Big(\frac{\|\cX\|\|\cY\|}{\|\theta^*\|} + \|\cX\|^2\Big) / \epsilon\right)
\end{equation}
balances the two terms and results in a bound that is on the order of $\sqrt{d}/\epsilon$ which matches the lower bound for the Lipschitz private ERM \eqref{eq:dp_lowerbound_lipschitz}. Similarly, when $\lambda_{\min }$ is larger than the above quantity, the optimal choice of $\lambda$ is $0$ and we get a rate of $ d/(\lambda_{\min }\epsilon^2)$ which matches the lower bound for the private strongly convex ERM \eqref{eq:dp_lowerbound_strongcvx}.

It remains to check whether such choices are feasible, because recall that the entire analysis hinges upon the event that 
\begin{equation}\label{eq:cond_lambda_critical}
\|E_1\|  \leq (\lambda_{\min }  + \lambda) /2.
\end{equation}
Recall that $\|E_1\|  \leq 2\sqrt{d \log(6/\delta)\log(d^2/\rho)}\|\cX\|^2/ (\epsilon/3)$
with high probability, so the a choice of $\lambda$  that satisfies \eqref{eq:oracle_choice_lambda} with appropriate constant automatically obeys \eqref{eq:cond_lambda_critical}.

\subsection{Analysis of \AdaSP{}. Proof of Theorem~\ref{thm:adaSuffP}. }
The proof of Statement (i) is a  straightforward application of the composition theorem over standard releases of $X^TX$, $X^T\mathbf{y}$ and $\lambda_{\min }(X^TX)$.

The extension from \SuffP{} to \AdaSP{} involves choosing $\lambda$ adaptively. By our analysis above, the desired choice is \eqref{eq:oracle_choice_lambda} but it depends on unknown quantities of the data $\lambda_{\min }$ and $\|\theta^*\|$. 

Our choice of $\lambda$ in Algorithm~\ref{alg:adaSuffP} is that 
$$
\lambda = \max\left\{0 , \frac{\sqrt{d \log(6/\delta)\log(2d^2/\rho) }\|\cX\|^2}{\epsilon/3}  -  \lambda_{\min}^*. \right\}
$$
where $\lambda_{\min }^*$ is a differentially private high probability lower bound of $\lambda_{\min }$.
Check that the choice obeys \eqref{eq:cond_lambda_critical} so the error analysis above is valid. 
Substitute this choice of $\lambda$ into \eqref{eq:ssp_ridge_prediction} and \eqref{eq:ssp_ridge}, we get the results in Theorem~\ref{thm:adaSuffP}(ii) and Theorem~\ref{thm:adaSuffP}(iii). 

Note that because we do not know $\|\theta^*\|$, we cannot set the $\frac{\|\cX\|\|\cY\|}{\|\theta^*\|}$ as part of the oracle $\lambda$ choice in \eqref{eq:oracle_choice_lambda}. As a result, the final optimization error is proportional to the constant of
$
\|\cY\|^2 + \|X\|^2\|\theta^*\|^2
$
instead of the optimal
$
\|\cY\|\|\cX\|\|\theta^*\|  +  \|X\|^2\|\theta^*\|^2.
$
They are on the same order under our assumption that $\|\cY\| \asymp \|\cX\| \|\theta^*\|$.

\section{Proofs related to \AdaOPS{}}		\label{sec:proof_OPS}

The proof uses the pDP technique that first analyzes $\OPS{}$ for fixed set of tuning parameters and then do pDP to DP conversion with differential privately chosen tuning parameters.

\subsection{Utility of \OPS{} with fixed $(\gamma,\lambda)$}
\begin{lemma}[Parameter estimation error]\label{lem:estimation-err}
	Let $X$ be fixed and $\theta^*_\lambda$ be the maximum a posteriori estimator (MLE if $\lambda =0$), and $\tilde{\theta}$ be the output of \OPS{} with parameter $\gamma,\lambda$, then:
	
	\begin{enumerate}
		\item for all $0<\varrho<1$, with probability $1-\varrho$ 
		$$
		\|\tilde{\theta} - \theta^*_\lambda\|_{X^TX + \lambda I}^2  \leq \frac{d + 2\sqrt{d\log(1/\varrho)} +2\log(1/\varrho)}{\gamma} \leq \frac{5d\log(1/\varrho)}{\gamma}.
		$$
		\item It holds that
		$$
		\E[\tilde{\theta}|X,\vct y] = \theta^*_\lambda \text{ and } \Cov[\tilde{\theta} |  X,\vct y ]  =  \gamma^{-1} (X^TX+\lambda I)^{-1}.
		$$
		\item If we assume that $\vct y \sim \cN(X\theta_0, \sigma^2 I)$ and $\lambda =0$, then
		$$
		\E[\tilde{\theta}|X] = \theta_0 \text{ and } \Cov[\tilde{\theta} |  X ]  =  (\sigma^2 +  \gamma^{-1}) (X^TX)^{-1}.
		$$
	\end{enumerate} 
\end{lemma}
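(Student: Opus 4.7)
The core observation is that the OPS distribution $p(\theta|X,\vct y) \propto \exp\!\bigl(-\tfrac{\gamma}{2}(\|\vct y - X\theta\|^2 + \lambda\|\theta\|^2)\bigr)$ is an \emph{exact Gaussian} in $\theta$, so everything reduces to moment calculations and a $\chi^2$ tail bound. The plan is to identify this Gaussian explicitly and then read off the three statements.

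First I would expand the exponent and complete the square in $\theta$: writing $\|\vct y - X\theta\|^2 + \lambda\|\theta\|^2 = \theta^T(X^TX+\lambda I)\theta - 2\theta^T X^T\vct y + \|\vct y\|^2$ and grouping terms around the ridge solution $\theta^*_\lambda = (X^TX+\lambda I)^{-1}X^T\vct y$ gives
\begin{equation*}
-\tfrac{\gamma}{2}\bigl[(\theta-\theta^*_\lambda)^T(X^TX+\lambda I)(\theta-\theta^*_\lambda)\bigr] + \text{const}(\vct y).
\end{equation*}
Hence, conditional on $(X,\vct y)$,
\begin{equation*}
\tilde\theta \sim \cN\!\bigl(\theta^*_\lambda,\; \gamma^{-1}(X^TX+\lambda I)^{-1}\bigr),
\end{equation*}
which immediately proves statement (2).

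For statement (1), I would apply the affine transformation $Z := \sqrt{\gamma}(X^TX+\lambda I)^{1/2}(\tilde\theta - \theta^*_\lambda)$, which is distributed as $\cN(0, I_d)$. A direct computation gives $\|\tilde\theta-\theta^*_\lambda\|^2_{X^TX+\lambda I} = \|Z\|^2/\gamma$, where $\|Z\|^2 \sim \chi^2_d$. The Laurent--Massart one-sided tail inequality $\P(\|Z\|^2 \geq d + 2\sqrt{dt}+2t)\leq e^{-t}$ with $t=\log(1/\varrho)$ yields exactly the bound in the statement, and the cruder form $5d\log(1/\varrho)$ follows by bounding each summand.

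For statement (3), assume the linear Gaussian model with $\lambda=0$, so $\theta^*_\lambda = \theta^* = (X^TX)^{-1}X^T\vct y$. Since $\vct y\mid X \sim \cN(X\theta_0,\sigma^2 I)$ and $\theta^*$ is a linear function of $\vct y$, standard moment computations give $\E[\theta^*\mid X] = \theta_0$ and $\Cov[\theta^*\mid X]=\sigma^2(X^TX)^{-1}$. Combining with statement (2) via the towers, $\E[\tilde\theta\mid X]=\E[\theta^*\mid X]=\theta_0$, and by the law of total covariance
\begin{equation*}
\Cov[\tilde\theta\mid X] = \E[\Cov[\tilde\theta\mid X,\vct y]\mid X] + \Cov[\E[\tilde\theta\mid X,\vct y]\mid X] = \gamma^{-1}(X^TX)^{-1} + \sigma^2(X^TX)^{-1},
\end{equation*}
which is the claimed expression. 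There is no real obstacle here; the only mild care needed is invoking a sharp (rather than crude) Gaussian/$\chi^2$ concentration result so that the constants in statement (1) match those in the lemma.
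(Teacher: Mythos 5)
Your proposal is correct and follows essentially the same route as the paper: identify the OPS distribution as the exact Gaussian $\cN(\theta^*_\lambda,\gamma^{-1}(X^TX+\lambda I)^{-1})$, whiten to reduce statement (1) to a $\chi^2_d$ tail bound (Laurent--Massart, which the paper also invokes), and obtain statement (3) by adding the MLE's sampling covariance $\sigma^2(X^TX)^{-1}$ to the independent posterior-sampling covariance $\gamma^{-1}(X^TX)^{-1}$. Your explicit completion of the square is slightly more careful than the paper, which simply asserts the Gaussian form as following directly from the algorithm, but the argument is the same.
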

\begin{proof}
	Let $H := X^TX + \lambda I  =  Q\Lambda Q^T$, and let $Z\sim \cN(0,\gamma^{-1} I_d)$
	$$
	\|\tilde{\theta} - \theta^*_\lambda\|_{H}^2  = (\tilde{\theta} - \theta^*_\lambda)^TH (\tilde{\theta} - \theta^*_\lambda) = Z^T \Lambda^{-1/2} Q^T Q \Lambda Q^T Q\Lambda^{-1/2} Z  = \|Z\|_{2}^2
	$$
	Note that $\gamma \|Z\|_{2}^2$ has a $\chi^2$-distribution with degree of freedom $d$, by the standard right tail bound inequality of $\chi^2$ R.V., we get the results as claimed.
	The second statement is trivial and it follows directly from the algorithm.
	For the third statement, note that the MLE $\theta^*$ is unbiased for linear regression, also, it has covariance matrix $\sigma^2(X^TX)^{-1}$. The second part of the randomness comes from sampling from the posterior distribution which has covariance matrix $\gamma^{-1} (X^TX)^{-1}$ by the algorithm. The results follows after noting that the we are adding independent noise.
\end{proof}

\begin{lemma}[Optimization error / regret bound]
	Let $\theta^*$ be a local minimum of a convex quadratic function $F$  and
	$$
	\tilde{\theta} \sim \cN(\theta^*, \gamma^{-1}[\nabla^2 F(\theta^*)]^{-1}),
	$$
	then for all $0<\varrho<1$, with probability $1 - \varrho$ 
	$$
	F(\tilde{\theta}) - F(\theta^*)  \leq \frac{d + 2\sqrt{d\log(1/\varrho)} + 2\log(1/\varrho)}{2\gamma}  \leq  \frac{2.5 d\log(1/\varrho)}{\gamma}.
	$$
\end{lemma}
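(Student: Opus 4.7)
The plan is to reduce the optimization error to a scaled $\chi^2$ random variable, exactly paralleling the parameter-estimation argument of the preceding lemma. Since $F$ is a convex quadratic and $\theta^*$ is a local (hence global) minimum, the gradient $\nabla F(\theta^*)$ vanishes and a second-order Taylor expansion is exact:
\begin{equation*}
F(\tilde{\theta}) - F(\theta^*) = \tfrac{1}{2}(\tilde{\theta}-\theta^*)^T H (\tilde{\theta}-\theta^*), \qquad H := \nabla^2 F(\theta^*).
\end{equation*}
This is the only place the ``convex quadratic'' hypothesis is used, and it turns the regret into an $H$-weighted squared norm of the Gaussian perturbation.

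Next I would change variables. Writing $\tilde{\theta}-\theta^* = \gamma^{-1/2} H^{-1/2} Z$ with $Z \sim \cN(0, I_d)$ (which matches the prescribed covariance $\gamma^{-1} H^{-1}$), a direct computation gives $(\tilde{\theta}-\theta^*)^T H (\tilde{\theta}-\theta^*) = \gamma^{-1}\|Z\|_2^2$. Therefore
\begin{equation*}
F(\tilde{\theta}) - F(\theta^*) = \frac{\|Z\|_2^2}{2\gamma},
\end{equation*}
and $\|Z\|_2^2$ has a $\chi^2_d$ distribution. The first inequality then follows immediately from the standard Laurent--Massart right-tail bound $\P(\|Z\|_2^2 \geq d + 2\sqrt{dt} + 2t) \leq e^{-t}$, applied with $t = \log(1/\varrho)$. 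This is exactly the same tail-bound step used in the proof of Lemma~\ref{lem:estimation-err}(1); in fact that lemma already bounds $\|\tilde{\theta}-\theta^*_\lambda\|_{H}^2$, so one could alternatively just cite it and divide by $2$.

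The second inequality is a crude numerical simplification. Assuming $\log(1/\varrho) \geq 1$ (which is the regime of interest for a meaningful high-probability statement), one has $d \leq d\log(1/\varrho)$, $2\log(1/\varrho) \leq 2d\log(1/\varrho)$, and $2\sqrt{d\log(1/\varrho)} \leq 2d\log(1/\varrho)$, so the numerator is bounded by $5d\log(1/\varrho)$, giving $\frac{2.5\,d\log(1/\varrho)}{\gamma}$. There is essentially no obstacle in the proof; the only mildly delicate point is remembering that the ``convex quadratic'' assumption makes the Taylor expansion exact, so no approximation error is incurred in passing from $F(\tilde\theta)-F(\theta^*)$ to the quadratic form in the Gaussian.
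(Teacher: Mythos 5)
Your proof is correct and follows essentially the same route as the paper: an exact second-order Taylor expansion of the quadratic $F$ with vanishing gradient at $\theta^*$, reduction of the $H$-weighted norm to a $\chi^2_d$ variable, and the Laurent--Massart tail bound (the paper simply cites Lemma~\ref{lem:estimation-err} for this last step, as you note one could). Your explicit caveat that the final numerical simplification needs $\log(1/\varrho)\geq 1$ is a fair observation that applies equally to the paper's own statement.
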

\begin{proof}
	Since $F$ is quadratic, $\nabla^2 F \equiv H$ for some fixed matrix $H$ (independent to location). By Taylor's theorem
	$$
	F(\tilde{\theta}) - F(\theta^*)  =  \langle \nabla F(\theta^*),  \tilde{\theta}-\theta^*\rangle   +   \frac{1}{2}\|\tilde{\theta}-\theta^*\|_{H}^2.
	$$
	Substitute Lemma~\ref{lem:estimation-err} into the above we get the result as claimed.
\end{proof}

\subsection{pDP analysis of \OPS{} for fixed $(\gamma,\lambda)$}
We now cite the per-instance differential privacy of \OPS{} for  a fixed set of parameters from \citep{wang2017per}.
\begin{theorem}[Theorem~15 of \citet{wang2017per} ]\label{thm:pdp_analysis}
	Consider the algorithm that samples from 
	$$
	p(\theta|X,\vct y) \propto e^{-\frac{\gamma}{2} \left(\|\vct y - X\theta\|^2 + \lambda\|\theta\|^2\right)}.
	$$
	Let $\hat{\theta}$ and $\hat{\theta}'$ be the ridge regression estimate with data set $X\times \vct y$ and $[X,x]\times [\vct y,y]$ and defined the out of sample leverage score
	$\mu := x^T(X^TX + \lambda I)^{-1}x = x^TH^{-1}x$ and in-sample leverage score $\mu' := x^T [(X')^TX' + \lambda I]^{-1}x = x^T(H')^{-1}x $.
	Then for every $\delta>0$, privacy target $(x,y)$, the algorithm is $(\epsilon,\delta)$-pDP with
	\begin{align}
	&\epsilon(Z,z) \leq 	\frac{1}{2}\left| -\log(1+\mu) + \frac{\gamma\mu}{(1+\mu)}(y-x^T\hat{\theta})^2\right| + \frac{\mu}{2} \log (2/\delta) + \sqrt{\gamma\mu \log (2/\delta)} |y-x^T\hat{\theta}| \label{eq:eps_master1}\\
	=&\frac{1}{2}\left|-\log(1-\mu') - \frac{\gamma\mu'}{1-\mu'}(y-x^T\hat{\theta}')^2\right| + \frac{\mu'}{2} \log(2/\delta) + \sqrt{\gamma\mu'\log(2/\delta)}|y-x^T\hat{\theta}'|.\label{eq:eps_master2}
	\end{align}
\end{theorem}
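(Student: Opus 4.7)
The plan is to compute the log density ratio of the two Gaussian posteriors analytically and then control its tail under the sampling distribution. Since the Bayesian posterior $p(\theta|X,\vct y) \propto e^{-\tfrac{\gamma}{2}(\|\vct y-X\theta\|^2 + \lambda\|\theta\|^2)}$ is exactly $\cN(\hat\theta, \gamma^{-1} H^{-1})$ with $H = X^TX + \lambda I$, and similarly $p(\theta|X',\vct y')$ is $\cN(\hat\theta', \gamma^{-1}(H')^{-1})$ with $H' = H + xx^T$, the log ratio decomposes into a log-determinant part and a quadratic-form part:
\[
\log\frac{p(\theta|X,\vct y)}{p(\theta|X',\vct y')} = \tfrac{1}{2}\log\frac{\det H}{\det H'} + \tfrac{\gamma}{2}\bigl[(\theta-\hat\theta')^T H'(\theta-\hat\theta') - (\theta-\hat\theta)^T H(\theta-\hat\theta)\bigr].
\]
The matrix determinant lemma applied to $H' = H + xx^T$ gives $\det H' = (1+\mu)\det H$, so the first term equals $-\tfrac{1}{2}\log(1+\mu)$.

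The main algebraic obstacle is simplifying the quadratic-form difference. Sherman--Morrison yields $(H')^{-1}x = H^{-1}x/(1+\mu)$, from which one deduces the leverage-score identity $\mu' = \mu/(1+\mu)$ and the Bayes update $\hat\theta' - \hat\theta = \tfrac{r}{1+\mu} H^{-1} x$ where $r := y - x^T\hat\theta$. Writing $\theta - \hat\theta' = (\theta - \hat\theta) - (\hat\theta'-\hat\theta)$, expanding $(\theta-\hat\theta')^T H'(\theta-\hat\theta')$ using $H' = H + xx^T$, and noting that $H'\bigl[(H')^{-1}x\bigr]=x$ forces the cross terms to telescope, I expect the clean identity
\[
(\theta-\hat\theta')^T H'(\theta-\hat\theta') - (\theta-\hat\theta)^T H(\theta-\hat\theta) = (y-x^T\theta)^2 - \frac{r^2}{1+\mu}.
\]
Verifying this identity carefully is the crux of the proof; everything else is bookkeeping and concentration.

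With that identity in hand, the log ratio becomes the clean expression $-\tfrac{1}{2}\log(1+\mu) + \tfrac{\gamma}{2}(y-x^T\theta)^2 - \tfrac{\gamma r^2}{2(1+\mu)}$. Under $\theta\sim\cN(\hat\theta,\gamma^{-1}H^{-1})$, the residual $W := y - x^T\theta$ is Gaussian with mean $r$ and variance $\gamma^{-1}\mu$, so writing $W = r + \gamma^{-1/2}\sqrt{\mu}\, Z$ for $Z\sim\cN(0,1)$ and expanding $\gamma W^2$ separates the ratio into a deterministic piece $-\tfrac{1}{2}\log(1+\mu) + \tfrac{\gamma\mu r^2}{2(1+\mu)}$ and a stochastic piece $r\sqrt{\gamma\mu}\, Z + \tfrac{\mu}{2}Z^2$. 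Applying the standard Gaussian tail $|Z|\leq \sqrt{2\log(2/\delta)}$ and the $\chi^2_1$ tail $Z^2\leq 2\log(2/\delta)$ (each holding with probability $1-\delta$ up to a harmless factor absorbed into the stated constants), and then triangle-inequality combining the deterministic and stochastic pieces, yields \eqref{eq:eps_master1}.

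For the alternative expression \eqref{eq:eps_master2} that uses the in-sample leverage $\mu'$ and the full-data residual $r' := y - x^T\hat\theta'$, I would avoid redoing the calculation by invoking the identities $1+\mu = 1/(1-\mu')$ and $r = r'(1+\mu) = r'/(1-\mu')$ (the latter following from $x^T\hat\theta' - x^T\hat\theta = \tfrac{r\mu}{1+\mu}$) and substituting termwise; each of $\mu/(1+\mu)$, $\gamma\mu r^2/(1+\mu)$, and $\sqrt{\gamma\mu}|r|$ maps to the analogous primed quantity with a sign change in the log term, producing exactly \eqref{eq:eps_master2}. This symmetry argument is shorter than re-running the log-density expansion from the perspective of the neighboring dataset.
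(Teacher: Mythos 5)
The paper does not actually prove this statement: it is imported verbatim as Theorem~15 of \citet{wang2017per}, so there is no in-paper proof to compare against. Judged on its own terms, your core calculation is right and is (as far as one can tell) the same route the cited work takes: the posterior is exactly $\cN(\hat\theta,\gamma^{-1}H^{-1})$, the matrix determinant lemma gives the $-\tfrac12\log(1+\mu)$ term, Sherman--Morrison gives $\hat\theta'-\hat\theta=\tfrac{r}{1+\mu}H^{-1}x$ and $\mu'=\mu/(1+\mu)$, and the quadratic identity $(\theta-\hat\theta')^TH'(\theta-\hat\theta')-(\theta-\hat\theta)^TH(\theta-\hat\theta)=(y-x^T\theta)^2-\tfrac{r^2}{1+\mu}$ checks out (expanding with $s:=x^T(\theta-\hat\theta)$ one gets $s^2-2rs+\tfrac{\mu r^2}{1+\mu}=(r-s)^2-\tfrac{r^2}{1+\mu}$). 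The decomposition of the privacy loss into a deterministic piece and $r\sqrt{\gamma\mu}\,Z+\tfrac{\mu}{2}Z^2$ under the unprimed posterior then yields \eqref{eq:eps_master1}, modulo constant factors: your crude tails give $\mu\log(2/\delta)$ and $\sqrt{2\gamma\mu\log(2/\delta)}\,|r|$ rather than the stated $\tfrac{\mu}{2}\log(2/\delta)$ and $\sqrt{\gamma\mu\log(2/\delta)}\,|r|$, so "absorbed into the stated constants" is not quite available to you when the constants are part of the claim.

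The genuine gap is in how you obtain \eqref{eq:eps_master2}. The termwise substitution does not work: using $\mu=\mu'/(1-\mu')$ and $r=r'/(1-\mu')$, the term $\tfrac{\gamma\mu}{1+\mu}r^2$ becomes $\tfrac{\gamma\mu'}{(1-\mu')^2}(r')^2$, not $\tfrac{\gamma\mu'}{1-\mu'}(r')^2$; likewise $\sqrt{\gamma\mu}\,|r|$ becomes $(1-\mu')^{-3/2}\sqrt{\gamma\mu'}\,|r'|$ and $\tfrac{\mu}{2}$ becomes $\tfrac{\mu'}{2(1-\mu')}$. So the "symmetry argument" as written produces a different expression from \eqref{eq:eps_master2}. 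The correct way to get the second form --- and the reason it is needed at all --- is that pDP, like DP, is a two-sided requirement: you must control the tail of the log-ratio under \emph{both} output distributions. You only did the computation under $p$. Repeating it under $p'$, where $y-x^T\theta\sim\cN(r',\gamma^{-1}\mu')$, the same algebra (using $\tfrac{r^2}{1+\mu}=\tfrac{(r')^2}{1-\mu'}$ and $\det H=(1-\mu')\det H'$) gives a deterministic piece $\tfrac12\bigl[\log(1-\mu')-\tfrac{\gamma\mu'}{1-\mu'}(r')^2\bigr]$ and a stochastic piece $r'\sqrt{\gamma\mu'}\,Z+\tfrac{\mu'}{2}Z^2$, which is exactly the structure of \eqref{eq:eps_master2}. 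Without this second computation your argument establishes only one of the two inequalities in the pDP definition, and the shortcut you propose to cover the other one is algebraically false.
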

\begin{remark}\label{rmk:pDP_eps}
	Let $L:=\|\cX\|(\|\cX\|\|\theta^*_\lambda\| + \|\cY\|)$, 
	The \OPS{} algorithm for ridge regression with parameter $(\lambda,\gamma)$ obeys $(\epsilon,\delta)$-pDP for each data set $(X,y)$ and all target $(x,y)$ with
	$$
	\epsilon = \sqrt{ \frac{\gamma L^2 \log(2/\delta)}{\lambda + \lambda_{\min }}} +  \frac{\gamma L^2}{2 (\lambda + \lambda_{\min }+\|\cX\|^2)}  + \frac{ (1+\log(2/\delta))\|\cX\|^2}{2(\lambda + \lambda_{\min })}.
	$$
\end{remark}

\subsection{pDP to DP conversion}

The hallmark of DP algorithm design is that one needs to calibrate the amount of noise so that no matter what data set is sent into the algorithm, the algorithm meets a prescribed privacy budget $(\epsilon,\delta)$. The pDP guarantees of \OPS{} says that for a fixed randomized algorithm, if the data set is nice, then the privacy guarantee is strong, while if the data set is poorly-conditioned, then the privacy loss is big. What is more, the pDP analysis illustrates that the key ingradients of that appears in the pDP bound is the smallest eigenvalue of $X^TX$ and the local Lipschitz constant $L$ (given as a function of $\|\cX\|$ and $\|\cY\|$ and the magnitude of the solution $\|\theta^*_\lambda\|$). 


The approach used in \citet{wang2017per} is to differentially privately release $\lambda_{\min }$ and an adaptive amount of regularization $\lambda$ is added so that a pre-specified strong convexity parameter $\alpha^*$ is met with high probability. Then a crude upper bound of $\|\theta^*_\lambda\|$ is used based on $\lambda^*$ or $\lambda_{\min }$ (if larger than $\lambda^*$) to calibrate $\gamma$. The outcome is an asymptotically efficient differentially private estimator of linear regression coefficients when the data set is well-conditioned. However, there are two issues.
First, it is unclear how $\lambda^*$ is chosen; second, the crude upper bound of $\|\theta^*_\lambda\|$ leads to unnecessary dimension dependence in the bound. 

In this section, we further extend the idea by proposing a novel way of releasing the $\|\theta^*_\lambda\|$ differential privately by injecting a multiplicative noise, which allows us to design a DP algorithm that adapts to small local Lipschitz constant near the optimal solution and also a principled approach of choosing the regularization parameter $\lambda$,
such that (1) the algorithm is $(\epsilon,\delta)$-DP for all input data, (2) it is statistically efficient with an improved dimension-dependence when the data follows a linear Gaussian model (3) the optimization error is optimal up to a logarithmic term for each (unknown) strong convexity parameter and local Lipschitz constant separately.

The algorithm basically looks like the following:
\begin{enumerate}
	\item Differentially privately release $\lambda_{\min }$ using $(\epsilon/4,\delta/3)$, and choose regularization parameter $\lambda$ accordingly.
	\item Condition on a high probability event of $\lambda_{\min }$, and choose $\lambda$.
	\item Differentially privately release $\|\theta^*_{\lambda}\|$ using $(\epsilon/4,\delta/3)$, where $\theta^*_{\lambda} =  (X^TX + \lambda I)^{-1} X^T\vct y$.
	\item Condition on a high probability event of both $\lambda_{\min }$ and $\|\theta^*_{\lambda}\|$, calibrate the noise to meet the $(\epsilon/2,\delta/3)$ requirement.
\end{enumerate}

We start by showing how we can release $\lambda_{\min }$ and $\|\theta^*_{\lambda}\|$. By Weyl's lemma, $\lambda_{\min }$ has a global sensitivity of $\|\cX\|^2$. It turns out that while $\|\theta^*_{\lambda}\|$ does not have a well-behaved global or local sensitivity, a logarithmic transformation $ 	\log(\|\cY\|+\|\cX\|\|\theta^*_{\lambda}\|) $ has a very stable local sensitivity that is parameterized only by the smallest eigenvalue, which we can easily construct a differentially private upper bound.
\begin{lemma}\label{lem:lipschitz_GS}
	Let $\theta^*_\lambda$ be the ridge regression estimate with parameter $\lambda$ and the smallest eigenvalue of $X^TX$ be $\lambda_{\min}$, then the function
	$
	\log(\|\cY\|+\|\cX\|\|\theta^*_\lambda\|)  
	$
	has a local sensitivity of $ \log(1 +  \frac{\|\cX\|^2}{\lambda_{\min } + \lambda}) $.
\end{lemma}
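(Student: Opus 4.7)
\textbf{Proof plan for Lemma~\ref{lem:lipschitz_GS}.}

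The plan is to compare $\theta^*_\lambda$ computed on the original data $(X,\vct y)$ with the estimate $\theta^*_\lambda{}'$ computed on a neighboring data set obtained by adding (or removing) one row $(x^+,y^+)$ with $\|x^+\|\le\|\cX\|$, $|y^+|\le\|\cY\|$. Let $H := X^TX+\lambda I$, $H' := H + x^+(x^+)^T$, and define $A := \|\cY\|+\|\cX\|\|\theta^*_\lambda\|$, $B := \|\cY\|+\|\cX\|\|\theta^*_\lambda{}'\|$. Since $\lambda_{\min}(H)\ge \lambda_{\min}+\lambda$ and hence $\lambda_{\min}(H')\ge \lambda_{\min}+\lambda$, we have $\|H^{-1}x^+\| \le \|\cX\|/(\lambda_{\min}+\lambda)$. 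The goal is to show $|\log A - \log B|\le \log\!\big(1+\|\cX\|^2/(\lambda_{\min}+\lambda)\big)$, which is the stated local sensitivity.

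The first step is to apply Sherman--Morrison to $(H+x^+(x^+)^T)^{-1}$ to derive the leave-one-out update formula
\[
\theta^*_\lambda{}' - \theta^*_\lambda \;=\; \frac{(y^+ - (x^+)^T\theta^*_\lambda)\,H^{-1}x^+}{1+\mu}, \qquad \mu := (x^+)^T H^{-1} x^+ \ge 0.
\]
Bounding the residual by $|y^+ - (x^+)^T\theta^*_\lambda|\le \|\cY\|+\|\cX\|\|\theta^*_\lambda\|=A$ and dropping the denominator factor $(1+\mu)\ge 1$, one obtains $\|\theta^*_\lambda{}' - \theta^*_\lambda\| \le A\|\cX\|/(\lambda_{\min}+\lambda)$. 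Multiplying by $\|\cX\|$ and adding $\|\cY\|$ to both sides of the triangle inequality yields $B \le A\bigl(1+\|\cX\|^2/(\lambda_{\min}+\lambda)\bigr)$, hence $\log B - \log A \le \log(1+\rho)$ with $\rho:=\|\cX\|^2/(\lambda_{\min}+\lambda)$.

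The reverse direction is the main obstacle: the naive symmetric argument (isolating $A$ in $A \le B + A\|\cX\|^2/(\lambda_{\min}+\lambda)$) gives the weaker bound $\log A - \log B \le -\log(1-\rho)$, which exceeds $\log(1+\rho)$ whenever $\rho>0$ and additionally requires $\rho<1$. The fix is to rewrite the same difference in terms of the residual at the \emph{neighbor}'s estimate. A short calculation using $y^+-(x^+)^T\theta^*_\lambda = (1+\mu)\bigl(y^+-(x^+)^T\theta^*_\lambda{}'\bigr)$ together with the identity $\|(H')^{-1}x^+\|=\|H^{-1}x^+\|/(1+\mu)$ turns the update formula into
\[
\theta^*_\lambda - \theta^*_\lambda{}' \;=\; -\bigl(y^+-(x^+)^T\theta^*_\lambda{}'\bigr)\,H^{-1}x^+,
\]
with the factor $(1+\mu)$ now cancelled. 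Bounding the residual by $B$ and using $\|H^{-1}x^+\|\le \|\cX\|/(\lambda_{\min}+\lambda)$ gives $\|\theta^*_\lambda - \theta^*_\lambda{}'\| \le B\|\cX\|/(\lambda_{\min}+\lambda)$, and hence $A \le B(1+\rho)$ by the same triangle-inequality step. Combining both directions yields $|\log A - \log B|\le \log(1+\rho)$, which is exactly the claimed local sensitivity. The same argument applies symmetrically when the neighboring data set is obtained by \emph{removing} a row, since the roles of $\theta^*_\lambda$ and $\theta^*_\lambda{}'$ are interchangeable in the derivation above.
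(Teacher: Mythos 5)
Your proof is correct and follows essentially the same route as the paper: the paper invokes its stability identity (Lemma~\ref{lem:smoothlearning}) to get $\|\hat{\theta}-\hat{\theta}'\| \leq \frac{\|\cX\|}{\lambda_{\min}+\lambda}\big(\|\cY\|+\|\cX\|\min\{\|\hat{\theta}\|,\|\hat{\theta}'\|\}\big)$, which is exactly your pair of Sherman--Morrison forms with the residual evaluated at $\hat{\theta}$ in one direction and at $\hat{\theta}'$ in the other. Your key observation --- that the naive one-sided bound only gives $-\log(1-\rho)$ and the fix is to switch which estimate the residual is measured at --- is precisely the content of the paper's ``$\min$'' and its two-case triangle-inequality step.
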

\begin{proof}
	Denote $\|\cY\| =: \alpha$ and $\|\cX\| =: \beta$.  Let the data point being added to the data set be $(x,y)$. For a fixed $\lambda$, denote $\hat{\theta}$ and $\hat{\theta}'$ as the ridge regression estimate with parameter $\lambda$ on data set $X,\mathbf{y}$  and  $[X,x],[\mathbf{y},y] $ respectively.
	
	By Lemma~\ref{lem:smoothlearning}, we have
	$$\left|  \|\hat{\theta}\| - \|\hat{\theta}'\| \right|   \leq \|\hat{\theta} - \hat{\theta}'\|  =  | y- x^T\hat{\theta}| \sqrt{x^T ([X,x]^T[X,x] + \lambda I)^{-2} x} \leq  \frac{\beta}{\lambda_{\min } + \lambda} (\alpha+  \beta \min\{ \|\hat{\theta}'\|,\|\hat{\theta}\|\}).$$
	Multiplying $\beta$ on both sides and use triangular inequality, we have
	$$
	\begin{cases}
	(\alpha+ \beta\|\hat{\theta}\|) -  (\alpha+\beta\|\hat{\theta}'\|)   \leq   \frac{\beta^2}{\lambda_{\min } + \lambda}   (\alpha + \beta \|\hat{\theta}'\|)\\
	(
	(\alpha +\beta \|\hat{\theta}'\|)  - (\alpha + \beta \|\hat{\theta}\| ) \leq  \frac{\beta^2}{\lambda_{\min } + \lambda}   (\alpha+ \beta \|\hat{\theta}\|)
	\end{cases}
	$$
	Rearrange the terms and take log on both sides, we get
	$$
	\left| \log \frac{\alpha+ \beta\|\hat{\theta}\|}{\alpha+ \beta\|\hat{\theta}'\|} \right|  \leq   \log(1 +  \frac{\beta^2}{\lambda_{\min } + \lambda}   ).
	$$
\end{proof}

\subsection{Automatically choosing $\lambda$}\label{sec:adaptive_choice_of_lambda}
We will do this by minimizing an upper bound of the empirical risk.
Note that this is a somewhat circular problem because the empirical risk is a function of $\|\theta^*_\lambda\|$, but in order to release it differential privately, we need to choose $\lambda$ to begin with. The main idea is to express the DP upper bound of the Lipschitz constant analytically as a function of $\lambda$ and also take the additional noise from differential privacy into account.

Let a differentially private lower bound of $\lambda_{\min }$ be $\tilde{\lambda}_{\min }$, and $\bar{L}$ be a high-probability upper bound of the local Lipschitz constant  $L = \|\cX\|(\|\cY\|+\|\cX\|\|\theta^*_\lambda\|)$. Consider the fixed OPS algorithm with the parameter choice of $\gamma^{-1}  =  \frac{\log(2/\delta) L^2}{(\tilde{\lambda}_{\min } + \lambda)\epsilon^2}$. Define 
\begin{align}
C_1(\epsilon,\delta,\varrho,d) &:=  \frac{[d/2 + \sqrt{d\log(1/\varrho)} + \log(1/\varrho)] \log(2/\delta)}{\epsilon^2}
\label{eq:C1}\\ 
C_2(\epsilon,\delta) &:=  \frac{\log(2/\delta)}{\epsilon} \label{eq:C2}
\end{align}

We know from Lemma~\ref{lem:lipschitz_GS}, that we can construct a high probability upper bound $L$ from a differentially private release of $\log(\|\cY\|+ \|\theta^*_\lambda\|\|\cX\|)$ satisfying that with probability $1-\delta$
$$\|\cX\|(\|\cY\|+ \|\cX\|\|\theta^*_\lambda\|):= L  \leq \bar{L}  \leq L (1+\|\cX\|^2/(\tilde{\lambda}_{\min } + \lambda))^{C_2} $$ 


Recall that $\theta^*$ is the least square solution $\argmin_{\theta} F(\theta)$. The optimization error obeys that
\begin{align*}
F(\tilde{\theta}) -  F(\theta^*)  =& F(\tilde{\theta}) + \lambda \|\tilde{\theta}\|^2 -  F(\theta^*_\lambda) - \lambda \|\theta^*_\lambda\|^2 \\
&+ F(\theta^*_\lambda) + \lambda \|\theta^*_\lambda\|^2  -  F(\theta^*)  -\lambda \|\theta^*\|^2  + \lambda \|\theta^*\|^2 \\
\leq& \frac{[d/2 + \sqrt{d\log(1/\varrho)} + \log(1/\varrho)]}{\gamma}   + 0  + \lambda \|\theta^*\|^2\\
=& \frac{C_1(\epsilon,\delta,\varrho,d)  \|\cX\|^2(\|\cY\|+ \|\cX\|\|\theta^*_\lambda\|)^2 (1+\frac{\|\cX\|^2}{ \lambda + \tilde{\lambda}_{\min } })^{2 C_2(\epsilon,\delta)}}{ \lambda + \tilde{\lambda}_{\min }  }  +  \lambda \|\theta^*\|^2\\
\leq& \frac{C_1(\epsilon,\delta,\varrho,d)  \|\cX\|^4(\|\cY\|/\|\cX\|+ \|\theta^*\|)^2 (1+\frac{\|\cX\|^2}{ \lambda + \tilde{\lambda}_{\min } })^{2 C_2(\epsilon,\delta)}}{ \lambda + \tilde{\lambda}_{\min }  }  +  \lambda (\|\cY\|/\|\cX\|+\|\theta^*\|)^2
\end{align*}
The first inequality uses Lemma~\ref{lem:prediction2est_conversion}, Lemma~\ref{lem:estimation-err} and used the optimality of $\theta^*_\lambda$ for the regularized objective.
In the last line, we used the monotonicity of ridge regression which says that for all $\lambda >0$, we have
$
\|\theta^*_\lambda\| \leq \|\theta^*\|.
$
We also $\|\theta^*\|$ into $\|\cY\|/\|\cX\| + \|\theta^*\|$.

This relaxation allows us to choose $\lambda$ that is independent to $\|\theta^*\|$, by minimizing the second part of the upper bound
\begin{equation}\label{eq:opterr_upperbound}
F(\tilde{\theta}) -  F(\theta^*)  \leq   (\|\cY\|/\|\cX\| + \|\theta^*\|)^2  \left[  \frac{C_1(\epsilon,\delta,\varrho,d) \|\cX\|^4(1+\frac{\|\cX\|^2}{ \lambda + \tilde{\lambda}_{\min } })^{2 C_2(\epsilon,\delta)}}{ \lambda + \tilde{\lambda}_{\min }  } + \lambda \right].
\end{equation}
The only thing that we need to privately release to choose $\lambda$ is $\tilde{\lambda}_{\min }$ which has a fixed global sensitivity. 
The detailed procedure was summarized in Algorithm~\ref{alg:self-tuning-AdaOPS}.

\subsection{Proof of Theorem~\ref{thm:adaops}}
We will now formally prove the theoretical guarantees of \AdaOPS{} that we stated in Theorem~\ref{thm:adaops}.
\begin{proof}[Proof of Theorem~\ref{thm:adaops} (i)]
	First of all, $\tilde{\lambda}_{\min }$ has global sensitivity $\|\cX\|^2$ by Weyl's lemma (Lemma~\ref{lem:weyl}). Using Gaussian mechanism, $\tilde{\lambda}_{\min }$ is an $(\epsilon/4,\delta/3)$-DP release.  Now, by the standard Gaussian tail bound, under the same probability event that holds with probability $1-\delta/3$, we know that 
	$$\left| \tilde{\lambda}_{\min }  - \lambda_{\min }\right|  \leq  \frac{\log(6/\delta)}{\epsilon/4}.$$
	Condition on this event, and apply Lemma~\ref{lem:lipschitz_GS}, we know $\log(\|\cY\|+\|\cX\|\|\theta^*_\lambda\|)$ has (conditional) global sensitivity of $\log(1 + \frac{\|\cX\|^2}{\tilde{\lambda}_{\min }+ \lambda})$.  So for any choice of $\lambda$  (that uses only privately released information), the algorithm release $\log(\|\cY\|+\|\cX\|\|\theta^*_\lambda\|)$ using Gaussian mechanism. Again by Gaussian tail bound, we know that $\Delta$ (in Algorithm~\ref{thm:adaops}) is a high probability upper bound  of $\log(\|\cY\|+\|\cX\|\|\theta^*_\lambda\|)$ and the event is the same as the event of success in this $(\epsilon/4,\delta/3)$-DP. In other word, we have that conditioning on the event with probability $1-2\delta/3$, for any data set $(X,\vct y)$ and any target $(x,y)$.
	$$
	|y -  x^T\theta^*_\lambda|  \leq \|\cY\|+\|\cX\|\|\theta^*_\lambda\|  \leq e^\Delta  \leq (\|\cY\|+\|\cX\|\|\theta^*_\lambda\|)  (1 + \frac{\|\cX\|^2}{\lambda + \tilde{\lambda}_{\min }})^{\frac{\log(6/\delta)}{\epsilon/4}}.
	$$
	Denote $L:=\|\cX\|e^\Delta$ and choose $\gamma$ according to the Step 7 of the algorithm block. Condition on the high probability event on the eigenvalue and and local Lipschitz constant, the results in Theorem~\ref{thm:pdp_analysis} (and the remark underneath it) implies that $\tilde{\theta}$ is an $(\epsilon/2,\delta/3)$-pDP for all pairs of adjacent data sets that differs by adding or removing one data point. This by definition implies that we have converted the pDP guarantee to $(\epsilon/2,\delta/3)$-DP.
	
	Finally, by the adaptive simple composition of the three DP mechanisms, we conclude that \AdaOPS{} is $(\epsilon,\delta)$-DP.
\end{proof}

We now move on the  analyze the utility of \AdaOPS{} in terms of optimization error and estimation error (Statement (ii) and (iii) in Theorem~\ref{thm:adaops}). 

\begin{proof}[Proof of Theorem~\ref{thm:adaops} (ii)]
	The key idea of the proof is to establish that the way $\lambda$ is chosen according to \eqref{eq:adachoice_of_lambda} is effectively minimizing an upper bound of the optimization error, according to our derivation to that leads to \eqref{eq:opterr_upperbound}.
	To start, note that $C_1$ and $C_2$ in \eqref{eq:adachoice_of_lambda} are both positive for any parameters that are passed into them, so the first term in the upper bound is monotonically decreasing in $\lambda$ and the second term is monotonically increasing in $\lambda$ so there is a unique $\lambda$ minimizing the criterion.
	
	Let $\tilde{C}_1$ be an arbitrary upper bound of $C_1$. Also recall that $t_{\min} = \max\left\{\frac{\|\cX\|^2(1+\log(6/\delta))}{2\epsilon}-\tilde{\lambda}_{\min},0\right\}$
	\begin{align}
	\min_{t\geq t_{\min}}\frac{C_1\|\cX\|^4(1+ \frac{\|\cX\|^2}{t +\tilde{\lambda}_{\min }})^{2C_2}}{t + \tilde{\lambda}_{\min }} + t  &\leq \min_{t \geq \max\{t_{\min},\|\cX\|^2C_2 - \tilde{\lambda}_{\min }\} } \frac{\tilde{C}_1\|\cX\|^4(1+ \frac{1}{\lambda +\tilde{\lambda}_{\min }})^{2C_2}}{t + \tilde{\lambda}_{\min }} + t \nonumber\\
	&\leq \min_{t\geq \max\{t_{\min },\|\cX\|^2C_2 - \tilde{\lambda}_{\min }\} }\frac{e^2\tilde{C}_1 \|\cX\|^4}{t + \tilde{\lambda}_{\min }} +  t\nonumber\\
	&\lesssim \min\left\{\frac{e^2\tilde{C}_1\|\cX\|^4}{\tilde{\lambda}_{\min }}, e\|\cX\|^2\sqrt{\tilde{C}_1}\right\}.\label{eq:opterr_upperbound2}
	\end{align}
	The first inequality follows because we are increasing $C_1$ and also restricting the domain we optimize over, the second inequality uses that $(1+1/x)^x\leq e$ for all $x>0$. 
	
	The third inequality is true when
	$$
	e\sqrt{\tilde{C}_1} \geq \max\left\{C_2,\frac{1+\log(6/\delta)}{2\epsilon}\right\}  .
	$$
	To check this, discuss two cases of $\tilde{\lambda}$. In the first case, if $\tilde{\lambda}_{\min } \leq e\sqrt{\tilde{C_1}}\|\cX\|^2$ we can take the feasible  $t =  e\|\cX\|^2 \sqrt{\tilde{C_1}}$  and obtain the second expression. In the second case, we know that taking $t=0$ is feasible, which gives rise to the first bound.
	
	Take $\tilde{C}_1   =  C_1 \vee  e^{-2} C_2^2  \vee e^{-2} \frac{(1+\log(6/\delta))^2}{4\epsilon^2}$.
	
	%
	
	We now look closer into parameters in $C_1$ and $C_2$ of \eqref{eq:adachoice_of_lambda}.
	
	First of all, since $\tilde{\epsilon}< \epsilon/2$, 
	$$\bar{\epsilon}  =  \epsilon/2  -  \frac{\epsilon^2}{8}\left[\frac{1}{\log(6/\delta)} +  \frac{1+\log(6/\delta)}{\log(6/\delta)}\right]  \leq \epsilon/2  -  \frac{\tilde{\epsilon}^2}{2}\left[\frac{1}{\log(6/\delta)} +  \frac{1+\log(6/\delta)}{\log(6/\delta)}\right] \leq \tilde{\epsilon}.$$
	This implies that $\bar{\epsilon}<\epsilon/2$. On the other hand, by the assumption that $\epsilon<2\log(6/\delta) / (1+\log(6/\delta))$, 
	$\bar{\epsilon}  > \epsilon/2  - \epsilon/4  = \epsilon/4$.
	
	It follows that 
	\begin{align*}
	\tilde{C}_1 &=   \max\{C_1(\bar{\epsilon},\delta/3,\varrho,d), C_2(\epsilon/4,\delta/3)^2 e^{-2} ,  e^{-2} \frac{(1+\log(6/\delta))^2}{4\epsilon^2}  \}  \\
	&=  \max\left\{\frac{[d/2 + \sqrt{d \log(1/\varrho)} + \log{1/\varrho}]  \log(6/\delta)\} }{\bar{\epsilon}^2},  \frac{16\log(6/\delta)^2}{e^2 \epsilon^2},  \frac{(1+\log(6/\delta))^2}{4\epsilon^2\epsilon^2}\right\}\\
	&\leq \frac{16 \log(6/\delta) \max\{[d/2 + \sqrt{d \log(1/\varrho)} + \log{1/\varrho}],\frac{\log(6/\delta)}{e^2} \}  }{\epsilon^2} \\
	&= O\left(\frac{\max\{d,\log(1/\varrho)\} \log(6/\delta) +\log^2(6/\delta)}{\epsilon^2}\right).
	\end{align*}	
	
	Apply the above upper bound to \eqref{eq:opterr_upperbound2} and then to \eqref{eq:opterr_upperbound}, we get that with probability $1-2\delta/3 - \varrho$, then simultaneously,
	\begin{align*}
	F(\tilde{\theta}) - F(\theta^*)  &\leq O\left((\|\cY\|^2 + \|\cX\|^2\|\theta^*\|^2)\|\cX\|^2   \frac{(d+\log(1/\varrho))\log(6/\delta) +\log^2(6/\delta)}{\epsilon^2\tilde{\lambda}_{\min }} \right),\\
	F(\tilde{\theta}) - F(\theta^*)  &\leq O\left((\|\cY\|^2 + \|\cX\|^2\|\theta^*\|^2)  \frac{\sqrt{(d+\log(1/\varrho))\log(6/\delta) + \log^2(6/\delta)}}{\epsilon}\right).
	\end{align*}
	The first bound is the smaller of the two only when $ \tilde{\lambda}_{\min }   \geq  \sqrt{\tilde{C}_1} $ and in such cases 
	$$
	\frac{1}{\lambda_{\min }}  \leq \frac{1}{\tilde{\lambda}_{\min } - \|\cX\|^2\frac{\log(6/\delta)}{\epsilon/4} }  = O(\frac{1}{ \|\cX\|^2\sqrt{\tilde{C}_1}}).
	$$
	The proof is complete by converting $\lambda_{\min }$ into the alternative normalized representation with $\alpha =  \frac{d\lambda_{\min }}{n \|\cX\|^2}$.
\end{proof}

It remains to prove the results about the estimation error under the linear Gaussian model. 

\begin{proof}[Proof of Theorem~\ref{thm:adaops} (iii)]
	Note that $\lambda_{\min }  \geq \alpha n/d$.  As we've seen in the proof of Statement (ii), with probability $1-\delta/3$, 
	$$\tilde{\lambda}_{\min } \geq \lambda_{\min } - \|\cX\|^2\frac{4\log(6/\delta)}{\epsilon}.$$ 
	Let this be event $E$. Event $E$ ensures that (under the stated assumption on $\epsilon,\delta$) we have $\frac{4\log(6/\delta)}{\epsilon}<  \frac{\alpha n}{2d}$, this implies that $\tilde{\lambda}_{\min } > 0.5 \lambda_{\min }$ and in addition, the automatic choice of $\lambda$ using \eqref{eq:adachoice_of_lambda} will be $\lambda = 0$.
	
	$$
	\E(\tilde{\theta}| X,E)  =  \E\left[ \E\big[ \tilde{\theta} \big| X,E, \vct y, L, \tilde{\lambda} \big]  \middle| X,E\right] \explain{=}{\text{Lemma~\ref{lem:estimation-err}}}  \E\left[ \theta^*_\lambda  \middle| X,E\right] \explain{=}{\lambda=0\text{ under }E}\E\left[ \theta^*  \middle| X,E\right] = \theta_0
	$$
	\begin{align}
	\Cov(\tilde{\theta} | X,E)  &=  \E\left[   \Cov\big[ \tilde{\theta}  \big|  X,E, \vct y, L, \tilde{\lambda}  \big]\middle| X,E \right] +  \Cov\left[ \E\big[ \tilde{\theta} \big|  X,E, \vct y, L, \tilde{\lambda} \big] \middle|  X,E\right] \nonumber\\
	&=  \E\left[   \frac{L^2 \log(6/\delta)}{\tilde{\lambda}_{\min }\tilde{\epsilon}^2}   \middle| X,E \right]  (X^TX)^{-1}   +  \Cov\left[ \theta^* \middle|  X,E\right] \nonumber\\
	&\prec   \frac{\E\left[ L^2 \middle| X,E \right] \log(6/\delta)}{(\lambda_{\min }/2)   (\epsilon/4)^2}    (X^TX)^{-1} +  \sigma^2(X^TX)^{-1}\label{eq:cov_calc}
	\end{align}
	
	Plugging in the consequence of $E$ into the expression of $L$ in the Algorithm~\ref{alg:self-tuning-AdaOPS}, we have
	\begin{align*}
	L  &= \|\cX\|e^\Delta =  e^{\log(\|\cY\|+\|\cX\|\|\theta^* \|) + \frac{\log(1+\|\cX\|^2/\tilde{\lambda}_{\min })}{\epsilon/4}\sqrt{\log(6/\delta)} Z +  \frac{\log(1+\|\cX\|^2/\tilde{\lambda}_{\min })}{\epsilon/4}\log(6/\delta)}\\
	&=   \|\cX\|(\|\cY\|+\|\cX\|\|\theta^* \|) (1+\|\cX\|^2/\tilde{\lambda}_{\min })^{\frac{\log(6/\delta)}{\epsilon/4}}  (1+\|\cX\|^2/\tilde{\lambda}_{\min })^{\frac{\sqrt{\log(6/\delta)}}{\epsilon/4} Z} \\
	&\leq  \|\cX\|(\|\cY\|+\|\cX\|\|\theta^* \|)    e^{ \frac{\|\cX\|^2\log(6/\delta)}{\lambda_{\min }\epsilon/8}}  e^{\frac{\|\cX\|^2\sqrt{\log(6/\delta)}}{\lambda_{\min }\epsilon/8} Z}\\
	&\leq \|\cX\|(\|\cY\|+\|\cX\|\|\theta^* \|)  e  \cdot e^{Z} 
	\end{align*}
	Take expectation of $L^2$, use the independence of $Z$ and $\vct y$  we get
	\begin{align*}
	\E[L^2 | X,E]  &=   \E[\|\cX\|^2(\|\cY\|+\|\cX\|\|\theta^* \|)^2 ] e^2  \E[e^{2Z}] \\
	&=  e^4   (2\|\cX\|^2\|\cY\|^2 +  2\|\cX\|^4\E[\|\theta^*\|^2| X])\\
	& = 2e^4\|\cX\|^2(\|\cY\|^2 + \|\cX\|^2\|\theta_0\|^2 + \sigma^2\|\cX\|^2\tr[(X^TX)^{-1}])
	\end{align*}
	where in the second line, we used the formula for the moment generating function of standard normal distribution, and then in the third line, we used that $\theta^* \sim \cN(\theta_0, \sigma^2(X^TX)^{-1})$.
	Substitute into \eqref{eq:cov_calc} and replace $\lambda_{\min }$ with $n\|\cX\|^2\alpha/d$ we get 
	$$
	\Cov(\tilde{\theta} | X,E)  \prec \left(1 +  \frac{64e^4\Big[\|\cY\|^2 + \|\cX\|^2\|\theta_0\|^2 + \sigma^2\|\cX\|^2\tr[(X^TX)^{-1}]\Big] d\log(6/\delta) }{\alpha \sigma^2  n \epsilon^2}\right) \sigma^2 (X^TX)^{-1}
	$$
	as claimed.
\end{proof}

\section{Utility lemmas}

\begin{lemma}[Gaussian tail bound]\label{lem:gaussian-tail}
	Let $X\sim \cN(0,1)$. Then 
	$$
	\P(|X| >\epsilon) \leq \frac{2e^{-\epsilon^2/2}}{\epsilon}.
	$$
\end{lemma}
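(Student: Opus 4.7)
The plan is to use the standard Mills-ratio style argument, which is short and elementary. First I would use the symmetry of the standard normal to reduce the two-sided tail to a one-sided tail: $\P(|X|>\epsilon) = 2\P(X>\epsilon)$, valid for $\epsilon>0$ (and the stated inequality is trivial otherwise, since the RHS exceeds $2$ when $\epsilon$ is small).

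Next, I would write the one-sided tail as an integral against the Gaussian density and bound the integrand by exploiting that in the region of integration $x\geq \epsilon$, so $1 \leq x/\epsilon$. Concretely,
\begin{equation*}
\P(X>\epsilon) \;=\; \int_{\epsilon}^{\infty} \frac{1}{\sqrt{2\pi}}\, e^{-x^2/2}\,dx
\;\leq\; \frac{1}{\epsilon}\int_{\epsilon}^{\infty} \frac{x}{\sqrt{2\pi}}\, e^{-x^2/2}\,dx
\;=\; \frac{1}{\epsilon\sqrt{2\pi}}\, e^{-\epsilon^2/2},
\end{equation*}
where the last equality follows because $-e^{-x^2/2}$ is an antiderivative of $x\,e^{-x^2/2}$. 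Multiplying by $2$ and then weakening the constant using $2/\sqrt{2\pi} < 2$ (actually $2/\sqrt{2\pi}\approx 0.798$) yields the claimed bound $\P(|X|>\epsilon) \leq 2 e^{-\epsilon^2/2}/\epsilon$.

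There is essentially no obstacle here; the only choice to make is which upper bound on the Gaussian tail to cite. I deliberately avoid the tighter bound $e^{-\epsilon^2/2}/(\epsilon\sqrt{2\pi})$ in favor of the looser but cleaner constant $2/\epsilon$, since that is exactly the form stated in the lemma and is what the rest of the paper's calculations are calibrated against. The whole argument is a few lines and requires no new machinery beyond the change-of-variable trick $1 \leq x/\epsilon$ on the integration domain.
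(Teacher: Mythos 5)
Your proof is correct: the symmetry reduction followed by the Mills-ratio trick ($1 \leq x/\epsilon$ on the domain of integration) is the standard argument, and the constant bookkeeping ($2/\sqrt{2\pi} < 2$) checks out. The paper states this lemma as a known utility fact without supplying a proof, so there is nothing to compare against; your few-line derivation is exactly what one would write in its place.
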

\begin{lemma}[$\chi^2$-distribution tail bound {\citep[Lemma 1]{laurent2000adaptive}}]
	Let $X$ follows a $\chi^2$ distribution with $k$ degree of freedom, then for all $t>0$, we have
	\begin{align*}
	&\P(X -k \geq 2\sqrt{k t} + 2t) \leq e^{-t},\\
	&\P(k-X > 2\sqrt{kt})  \leq e^{-t}.
	\end{align*}
\end{lemma}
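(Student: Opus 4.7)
The plan is to prove both tail bounds by the standard Chernoff / moment generating function (MGF) method, then substitute the specific deviation values and verify the resulting inequalities reduce to elementary one-variable inequalities.

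First I would write $X = \sum_{i=1}^k Z_i^2$ for iid $Z_i \sim \cN(0,1)$ and compute the MGF: for $\lambda < 1/2$,
\[
\E[e^{\lambda X}] = \prod_{i=1}^k \E[e^{\lambda Z_i^2}] = (1-2\lambda)^{-k/2}.
\]
For the upper tail, Markov's inequality applied to $e^{\lambda X}$ gives, for any $\lambda \in [0,1/2)$,
\[
\P(X \geq k+s) \leq \exp\bigl(-\lambda(k+s) - \tfrac{k}{2}\log(1-2\lambda)\bigr).
\]
Differentiating the exponent in $\lambda$ and setting it to zero yields the optimizer $\lambda^* = s/(2(s+k))$, which substituted back gives the clean bound
\[
\P(X \geq k+s) \leq \exp\!\left(-\tfrac{s}{2} + \tfrac{k}{2}\log(1+s/k)\right).
\]

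Next I would plug in $s = 2\sqrt{kt} + 2t$. The target is to show the exponent is $\leq -t$, which after rearranging is equivalent to $\log(1+s/k) \leq 2\sqrt{t/k}$. Writing $a := \sqrt{t/k}$ so that $s/k = 2a + 2a^2$, this reduces to the inequality $1 + 2a + 2a^2 \leq e^{2a}$ for $a \geq 0$, which is immediate from the Taylor expansion of $e^{2a}$ (all omitted terms are nonnegative). This gives the first claim.

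For the lower tail I would apply the same Chernoff approach to $e^{-\lambda X}$ with $\lambda > 0$: $\E[e^{-\lambda X}] = (1+2\lambda)^{-k/2}$ and Markov gives, for any $\lambda > 0$,
\[
\P(X \leq k-s) \leq \exp\!\bigl(\lambda(k-s) - \tfrac{k}{2}\log(1+2\lambda)\bigr).
\]
Optimizing at $\lambda^* = s/(2(k-s))$ (valid provided $s<k$; the case $s\geq k$ makes the event $\{X\leq 0\}$, which has probability zero, so the bound holds trivially) yields
\[
\P(X \leq k-s) \leq \exp\!\left(\tfrac{s}{2} + \tfrac{k}{2}\log(1-s/k)\right).
\]
With $s = 2\sqrt{kt}$, the elementary inequality $\log(1-x) \leq -x - x^2/2$ for $x \in [0,1)$ shows the exponent is at most $-s^2/(4k) = -t$, completing the second bound.

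The proof is essentially a textbook Chernoff argument, so I do not expect a serious obstacle; the only mildly delicate step is verifying the two scalar inequalities $1 + 2a + 2a^2 \leq e^{2a}$ and $\log(1-x) \leq -x - x^2/2$, both of which follow by comparing Taylor series term-by-term and are responsible for the exact asymmetric form $2\sqrt{kt} + 2t$ versus $2\sqrt{kt}$ in the two tails.
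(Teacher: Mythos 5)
Your proof is correct. The paper itself gives no proof of this lemma --- it is quoted verbatim as Lemma~1 of \citet{laurent2000adaptive} --- and your Chernoff/MGF derivation is essentially the argument in that cited source, specialized from their weighted sum $\sum_i a_i(Z_i^2-1)$ to the unweighted case. All the steps check out: the optimizers $\lambda^* = s/(2(k+s))$ and $\lambda^* = s/(2(k-s))$ are in the valid ranges, the reduction of the upper tail to $1+2a+2a^2 \le e^{2a}$ and of the lower tail to $\log(1-x)\le -x - x^2/2$ are both exactly right, and the degenerate case $s\ge k$ in the lower tail is handled properly.
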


\begin{lemma}[Tail bound to $(\epsilon,\delta)$-DP conversion]\label{lem:tailbound2DP}
	Let $\epsilon(\theta) = \log(\frac{p(\theta)}{p'(\theta)})$ where $p$ and $p'$ are densities of $\theta$. If 
	$$
	\P(|\epsilon(\theta)| > t) \leq \delta
	$$
	then for any measurable set $\cS$
	$$
	\P_p(\theta \in \cS)  \leq e^t \P_{p'}(\theta \in \cS) + \delta.
	$$
	and 
	$$\P_{p'}(\theta \in \cS)  \leq e^t \P_{p}(\theta \in \cS) + \delta$$
\end{lemma}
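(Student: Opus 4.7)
The plan is to invoke the standard \textbf{good-event decomposition} used throughout the differential privacy literature. Let $E := \{\theta : |\epsilon(\theta)| \le t\}$ be the event that the privacy loss random variable is within the budget. By hypothesis, $E^c$ has probability at most $\delta$. On $E$, the log density ratio is bounded by $\pm t$, so pointwise on $E$ we have both $p(\theta) \le e^{t} p'(\theta)$ and $p'(\theta) \le e^{t} p(\theta)$.

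For the first inequality I would split
$$
\P_p(\theta \in \cS) \;=\; \P_p(\theta \in \cS \cap E) \;+\; \P_p(\theta \in \cS \cap E^c),
$$
bound the second term trivially by $\P_p(E^c) \le \delta$, and for the first term integrate the pointwise density-ratio bound:
$$
\int_{\cS \cap E} p(\theta)\,d\theta \;\le\; e^t \int_{\cS \cap E} p'(\theta)\,d\theta \;\le\; e^t\, \P_{p'}(\theta \in \cS).
$$
Adding the two contributions yields the claimed $(t,\delta)$-DP-style bound. The second inequality follows by a symmetric argument with the roles of $p$ and $p'$ swapped, which is legitimate because the event $\{|\epsilon(\theta)|>t\}$ is invariant under $p \leftrightarrow p'$ (the privacy loss merely flips sign).

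The lone subtlety—and essentially the only thing to pause on—is interpreting the hypothesis $\P(|\epsilon(\theta)|>t)\le\delta$: the decomposition for the forward direction requires the tail to hold under $p$, while the reverse direction requires it under $p'$. In the way this lemma is actually invoked in the proofs of Theorems~\ref{thm:adaops} and~\ref{thm:adaSuffP} (Gaussian mechanism releases of $\tilde\lambda_{\min}$, $\Delta$, etc.), both one-sided tail bounds follow from the same Gaussian concentration estimate (Lemma~\ref{lem:gaussian-tail}), so no real obstacle arises; I would simply state the hypothesis as holding under both measures. Beyond this bookkeeping, the proof is a one-line integration and I do not anticipate any genuine technical difficulty.
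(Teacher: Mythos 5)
Your proof is correct and follows essentially the same route as the paper's: decompose over the good event where the privacy loss is bounded, integrate the pointwise density-ratio bound on that event, bound the bad event by $\delta$, and obtain the reverse inequality by the sign-flip symmetry of $\epsilon(\theta)$. Your remark about which measure the tail hypothesis is taken under is a fair point of bookkeeping that the paper glosses over, but it does not change the argument.
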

\begin{proof}
	Since $\log(\frac{p(\theta)}{p'(\theta)}) = - \log(\frac{p'(\theta)}{p(\theta)})$ and the tail bound is two-sided. It suffices for us to prove just one direction. Let $E$ be the event that $|\epsilon(\theta)| > t$.
	$$
	\P_p(\theta \in \cS)  =  \P_p(\theta \in \cS \cup E^c) + \P_p(\theta \in \cS \cup E) \leq  \P_{p'}(\theta \in \cS \cup E) e^t + \P_p(\theta\in E)\leq e^t\P_{p'}(\theta \in \cS) + \delta.
	$$
\end{proof}

\begin{lemma}[Weyl's eigenvalue bound {\citep[Theorem~1]{stewart1998perturbation}}]\label{lem:weyl}
	Let $X,Y,E\in \R^{m\times n}$, w.l.o.g., $m\geq n$. If $X-Y = E$, then $|\sigma_i(X)-\sigma_i(Y)|\leq \|E\| $ for all $i=1,...,n$.
\end{lemma}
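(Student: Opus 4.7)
The statement to prove is Weyl's singular value perturbation inequality, which is a purely linear-algebraic fact with no probabilistic or privacy content. The natural route is through the Courant–Fischer min–max characterization of singular values, which converts the global statement about $\sigma_i$ into a pointwise statement about $\|Xv\|$ that can be handled with the triangle inequality.

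Specifically, I would first recall that for $X \in \R^{m \times n}$ with $m \geq n$, the singular values (ordered $\sigma_1 \geq \sigma_2 \geq \cdots \geq \sigma_n$) admit the variational representation
\begin{equation*}
\sigma_i(X) \;=\; \max_{\substack{S \subseteq \R^n \\ \dim S = i}} \;\min_{\substack{v \in S \\ \|v\|=1}} \|X v\|,
\end{equation*}
which follows from the corresponding Courant–Fischer characterization of the eigenvalues of $X^T X$ together with the identity $\sigma_i(X)^2 = \lambda_i(X^T X)$. (I would cite a standard linear algebra reference here rather than reprove Courant–Fischer.)

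Next, the pointwise step: for every unit vector $v \in \R^n$, the triangle inequality and the definition of the spectral norm give
\begin{equation*}
\|X v\| \;=\; \|Y v + E v\| \;\leq\; \|Y v\| + \|E v\| \;\leq\; \|Y v\| + \|E\|.
\end{equation*}
Fixing an arbitrary $i$-dimensional subspace $S$ and taking the minimum over unit $v \in S$ on both sides, the additive constant $\|E\|$ passes through the min, so
\begin{equation*}
\min_{\substack{v \in S \\ \|v\|=1}} \|X v\| \;\leq\; \min_{\substack{v \in S \\ \|v\|=1}} \|Y v\| \;+\; \|E\|.
\end{equation*}
Taking the max over $i$-dimensional subspaces $S$ and invoking the min–max formula yields $\sigma_i(X) \leq \sigma_i(Y) + \|E\|$.

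Finally, I would close the argument by symmetry: since $Y - X = -E$ and $\|-E\| = \|E\|$, the identical argument with the roles of $X$ and $Y$ swapped gives $\sigma_i(Y) \leq \sigma_i(X) + \|E\|$. Combining the two inequalities yields $|\sigma_i(X) - \sigma_i(Y)| \leq \|E\|$ for every $i = 1, \dots, n$. There is essentially no obstacle here — the only subtlety is making sure the min–max formulation is invoked with the correct subspace dimension so that the inequality passes through cleanly in both directions; everything else is a one-line application of the triangle inequality.
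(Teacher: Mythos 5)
Your proof is correct: the Courant--Fischer min--max characterization of singular values combined with the triangle inequality is the standard argument for Weyl's perturbation bound, and each step (the pointwise bound, passing the constant through the min, the max over subspaces, and the symmetrization) is sound. The paper itself offers no proof of this lemma --- it is simply cited from \citet{stewart1998perturbation} --- so your argument supplies a complete and standard derivation of the quoted result.
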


\begin{lemma}[Stability of smooth learning problems, Lemma~14 of \citep{wang2017per}]\label{lem:smoothlearning}
	Assume $\ell$ and $r$ be differentiable and their gradients be absolute continuous.
	Let $\hat{\theta}$ be a stationary point of $\sum_i \ell(\theta,z_i)  + r(\theta)$, $\hat{\theta}'$ be a stationary point $\sum_i \ell(\theta,z_i) + \ell(\theta,z) + r(\theta)$ and in addition, let $\eta_t = t\hat{\theta} +(1-t)\hat{\theta}'$ denotes the interpolation of $\hat{\theta}$ and $\hat{\theta}'$.
	Then the following identity holds:
	\begin{align*}
	\hat{\theta}-\hat{\theta}' &= \left[\int_0^1\left(\sum_i\nabla^2\ell(\eta_t,z_i) +\nabla^2\ell(\eta_t,z) + \nabla^2r(\eta_t)\right) dt \right]^{-1} \nabla \ell(\hat{\theta},z)\\
	&= -\left[\int_0^1\left(\sum_i\nabla^2\ell(\eta_t,z_i) + \nabla^2r(\eta_t)\right) dt \right]^{-1} \nabla \ell(\hat{\theta}',z).
	\end{align*}
\end{lemma}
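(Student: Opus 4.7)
The lemma is an instance of the fundamental theorem of calculus applied to two closely related gradient vector fields, so my plan is entirely based on FTC along the straight-line path $\eta_t$.

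Define the aggregate gradient fields
\[
G(\theta) := \sum_i \nabla \ell(\theta,z_i) + \nabla r(\theta),
\qquad
G'(\theta) := G(\theta) + \nabla \ell(\theta, z).
\]
By the stationarity assumptions, $G(\hat\theta)=0$ and $G'(\hat\theta')=0$; the latter rewrites as $G(\hat\theta') = -\nabla \ell(\hat\theta',z)$, and subtracting gives $G'(\hat\theta) = G(\hat\theta)+\nabla\ell(\hat\theta,z)=\nabla\ell(\hat\theta,z)$. These two algebraic identities are the only consequences of stationarity I will use.

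Next, since $\ell$ and $r$ have absolutely continuous gradients, the maps $t\mapsto G(\eta_t)$ and $t\mapsto G'(\eta_t)$ are absolutely continuous on $[0,1]$, and the chain rule gives $\tfrac{d}{dt} G(\eta_t) = \nabla G(\eta_t)\,\dot\eta_t$ almost everywhere, where $\dot\eta_t = \hat\theta-\hat\theta'$ is constant in $t$. Applying the fundamental theorem of calculus to $G'$ yields
\[
G'(\hat\theta) - G'(\hat\theta') \;=\; \int_0^1 \nabla G'(\eta_t)\, dt\;(\hat\theta - \hat\theta'),
\]
and plugging in $G'(\hat\theta)=\nabla\ell(\hat\theta,z)$ and $G'(\hat\theta')=0$, together with $\nabla G'(\eta_t) = \sum_i \nabla^2\ell(\eta_t,z_i)+\nabla^2\ell(\eta_t,z)+\nabla^2 r(\eta_t)$, gives the first identity upon multiplying by the inverse of the integrated Hessian. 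An identical FTC argument applied to $G$, using $G(\hat\theta)=0$ and $G(\hat\theta')=-\nabla\ell(\hat\theta',z)$, yields the second identity (the minus sign in the lemma statement matches the sign of $G(\hat\theta')$).

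The only subtle points are (i) justifying the integral form of the chain rule, which is exactly what absolute continuity of the gradients buys us; and (ii) the implicit invertibility of the integrated Hessian matrix, which I take as given by the fact that the lemma writes the inverse explicitly (in applications this follows from strong convexity of $r$ or of $\sum_i \ell(\cdot, z_i)$ along the segment $\eta_t$). Neither step is a genuine obstacle: the whole proof is one line of FTC applied to two gradient fields and then read off through the stationarity relations, so I would expect the argument to fit comfortably in a short paragraph.
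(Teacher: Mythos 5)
Your overall strategy is the right one and, as far as this paper is concerned, the only one available for comparison: the lemma is imported verbatim from \citet{wang2017per} and no proof is given here, so the Newton--Leibniz argument along the segment $\eta_t$ is exactly what one expects. Your derivation of the first identity is correct: $G'(\eta_1)-G'(\eta_0)=G'(\hat\theta)-G'(\hat\theta')=\nabla\ell(\hat\theta,z)-0$, and the integrated Jacobian of $G'$ is the full Hessian, giving $\hat\theta-\hat\theta'=\bigl[\int_0^1(\sum_i\nabla^2\ell(\eta_t,z_i)+\nabla^2\ell(\eta_t,z)+\nabla^2 r(\eta_t))\,dt\bigr]^{-1}\nabla\ell(\hat\theta,z)$. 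The appeal to absolute continuity of the gradients to justify the fundamental theorem of calculus for $t\mapsto G(\eta_t)$, and the remark that invertibility of the integrated Hessian is an implicit hypothesis, are both fine.

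There is, however, a genuine sign slip in your treatment of the second identity. Running your own argument carefully: $G(\eta_1)-G(\eta_0)=G(\hat\theta)-G(\hat\theta')=0-(-\nabla\ell(\hat\theta',z))=+\nabla\ell(\hat\theta',z)$, so the FTC gives $\hat\theta-\hat\theta'=+\bigl[\int_0^1(\sum_i\nabla^2\ell(\eta_t,z_i)+\nabla^2 r(\eta_t))\,dt\bigr]^{-1}\nabla\ell(\hat\theta',z)$, with a \emph{plus} sign. Your parenthetical claim that ``the minus sign in the lemma statement matches the sign of $G(\hat\theta')$'' overlooks that $G(\hat\theta')$ enters the difference $G(\hat\theta)-G(\hat\theta')$ with its own minus sign, so the two cancel. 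A one-dimensional sanity check confirms the plus sign: take $\ell(\theta,z)=\tfrac12(\theta-z)^2$ and $r=0$, so $\hat\theta=\bar z$, $\hat\theta'=\frac{n\bar z+z}{n+1}$, $\hat\theta-\hat\theta'=\frac{\bar z-z}{n+1}$, while $\nabla\ell(\hat\theta',z)=\frac{n(\bar z-z)}{n+1}$ and the reduced Hessian is $n$, giving $n^{-1}\cdot\frac{n(\bar z-z)}{n+1}=\hat\theta-\hat\theta'$ with no minus sign. So either the statement as transcribed here carries a sign typo in its second line or the roles of $\hat\theta$ and $\hat\theta'$ differ from the source; in any case you should flag the discrepancy rather than assert agreement. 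Note that the only downstream use in this paper (Lemma~\ref{lem:lipschitz_GS}) takes norms, so the sign is immaterial there, but a proof of the identity as stated cannot silently absorb it.
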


\section{$(\epsilon,\delta)$-DP calibration of \OPS{} for linear regression.}\label{app:epsdelta_OPS}
This appendix describes the details of how we implement the non-adaptive version of \OPS{} as a baseline. 

\OPS{} was proposed as a $\epsilon$-pure-DP mechanism via the use of the exponential mechanism. In this paper, we are working with $(\epsilon,\delta)$-DP  and it is only fair to compare to a version of \OPS{} with $(\epsilon,\delta)$-DP. Such guarantees are studied by \citet[Chapter 5]{mir13} and later by \citet{minami2016differential}, but neither can be straightforwardly and satisfactorily applied to the linear regression problem.

\citet{minami2016differential} requires that the loss function is Lipschitz. Linear regression is not Lipschitz unless we constraint $|\Theta|$ as in Assumption A2 just like for \OBP{} then it becomes Lipschitz. With appropriate choice of $\lambda$ and $\gamma$ and using ideas in Section~\ref{sec:adaptive_choice_of_lambda}. Unfortunately, unlike \OBP{}, \OPS{} is not an optimization based method. Sampling from the posterior distribution subject to the additional constraint requires techniques such as rejection sampling, which we find very costly, and prone to numerical issues.

\citet{mir13} does not require an explicit constraint on the parameter space. Instead, they use a large regularization parameter $\lambda$, so that  with probability $1-\delta$ over the distribution of the \OPS mechanism, the output is not too much larger than Ridge regression solution, which effectively produces a constraint on the domain. Then they apply an exponential mechanism-based argument after conditioning on this high-probability event. See Section~5.4.1 of \citep{mir13} for details. Unfortunately, this approach yields a suboptimal rate under $(\epsilon,\delta)$-DP, which depends linearly in $d$ rather than the optimal $\sqrt{d}$ dependence.

The pDP analysis of linear regression of \citet{wang2017per} suggests that we do not actually need global Lipschitz constant, instead the local Lipschitz constant at $\theta^*_\lambda$ is sufficient for us to obtain differential privacy. For any data set $(X,\mathbf{y})$, we can show that 
\begin{equation}\label{eq:effective_domainbound_from_regularization}
\|\theta^*_\lambda\|  \leq \|(X^TX+\lambda I)^{-1}X^T\|_2\sqrt{n}\| \cY\| \leq \min\left\{ \frac{\sqrt{n}\|\cY\| }{\sqrt{2\lambda}}, \frac{n\|\cX\|\|\cY\|}{ \lambda} \right\}.
\end{equation}
The local Lipschitz constant at $\theta^*_\lambda$ is therefore smaller than
$$
\frac{\sqrt{n}\|\cX\|^2\|\cY\| }{\sqrt{2\lambda}} + \|\cX\|\|\cY\| =  \|\cX\|\|\cY\| (  \frac{\sqrt{n}\|\cX\|}{\sqrt{2\lambda}}+1) =: L(\lambda).
$$

Apply Remark~\ref{rmk:pDP_eps} with the above Lipschitz constant upper bound and also take  $\lambda_{\min }=0$, we get a pDP guarantee for any pairs of adjacent data sets, which by definition, upgrades into a  DP guarantee.  In other word, we can achieve a prescribed $(\epsilon,\delta)$-DP by choosing choose any $(\lambda,\gamma)$ such that they obey 
$$
\epsilon \leq \sqrt{\frac{\gamma L(\lambda)^2\log(2/\delta)}{\lambda}} +  \frac{\gamma L(\lambda)^2}{ 2(\lambda + \|\cX\|^2)} + \frac{(1+ \log(2/\delta))\|\cX\|^2}{2\lambda}.
$$
There are many ways of doing it.  If we fix $\lambda >  \frac{(1+ \log(2/\delta))\|\cX\|^2}{2\epsilon}$, then we can calibrate $\gamma$ to achieve any $(\epsilon,\delta)$-DP guarantee for any $(\epsilon,\delta)$. If we instead fix $\gamma$ so that we have a comfortable level of variance, then similarly we can calibrate $\lambda$ to achieve any $(\epsilon,\delta)$-DP guarantee for any $(\epsilon,\delta)$.

Specifically, we will experiment with the following three approaches:
\begin{enumerate}
	\item \OPS{}-Diffuse:  Take $\lambda = \frac{(1+ \log(2/\delta))\|\cX\|^2}{\epsilon}$ and calibrate $\gamma$.
	\item \OPS{}-Concentrated:   Take $\gamma = 1$ and calibrate $\lambda$.
	\item \OPS{}-Balanced: Choose $\lambda$ to minimize the prediction accuracy upper bound that we have from Section~\ref{sec:adaptive_choice_of_lambda}
	\begin{equation}\label{eq:ops_suboptimality_bound}
	F(\hat(\theta)) - F(\theta^*) \leq \frac{C_1(\min(\epsilon,\sqrt{\epsilon}),\delta,\varrho,d) L(\lambda)^2 }{\lambda}   + \lambda B^2
	\end{equation}
	subject to $\lambda \geq \frac{(1+ \log(2/\delta))\|\cX\|^2}{\epsilon}$ \footnote{Note that this upper bound is obtained for  $\gamma = \frac{\lambda\min(\epsilon^2,\epsilon)}{4\log(2/\delta) L(\lambda)^2}$ and assuming $\|\theta^*\|\leq B$.}.  Once $\lambda$ is chosen, we then calculate $\gamma$ properly using the ``diffused'' approach given this $\lambda$.
	In \eqref{eq:ops_suboptimality_bound}, the function $C_1$ is defined in \eqref{eq:C1} and we choose $\varrho=0.05$.  $B$ is a more delicate hyperparameter since there isn't an upper bound of $\|theta^*\|$ that holds uniformly for all data sets. We will be using $B=1$ as we are being optimistic.
	\item \OPS{}-Conservative: An alternative approach that avoids choosing $B$ is to use $\|\theta^*_\lambda\|  \leq  \|\theta^*\| $ so that the minimizer of the upper bound $\lambda$ does not depend on $\|\theta^*\|$.
\end{enumerate}

\begin{figure}
	\centering
	\includegraphics[width=0.4\textwidth]{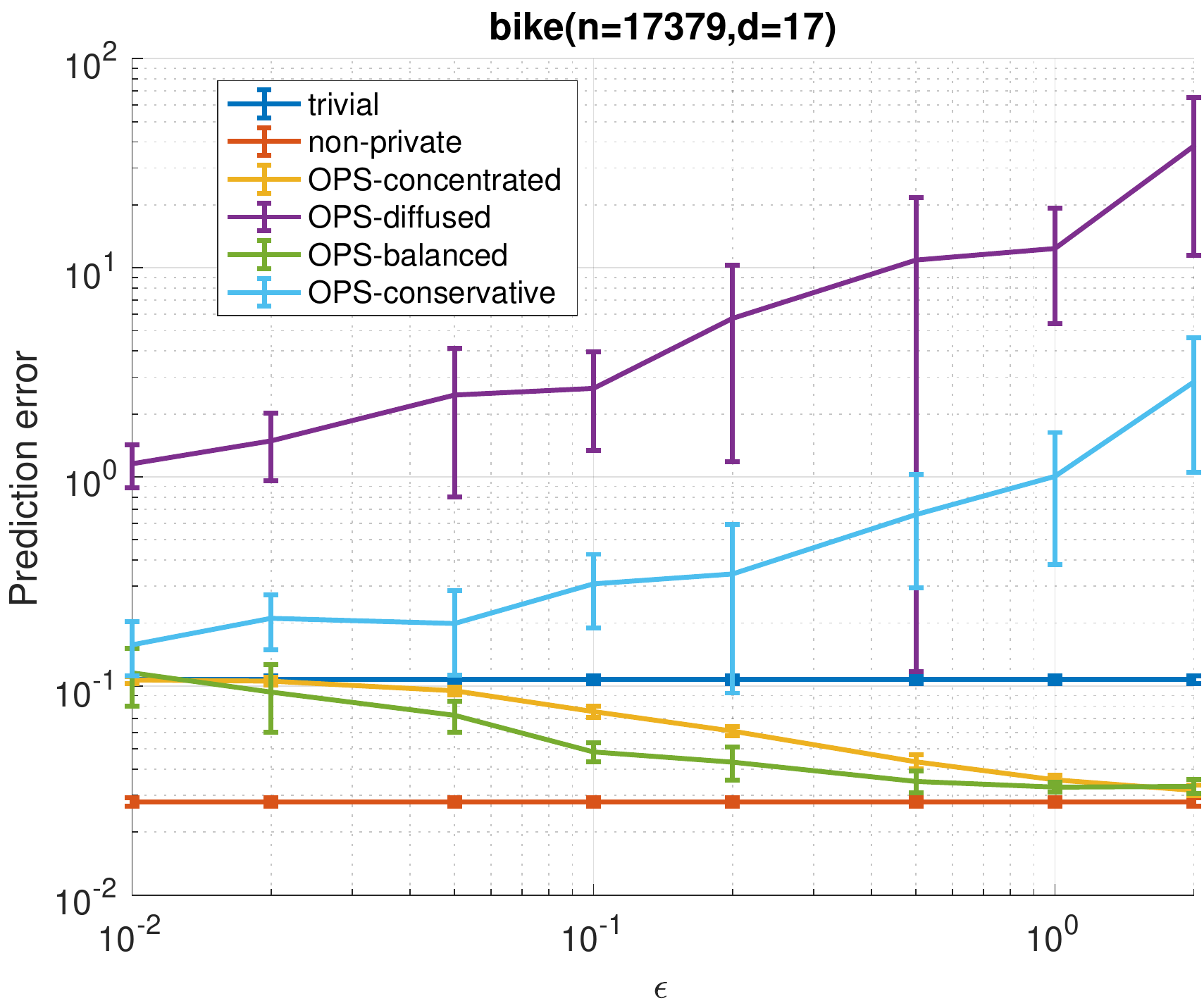}
	\includegraphics[width=0.4\textwidth]{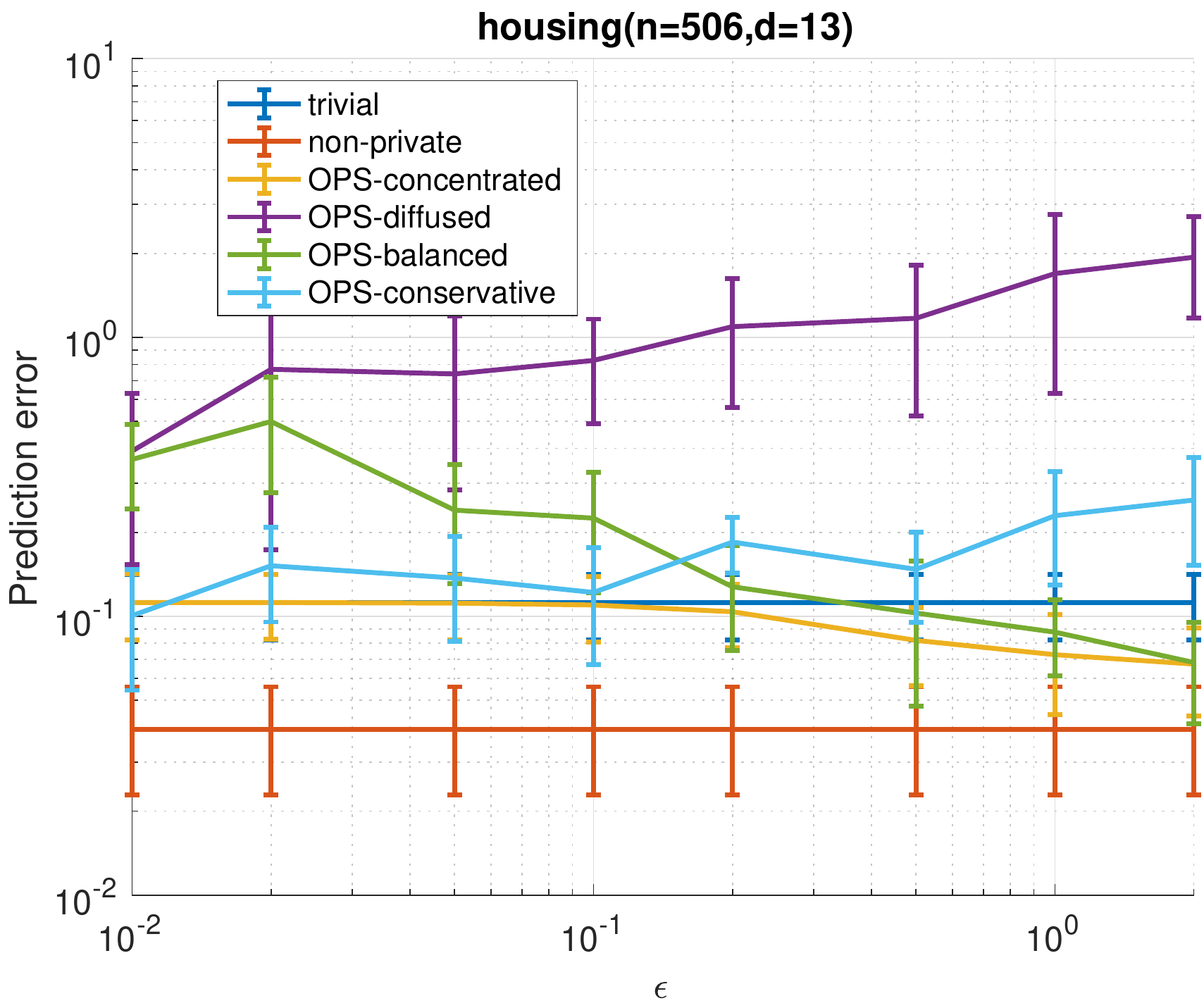}
	\caption{Comparison of the utility of $(\epsilon,\delta)$-DP \OPS{} using four different ways of calibrating noise to privacy. The results show that how we calibrate noise plays an important role in the utility of the algorithms. }\label{fig:compare_OPS}
\end{figure}

In our experiments, we find that no single approach dominates the others. In general, we find that the ``concentrated'' approach and the ``balanced'' approach with $B=1$ work significantly better than the ``diffused'' and the ``conservative'' approaches (see Figure~\ref{fig:compare_OPS} for details).
The experimental results with legend label ``\OPS{}'' in Figure~\ref{fig:UCI_dataset}, Table~\ref{tab:uci_eps=0.1} and Table~\ref{tab:uci_eps=1} are for the ``balanced' approach.

Below, we provide an error bound of the balanced approach. 
\begin{proposition}\label{prop:eps_delta_OPS_unbounded}
	Assume $\|\theta^*\| \asymp B$ on this specific data set. Then \OPS{} in unbounded domain with 
	$
	\gamma = \frac{\epsilon^2 \lambda}{ 4 \log(2/\delta) L(\lambda)^2 }
	$
	and $
	\lambda =\left(  \frac{C_1(\epsilon,\delta,\varrho,d)  \|\cX\|^4\|\cY\|^2 n}{B^2} \right)^{1/3}.
	$
	obeys $(\epsilon,\delta)$-DP, and also 
	$$
	F(\hat{\theta})  - F(\theta^*) \leq  O\left(  \frac{ d^{1/3}n^{1/3}\log(2/\delta)^{1/3} \|\cX\|^{4/3}\|\cY\|^{2/3} \|\theta^*\|^{4/3}}{\epsilon^{2/3}}  \right).  
	$$
\end{proposition}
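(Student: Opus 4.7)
My plan is to reduce everything to the per-instance DP analysis of Remark~\ref{rmk:pDP_eps} together with the utility decomposition already developed for \AdaOPS{} in Section~\ref{sec:adaptive_choice_of_lambda}, and then verify that the prescribed $(\lambda,\gamma)$ simultaneously satisfies the privacy budget and balances bias against variance.

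For the privacy half, the key observation is that here $\lambda$ depends only on public quantities ($\|\cX\|,\|\cY\|,n,d,B,\epsilon,\delta$), so no privacy budget is consumed on any data-dependent release. For every data set I would substitute the universal bound on $\|\theta^*_\lambda\|$ from \eqref{eq:effective_domainbound_from_regularization} into the local Lipschitz constant at $\theta^*_\lambda$, yielding the uniform ceiling $L \le L(\lambda) = \|\cX\|\|\cY\|(\sqrt{n}\|\cX\|/\sqrt{2\lambda}+1)$. Plugging $L(\lambda)$, $\lambda_{\min}=0$, and the prescribed $\gamma = \epsilon^2\lambda/(4\log(2/\delta)L(\lambda)^2)$ into the three summands of Remark~\ref{rmk:pDP_eps} makes the first summand exactly $\epsilon/2$, the second a lower-order $O(\epsilon^2/\log(2/\delta))$ term, and the third bounded by $\epsilon/2$ provided $\lambda \ge (1+\log(2/\delta))\|\cX\|^2/\epsilon$. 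Because these inequalities hold uniformly across all adjacent data sets, the pDP guarantee upgrades directly to $(\epsilon,\delta)$-DP with no extra union-bound losses.

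For the utility half, the same derivation that produced \eqref{eq:opterr_upperbound} in Section~\ref{sec:adaptive_choice_of_lambda} gives, with probability at least $1-\varrho$,
\begin{equation*}
F(\tilde\theta) - F(\theta^*) \;\le\; \frac{C_1(\epsilon,\delta,\varrho,d)\,L(\lambda)^2}{\lambda} + \lambda\,\|\theta^*\|^2,
\end{equation*}
where I use the monotonicity $\|\theta^*_\lambda\| \le \|\theta^*\|$ and the hypothesis $\|\theta^*\|\asymp B$. In the relevant regime $\lambda \ll n\|\cX\|^2$, the dominant part of $L(\lambda)^2$ is $n\|\cX\|^4\|\cY\|^2/\lambda$, so the variance term scales as $C_1 n\|\cX\|^4\|\cY\|^2/\lambda^2$. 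Balancing that against the bias term $\lambda B^2$ by setting derivatives equal recovers exactly the prescribed cube-root $\lambda\asymp(C_1 n\|\cX\|^4\|\cY\|^2/B^2)^{1/3}$, and back-substituting together with $C_1 \asymp d\log(2/\delta)/\epsilon^2$ produces the stated $n^{1/3}d^{1/3}\epsilon^{-2/3}$ rate.

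The main obstacle is purely bookkeeping around regimes. I need to verify that the chosen $\lambda$ both (a) meets the lower bound $\lambda \ge (1+\log(2/\delta))\|\cX\|^2/\epsilon$ required by the privacy calibration, and (b) lies in the window $\lambda \ll n\|\cX\|^2$ where the $n\|\cX\|^4\|\cY\|^2/\lambda$ branch of $L(\lambda)^2$ dominates the constant $\|\cX\|^2\|\cY\|^2$ branch. Both are mild conditions that hold in the nontrivial regime; in the degenerate corners one simply replaces $\lambda$ by $\max\{\lambda,(1+\log(2/\delta))\|\cX\|^2/\epsilon\}$, which only tightens the privacy side and worsens the utility by a universal constant factor, preserving the claimed $\tilde O(\cdot)$ rate.
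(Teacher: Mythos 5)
Your proposal is correct and follows essentially the same route as the paper: verify privacy by plugging $L(\lambda)$, $\lambda_{\min}=0$, and the prescribed $\gamma$ into the pDP bound of Remark~\ref{rmk:pDP_eps}, then substitute into \eqref{eq:ops_suboptimality_bound} and check that the cube-root choice of $\lambda$ balances the variance term $C_1 L(\lambda)^2/\lambda \asymp C_1 n\|\cX\|^4\|\cY\|^2/\lambda^2$ against the bias term $\lambda B^2$. Your explicit bookkeeping of the two regime conditions (the floor $\lambda \ge (1+\log(2/\delta))\|\cX\|^2/\epsilon$ and the window where the $n\|\cX\|^4\|\cY\|^2/\lambda$ branch of $L(\lambda)^2$ dominates) is slightly more careful than the paper's one-line proof, but it is the same argument.
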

\begin{proof}
	The result follows straightforwardly by substituting the $L(\lambda)$ and our choice of $\lambda,\gamma$ into \eqref{eq:ops_suboptimality_bound} checking that $\lambda$'s choice balances the two terms.
\end{proof}

\end{document}